\documentclass{article}


\usepackage[final]{neurips_2025}




\usepackage[utf8]{inputenc} 
\usepackage[T1]{fontenc}    
\usepackage{hyperref}       
\usepackage{url}            
\usepackage{booktabs}       
\usepackage{amsfonts}       
\usepackage{nicefrac}       
\usepackage{microtype}      
\usepackage{xcolor}         
\usepackage{times}
\usepackage{amsfonts,amsmath,amssymb,amsthm}
\usepackage{graphicx} 
\usepackage{cleveref}
\usepackage{dsfont}
\usepackage{mathtools}
\usepackage{tikz}

\setlength {\marginparwidth }{2cm}
\usepackage{todonotes}
\usepackage{natbib}
\usepackage{xfrac} 
\usepackage{enumitem}
\usepackage{subcaption}
\usepackage{xargs}
\usetikzlibrary{fit}
\usepackage{wrapfig}

\newlist{inlinelist}{enumerate*}{1}
\setlist[inlinelist]{label=(\arabic*), itemjoin={{, }}, itemjoin*={{, and }}}

\newtheorem{theorem}{Theorem}[section]
\newtheorem{proposition}[theorem]{Proposition}
\newtheorem{corollary}[theorem]{Corollary}

\newtheorem{lemma}[theorem]{Lemma}

\newtheorem{remark}[theorem]{Remark}

\newtheorem{definition}[theorem]{Definition}

\newcommand{\spn}{\operatorname{span}}
\newcommand{\supp}{\operatorname{supp}}
\newcommand{\argmax}{\operatorname{argmax}}
\newcommand{\cpwl}{\operatorname{CPWL}}
\newcommand{\cone}{\operatorname{cone}}

\newcommand{\R}{\mathbb{R}}
\newcommand{\N}{\mathbb{N}}

\newcommand{\FB}[1]{\mathcal{V}_{\mathcal{B}_{#1}}}
\newcommand{\B}{\mathcal{B}}
\newcommand{\La}{\mathcal{L}}
\newcommand{\Sf}[1]{\mathcal{F}_{#1}}
\newcommand{\PP}{\mathcal{P}}
\newcommand{\HC}[1]{\mathcal{C}_{#1}}
\newcommand{\ind}{\mathds{1}}

\newcommand{\maxout}[2]{\mathcal{M}_{#1}^{\mathbf{#2}}}
\newcommand{\mtwo}[2]{\mathcal{M}^2_{#1}({#2})}
\newcommand{\asum}[2]{\inner{\al{#1}}{#2}}
\newcommand{\inner}[2]{\langle {#1},{#2} \rangle}
\newcommand{\al}[1]{\alpha_{#1}}

\title{Depth-Bounds for Neural Networks \\via the Braid Arrangement}

%

\author{%
  Moritz Grillo \\
 Max Planck Institute for Mathematics in the Sciences \\
  \texttt{moritz.grillo@mis.mpg.de} \\
 \And
Christoph Hertrich \\
University of Technology Nuremberg \\
 \texttt{christoph.hertrich@utn.de} \\
 \And
Georg Loho \\
Freie Universit\"at Berlin \\
University of Twente \\
\texttt{georg.loho@math.fu-berlin.de}
}

\begin{document}

\maketitle

\begin{abstract}
 We contribute towards resolving the open question of how many hidden layers are required in ReLU networks for exactly representing all continuous and piecewise linear functions on~$\R^d$. 
While the question has been resolved in special cases, the best known lower bound in general is still 2. 
We focus on neural networks that are compatible with certain polyhedral complexes, more precisely with the braid fan.  
For such neural networks, we prove a non-constant lower bound of $\Omega(\log\log d)$ hidden layers required to exactly represent the maximum of $d$ numbers. Additionally, we provide a combinatorial proof that neural networks satisfying this assumption require three hidden layers to compute the maximum of 5 numbers; this had only been verified with an excessive computation so far.
Finally, we show that a natural generalization of the best known upper bound to maxout networks is not tight, by demonstrating that a rank-3 maxout layer followed by a rank-2 maxout layer is sufficient to represent the maximum of 7 numbers. 
\end{abstract}

\section{Introduction}

Among the various types of neural networks, ReLU networks have become particularly prominent \citep{pmlr-v15-glorot11a,GoodBengCour16}.
For a thorough theoretical understanding of such neural networks, it is important to analyze which classes of functions we can represent with which depth. Classical universal approximation theorems \citep{Cybenko1989ApproximationBS,Hornik1991ApproximationCO} ensure that just one hidden layer can approximate any continuous function on a bounded domain with arbitrary precision. However, establishing an analogous result for \emph{exact} representations remains an open question and is the subject of ongoing research \citep{arora2018understanding,hertrich2023towards,haase2023lower,valerdi2024minimal,averkov2025on}.

While in practical settings approximate representations are often sufficient, studying the exact piecewise linear structure of neural network representations enabled deep connections between neural networks and fields like tropical and polyhedral geometry \citep{huchette2023deep}. These connections, in turn, are important for algorithmic tasks like neural network training \citep{arora2018understanding,goel21training,khalife2022algorithms,froese2022computational,Froese23training,BertschingerHJM23} and verification \citep{li19symbolic,katz17reluplex,froese2024complexitydecidinginjectivitysurjectivity,froese2025parameterized,stargalla2025computational}, including understanding the computational complexity of the respective tasks.

\citet{arora2018understanding} initiate the study of exact representations by showing that the class of functions exactly representable by ReLU networks is the class of continuous piecewise linear (CPWL) functions. Specifically, they demonstrate that every CPWL function defined on $\R^d$ can be represented by a ReLU network with $ \lceil \log_2(d + 1) \rceil$ hidden layers. This result is based on \citet{wang2005generalization}, who reduce the representation of a general CPWL function to the representation of maxima of $d+1$ affine terms. By computing pairwise maxima in each layer, such a maximum of $d+1$ terms can be computed with logarithmic depth overall in the manner of a binary tree. Very recently, \citet{bakaev2025betterneuralnetworkexpressivity} improved the upper bound by proving that every CPWL function can be represented with $\lceil \log_3(d-1) \rceil+1$ hidden layers. Their results refute the conjecture of \citet{hertrich2023towards} that $ \lceil \log_2(d + 1) \rceil$ hidden layers are indeed necessary to compute all CPWL functions.

Based on the result by \citet{wang2005generalization}, \citet{hertrich2023towards} deduced that it suffices to determine the minimum depth representation of the maximum function. 
 While it is easy to show that $\max\{0,x_1,x_2\}$ cannot be represented with one hidden layer \citep{mukherjee2017lower}, \citet{bakaev2025betterneuralnetworkexpressivity} showed that two hidden layers are sufficient to represent $\max\{0,x_1,x_2,x_3,x_4\}$.
 However, it remains open if there exists a CPWL function on $\R^d$ that really needs logarithmic many hidden layers to be represented. In particular, it is already open whether there is a function that needs more than two hidden layers to be represented.

Understanding depth lower bounds is important for clarifying the potential advantages of architectural choices. In particular, proving depth lower bounds on computing the max function helps formally explain why elements like max-pooling layers are powerful and cannot be easily replaced by shallow stacks of standard ReLU layers, regardless of their width. 

In order to identify tractable special cases to prove lower bounds on the necessary number of hidden layers to compute the max function, two approaches have been pursued so far. 
The first restricts the possible \emph{breakpoints} of all neurons in a network computing $x \mapsto \max\{0,x_1,\ldots,x_d\}$. 
A breakpoint of a neuron is an input for which the function computed by the neuron is non-differentiable.
A neural network is called \emph{$\B^0_d$-conforming} if breakpoints only appear where the ordering of some pair of coordinates changes (i.e., all breakpoints lie on hyperplanes $x_i=x_j$ or $x_i=0$). While $\B^0_d$-conforming networks can compute the max function with $\lceil \log_2(d+1) \rceil$  hidden layers, \citet{hertrich2023towards} show that $2$ hidden layers are insufficient to compute the function $\max \{0,x_1,x_2,x_3,x_4\}$, using a computational proof via a mixed integer programming formulation of the problem. The second approach restricts the weights of the network. \citet{averkov2025on} show that, if all weights are $N$-ary fractions, the max function can only be represented by neural network with depth $\Omega(\frac{\log d}{\log \log N})$ by extending an approach of \citet{haase2023lower}. Furthermore, \citet{bakaev2025depth} proved lower bounds for the case when some or all weights are restricted to be nonnegative. To the best of our knowledge, the two approaches of restricting either the breakpoints or the weights are incomparable.

\paragraph{Our contributions}
We follow the approach from \citet{hertrich2023towards} and prove lower bounds on $\B^0_d$-conforming networks. On one hand, following \citet{hertrich2023towards}, we believe that understanding $\B^0_d$-conforming networks might also shed light on the expressivity of general networks, for example, by studying different underlying fans instead of focusing on the braid fan as an intermediate step. On the other hand, $\B^0_d$-conforming also appears in \citet{brandenburg2024decompositionpolyhedrapiecewiselinear} and \citet{froese2024complexitydecidinginjectivitysurjectivity} due to the connection to submodular functions and graphs.

In \Cref{sec:loglog} we prove for $d=2^{2^\ell-1}$ that the function $x \mapsto \max\{0,x_1,\ldots,x_d\}$ is not representable with a $\B^0_d$-conforming ReLU network with $\ell$ hidden layers. This means that depth $\Omega(\log\log d)$ is necessary for computing all CPWL functions, yielding the first conditional non-constant lower bound without restricting the weights of the neural networks.


To prove our results, the first observation is that the set of functions that are representable by a $\B_d^0$-conforming network forms a finite-dimensional vector space (\Cref{prop:iso_cpwl_to_setfunction}).
While one would like to identify subspaces of this vector space representable with a certain number of layers, taking the maximum of two functions does not behave well with the structure of linear subspaces.
To remedy this, we identify a suitable sequence of subspaces $\Sf{\La}(k)$ for $k=1,2,\dots$ that can be controlled through an inductive construction.
These auxiliary subspaces arise from the correspondence between $\B_d^0$-conforming functions and set functions.
This allows us to employ the combinatorial structure of the collection of all subsets of a finite ground set.
This is also reflected in the structure of the breakpoints of $\B_d^0$-conforming functions. 
Hence, we are able to show that applying a rank-$2$-maxout-layer to functions in $\Sf{\La}(k)$ yields a function in $\Sf{\La}(k^2+k)$. 
Iterating this argument yields the desired bounds. 

In \Cref{sec:d=4}, we focus on the case $d=4$. 
We provide a combinatorial proof of the result of \citet{hertrich2023towards} showing that the function $x \mapsto \max\{0,x_1,x_2,x_3,x_4\}$ is not representable by a $\B^0_d$-conforming ReLU network with two hidden layers.

Finally, in \Cref{sec:3-2}, we study maxout networks as natural generalization of ReLU networks. 
A straightforward generalization of the upper bound of \citet{arora2018understanding} shows that $\B_d^0$-conforming maxout network with ranks $r_i$ in the hidden layers $i=1,\dots,\ell$ can compute the maximum of $\prod_{i=1}^\ell r_i$ numbers. 
We prove that this upper bound is not tight: a maxout network with one rank-$3$ layer and one rank-$2$ layer can compute the maximum of 7 numbers, that is, the function $x \mapsto \max \{0,x_1,\ldots,x_6\}$.  

\paragraph{Further Related Work}
In light of the prominent role of the max function for neural network expressivity, \citet{safran2024many} studied efficient neural network approximations of the max function.

In an extensive line of research, tradeoffs between depth and size of neural networks have been explored, demonstrating that deep networks can be exponentially more compact than shallow ones \citep{montufar14number,telgarsky16benefits,eldan16depth,arora2018understanding,ergen24topology}. While most of these works also involve lower bounds on the depth, they are usually proven under assumptions on the width. In contrast, we aim towards proving lower bounds on the depth for unrestricted width.
The opposite perspective, namely studying bounds on the size of neural networks irrespective of the depth, has been subject to some research using methods from combinatorial optimization \citep{hertrich2023provably,hertrich2024relu,hertrich2024neural}.

One of the crucial techniques in expressivity questions lies in connections to tropical geometry via Newton polytopes of functions computed by neural networks. This was initiated by \citet{zhang2018tropical}, see also \citet{maragos2021tropical}, and subsequently used to understand decision boundaries, bounds on the depth, size, or number of linear pieces, and approximation capabilities \citep{montufar2022sharp,misiakos2022neural,haase2023lower,brandenburg2024real,valerdi2024minimal,hertrich2024neural}.

\section{Preliminaries}
In \Cref{sec:notation}, the reader can find an overview of the notation used in the paper and in \Cref{sec:proofs} detailed proofs of all the statements.

\paragraph{Polyhedra}
We review basic definitions from polyhedral geometry; see \cite{ilp_theory, ziegler_lecturespolytopes} for more details.

A \emph{polyhedron} $P$ is the intersection of finitely many closed halfspaces and a \emph{polytope} is a bounded polyhedron. A hyperplane \emph{supports} $P$ if it bounds a closed halfspace containing $P$, and
any intersection of $P$ with such a supporting hyperplane yields a \emph{face} $F$ of $P$. 
A face is a \emph{proper face} if $F \subsetneq P$ and $F \neq \emptyset$ and inclusion-maximal proper faces are referred to as \emph{facets}.
A \emph{(polyhedral) cone} $C \subseteq \R^n$ is a polyhedron such that $\lambda u + \mu v \in C$ for every $u,v \in C$ and  $\lambda, \mu \in \R_{\geq 0}$.
A cone is \emph{pointed} if it does not contain a line.  A cone $C$ is \emph{simplicial}, if there are linearly independent vectors $v_1,\ldots,v_k\in \R^n$ such that $C=\{\sum_{i=1}^k\lambda_i v_i \mid \lambda_i \geq 0\}$. 

A \emph{polyhedral complex} $\mathcal P$ is a finite collection of polyhedra such that (i) $\emptyset \in \mathcal P$, (ii) if $P \in \mathcal P$ then all faces of $P$ are in $\mathcal P$, and (iii) if $P, P' \in \mathcal P$, then $P \cap P'$ is a face both of $P$ and $P'$.
A polyhedral \emph{fan} is a polyhedral complex where all polyhedra are cones.
The \emph{lineality space} of a polyhedron $P$ is defined as $\{v \in \R^d \mid x +v \in P$ for all $x \in P\}$. The lineality space of a polyhedral complex $\PP$ is the lineality space of one (and therefore all) $P \in \PP$. 
\paragraph{Neural networks and CPWL functions}
A continuous function $f\colon\R^n\to\R$ is called \emph{continuous and piecewise linear} (CPWL), if there exists a polyhedral complex $\PP$ such that the restriction of $f$ to each full-dimensional polyhedron $P\in\mathcal{P}^n$ is an affine function. If this condition is satisfied, we say that $f$ and $\PP$ are \emph{compatible} with each other.
We denote the set of all CPWL functions from $\R^d$ to $\R$ by $\cpwl_d$.

For a number of hidden layers $\ell\geq0$, a \emph{neural network} with \emph{rectified linear unit} (ReLU) activation is defined by a sequence of $\ell+1$ affine maps $T_i:\R^{n_{i-1}}\to\R^{n_i}$, $i\in[\ell+1]$. We assume that $n_0=d$ and $n_{\ell+1}=1$. If $\sigma$ denotes the function that computes the ReLU function $x\mapsto \max\{x,0\}$ in each component, the neural network is said to compute the CPWL function $f\colon\R^d\to \R$ given by $f=T_{\ell+1}\circ\sigma\circ T_\ell\circ\sigma\circ\dots\circ\sigma \circ T_1$. 

A \emph{rank-$r$-maxout layer} is defined by $r$ affine maps $T^{(q)} \colon \R^d \to \R^n$ for ${q \in [r]}$ and computes the function $x \mapsto (\max\{(T^{(1)}x)_j, \ldots, (T^{(r)}x)_j\})_{j\in[n]}$.
For a number of hidden layers $\ell \geq 0$ and a rank vector $\mathbf{r}=(r_1,\ldots,r_\ell) \in \N^\ell$, a \emph{rank-$\mathbf{r}$-maxout neural network} is defined by maxout layers $f_i:\R^{n_{i-1}}\to\R^{n_i}$ of rank $r_i$ for $i \in [\ell]$ respectively and an affine transformation $T_{out}\colon \R^{n_\ell} \to \R$. The rank-$\mathbf{r}$-maxout neural network computes the function $f\colon\R^d\to \R$ given by $f = T_{out} \circ f_\ell \circ \cdots \circ f_1$. 
Let $\maxout{d}{r}$ be the set of functions representable by a rank-$\mathbf{r}$-maxout neural network with input dimension $d$. 
 Moreover, let $\mtwo{d}{\ell}$ be the set of functions representable with  networks with $\ell$ rank-$2$-maxout layers.

\paragraph{The braid arrangement and set functions}
\begin{definition}
    The \emph{braid arrangement} in $\R^d$ is the hyperplane arrangement consisting of the $\binom{d}{2}$ hyperplanes $x_i=x_j$, with $1\leq i<j \leq d$.
   The \emph{braid fan} $\B_d$ is the polyhedral fan induced by the braid arrangement.
\end{definition} 
Sometimes we will also refer to the fan given by the $\binom{d+1}{2}$ hyperplanes $x_i=x_j$ and $x_i=0$ for $1\leq i<j \leq d$, which we denote by $\B_d^0$. 
 
We summarize the properties of the braid fan that are relevant for this work. For more details see \citet{stanley07arrangement}.
The $k$-dimensional cones of $\B_d$ are given by \[\{\cone(\ind_{S_1}, \ldots, \ind_{S_k}) + \spn(\ind_{[d]})\mid \emptyset \subsetneq S_1 \subsetneq S_2 \subsetneq \cdots \subsetneq S_k \subsetneq [d]\},\]
where $\ind_S=\sum_{i\in S} e_i$.
The braid fan has $\spn(\ind_{[d]})$ as lineality space.  
Dividing out the lineality space of $\B_d$ yields $\B_{d-1}^0$. 
See \Cref{fig:braid_arrangement} for an illustration of $\B_d^0$.

Using the specific structure of the cones of $\B_d$ in terms of subsets of $[d]$
 allows to relate the vector space $\FB{d}$ of CPWL functions compatible with the braid fan $\B_d$ with the vector space of set functions $\Sf{d} \coloneqq  \R^{2^{[d]}}$: restricting to the values on $\{\ind_S\}_{S\subseteq [d]}$ yields a vector space isomorphism  $\Phi \colon \FB{d} \to \Sf{d}$ whose inverse map is given by interpolating the values on $\{\ind_S\}_{S\subseteq [d]}$ to the interior of the cones of the braid fan. Detailed proofs of all statements can be found in \Cref{sec:proofs}.
\begin{proposition}
\label{prop:iso_cpwl_to_setfunction}
    The linear map $\Phi \colon \FB{d} \to \Sf{d}$  given by $F(S)\coloneqq \Phi(f)(S) = f(\ind_S)$ is an isomorphism.
\end{proposition}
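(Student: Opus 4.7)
The plan is to prove injectivity and surjectivity separately, using the explicit description of the maximal cones of $\B_d$ as Weyl chambers indexed by permutations $\sigma$ of $[d]$. A maximal cone $C_\sigma$ corresponds to a chain $\emptyset = S_0 \subsetneq S_1 \subsetneq \cdots \subsetneq S_{d-1} \subsetneq S_d = [d]$ with $|S_i| = i$, and the $d+1$ points $\ind_{S_0}, \ind_{S_1}, \ldots, \ind_{S_d}$ are affinely independent in $\R^d$: the successive differences $\ind_{S_i} - \ind_{S_{i-1}}$ are the $d$ distinct standard basis vectors in the order prescribed by $\sigma$. This linear-algebraic fact is the engine of the whole argument.

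For injectivity, I would assume $\Phi(f) = 0$, i.e.\ $f(\ind_S) = 0$ for every $S \subseteq [d]$. On each maximal cone $C_\sigma$ the function $f$ is affine and vanishes at the $d+1$ affinely independent points $\ind_{S_0}, \ldots, \ind_{S_d}$, hence $f \equiv 0$ on $C_\sigma$. Since the maximal cones cover $\R^d$, $f$ is identically zero.

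For surjectivity, given $F \in \Sf{d}$, I would construct $f$ piecewise: on each maximal cone $C_\sigma$ with chain $\emptyset = S_0 \subsetneq \cdots \subsetneq S_d = [d]$, define $f|_{C_\sigma}$ to be the unique affine function satisfying $f(\ind_{S_i}) = F(S_i)$ for $i = 0, \ldots, d$, which exists by the affine independence above. Any such $f$ automatically satisfies $\Phi(f) = F$, so what remains is to verify that these local definitions glue into a single continuous, piecewise affine function compatible with $\B_d$.

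The main obstacle is this consistency check on intersections. Two maximal cones $C_\sigma$ and $C_{\sigma'}$ share a facet precisely when $\sigma'$ is obtained from $\sigma$ by swapping two consecutive entries in one-line notation, in which case the two chains agree at every index except one, say index $i$. The shared facet is then affinely spanned by the common points $\{\ind_{S_j} : j \neq i\}$, and both local affine extensions are forced to take the same values $F(S_j)$ at these $d$ points, so they coincide on the facet. Since the graph on permutations of $[d]$ under adjacent transpositions is connected and any lower-dimensional intersection of maximal cones is contained in such a shared facet, pairwise agreement across facets propagates to a globally well-defined continuous function $f \in \FB{d}$ with $\Phi(f) = F$.
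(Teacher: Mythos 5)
Your argument is correct and follows essentially the same route as the paper: describe the maximal cones of $\B_d$ via permutation chains, note that each such cone contains exactly $d+1$ affinely independent indicator vectors which pin down the affine piece, and verify that neighboring pieces agree on shared facets (which correspond to adjacent transpositions). Your extra remark about connectivity of the adjacent-transposition graph is a fine way to justify that facet-wise agreement yields a globally well-defined function, a point the paper treats a bit more tersely.
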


 This implies that $\FB{d}$  has dimension $2^{d}$. Another basis for $\FB{d}$ is given by \mbox{$\{\sigma_M \mid  M \in  2^{[d]} \}$}, where the function $\sigma_M \colon \R^d \to \R$ is defined by $\sigma_M(x) = \max_{i \in M} x_i$ \citep{DANILOV01games,jochemko22permutahedra}.
We have the following strict containment of linear subspaces:
 \[\FB{d}(0) \subsetneq \FB{d}(1) \subsetneq \ldots \subsetneq \FB{d}(d) =\FB{d}\]
 where $\FB{d}(k) \coloneqq \spn\{\sigma_M \mid M\subseteq [d], |M| \leq k\}$. 
In order to describe the linear subspaces $\Phi(\FB{d}(k))$, we now describe the isomorphism $\Phi$ with respect to the basis $\{\sigma_M \mid  M \in  2^{[d]} \}$. 

    Let $X$ and $Y$ be finite sets such that $X \subseteq Y$, then the interval $[X,Y] \coloneqq \{S \subseteq [Y] \mid X \subseteq S \}$ is a \emph{Boolean lattice} with the partial order given by inclusion. 
    The \emph{rank} of $[X,Y]$ is given by $|Y\setminus X|$.
    Sometimes we also write $x_1 \cdots  x_n$ for the set $\{x_1,\ldots,x_n\} \in \La$ and $\overline{x_1 \cdots  x_n}$ for the set $X \cup (Y \setminus \{x_1,\ldots,x_n\}) $.
For a Boolean lattice $\La = [X,Y]$ of rank $n$, the \emph{rank function} $r \colon \La \to [n]_0$ is given by $r(S) = |S|-|X|$  and $r(S)$ is called the \emph{rank} of $S$. Moreover, we define the \emph{levels} of a Boolean lattice by  $\La_i\coloneqq r^{-1}(i)$ and introduce the notation $\La_{\leq i} \coloneqq \bigcup_{j \leq i} \La_{j}$ for the set of elements whose rank is bounded by $i$. For $S,T \in \La$ with $S \subseteq T$, we call $[S,T]$ a \emph{sublattice} of $\La$ and define the vector $\al{S,T} \in \R^\La$ by 
$\al{S,T} \coloneqq \sum_{S \subseteq Q \subseteq T} (-1)^{r(Q)-r(S)} \ind_Q $.
The set $\Sf{\La} \coloneqq  (\R^\La)^*$ of real-valued functions on $\La$ is a vector space, and for any fixed $S,T \in \La$, the map $F \mapsto \inner{\al{S,T}}{F}$  is a linear functional of $\Sf{\La}$. 
Furthermore, let \[\R^\La(k) = \spn \{\al{S,T} \mid S,T \in \La, S \subseteq T \text{ such that } r(T)-r(S) = k+1\}\]
and $\Sf{\La}(k) \coloneqq (\R^\La(k))^\perp = \{F \in \Sf{\La} \mid \asum{S,T}{F}= 0$ for all $\al{S,T} \in \R^\La(k)\}$ be a linear \mbox{subspace of $\Sf{\La}$.
To simplify notation, we also set $\Sf{d}(k) \coloneqq \Sf{2^{[d]}}(k)$}.
\begin{proposition}
\label{prop:mobius_inversion}
The isomorphism $\Phi \colon \FB{d} \to \Sf{d}$ maps the function 
$f=\sum_{M \subseteq [d]} \lambda_M \cdot
\sigma_M$ to the set function defined by 
$
F(S) \coloneqq \sum\limits_{\substack{M \subseteq [d] \\ M \cap S \neq \emptyset}}\lambda_M \cdot \sigma_M .
$
 The inverse $\Phi^{-1} \colon \Sf{d} \to \FB{d}$ of $\Phi$ is given by the M\"obius inversion formula  $F \mapsto \sum_{M \subseteq [d]}-\asum{[d] \setminus M, [d]}{F}$.   In particular, it holds that $\Phi(\FB{d}(k)) = \Sf{d}(k)$ for all $k \leq d$ and $\dim(\Sf{d}(k)) = \dim(\FB{d}(k)) =  \sum_{i=1}^k\binom{d}{i}$. See also \Cref{fig:coefficient} for an illustration of \Cref{prop:mobius_inversion}.
\end{proposition}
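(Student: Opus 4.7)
The plan is to reduce everything to a single combinatorial pairing identity, from which the forward formula, the Möbius inversion, and the subspace correspondence all follow. Since $\Phi$ is linear and $\Phi(\sigma_M)(S) = \sigma_M(\ind_S) = \max_{i\in M}(\ind_S)_i$ equals $1$ exactly when $M \cap S \neq \emptyset$ and $0$ otherwise, the first claim is immediate by expanding $f = \sum_M \lambda_M \sigma_M$. (The ``$\lambda_M \cdot \sigma_M$'' appearing in the displayed formula for $F(S)$ in the statement appears to be a typo for $\lambda_M$.)

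Next, I would establish the general identity
\[
\bigl\langle \alpha_{S,T},\, \Phi(\sigma_N) \bigr\rangle = \begin{cases} -1, & T \setminus S \subseteq N \subseteq [d] \setminus S, \\ \phantom{-}0, & \text{otherwise.} \end{cases}
\]
To prove it, expand the pairing as $\sum_{Q \in [S,T]}(-1)^{r(Q)-r(S)} \Phi(\sigma_N)(Q)$, substitute the formula from the previous step, reparametrize $Q = S \cup R$ with $R \subseteq T \setminus S$, and split on whether $N \cap S$ is empty. If $N \cap S \neq \emptyset$ then every $R$ contributes, so the sign sum $(1-1)^{|T\setminus S|}$ vanishes. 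If $N \cap S = \emptyset$, a secondary inclusion--exclusion reduces the sum to $-\ind[T \setminus S \subseteq N]$. Specializing to $[S,T] = [[d]\setminus M, [d]]$ forces the only contributing $N$ to equal $M$, so $\asum{[d]\setminus M, [d]}{F} = -\lambda_M$ for $f = \sum_M \lambda_M \sigma_M$; this is precisely the Möbius inversion formula.

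The same identity yields the inclusion $\Phi(\FB{d}(k)) \subseteq \Sf{d}(k)$: when $|T\setminus S| = k+1$ and $|N| \leq k$, the constraint $T \setminus S \subseteq N$ cannot hold, so every generator $\sigma_N$ of $\FB{d}(k)$ is annihilated by every defining functional of $\Sf{d}(k)$. Equality and the dimension formula then follow by counting: $\dim \FB{d}(k) = \sum_{i=1}^k \binom{d}{i}$ from the $\sigma_M$ basis with $1 \leq |M| \leq k$, and a matching lower bound on $\dim \R^{2^{[d]}}(k)$ is obtained by exhibiting, for each $N$ with $|N| \geq k+1$, an interval $[S,T]$ of rank exactly $k+1$ satisfying $T \setminus S \subseteq N \subseteq [d] \setminus S$, whose corresponding $\alpha_{S,T}$ pairs nontrivially with $\Phi(\sigma_N)$. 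The principal technical difficulty lies in the two-level inclusion--exclusion underlying the pairing identity; once that is handled cleanly, all remaining claims fall out uniformly, with the inversion formula being the specialization $[S,T] = [[d]\setminus M, [d]]$ and the subspace equality being a rank-versus-rank comparison on the Boolean lattice.
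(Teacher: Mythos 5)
Your proof is correct in all essential respects and takes a genuinely different route from the paper. The paper's proof of the inversion formula introduces the auxiliary set function $G(S) = \sum_{M \subseteq S} \lambda_M$, observes $F(S) = \bigl(\sum_M \lambda_M\bigr) - G([d]\setminus S)$, and applies classical M\"obius inversion for the Boolean lattice to $G$, then manipulates the resulting sum. By contrast, you establish the single, more general pairing identity
\[
\bigl\langle \alpha_{S,T},\, \Phi(\sigma_N) \bigr\rangle \;=\; -\,\ind\!\left[\,T\setminus S \subseteq N \subseteq [d]\setminus S\,\right]
\]
(valid for $S\subsetneq T$), via the two-level inclusion-exclusion you describe, and read off the inversion formula as the specialization $S=[d]\setminus M$, $T=[d]$. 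This is a nicer organizing principle: your lemma \emph{simultaneously} gives the inversion formula and the inclusion $\Phi(\FB{d}(k)) \subseteq \Sf{d}(k)$, whereas the paper's proof in the appendix in fact only covers the inversion formula and leaves the ``in particular'' statement without an explicit argument. Your verification of the identity (splitting on $N\cap S = \emptyset$ vs.\ $N \cap S \neq \emptyset$, then a second inclusion-exclusion over $R \subseteq T\setminus S$) checks out.

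One small under-specification: for the reverse inclusion and the dimension count, ``exhibiting, for each $N$ with $|N|\geq k+1$, an interval $[S,T]$ whose $\alpha_{S,T}$ pairs nontrivially with $\Phi(\sigma_N)$'' is not by itself a lower bound on $\dim \R^{2^{[d]}}(k)$; different $N$ could hit the same $\alpha$, and the pairings could linearly collapse. You need the specific choice $S=[d]\setminus N$ and $T = S \cup T_N$ with $T_N$ a fixed $(k+1)$-subset of $N$: then your identity shows $\langle \alpha_{S,T}, \Phi(\sigma_{N'})\rangle$ is supported only on $N' \subseteq N$ with value $-1$ at $N'=N$, so ordering the columns $\{N' : |N'|>k\}$ by a linear extension of reverse inclusion makes the pairing matrix triangular with nonzero diagonal; independence and hence the dimension bound follow. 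With that choice made explicit your argument is complete. (Also note the quantity in the displayed formula for $F(S)$ should read $\lambda_M$, not $\lambda_M\cdot\sigma_M$, as you observed; this is a typo in the statement itself.)
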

\section{Neural networks conforming with the braid fan}
For a polyhedral complex $\PP$, we call a maxout neural network \emph{$\PP$-conforming}, if the functions at all neurons are compatible with $\PP$. 
By this we mean that for all $i \in [\ell]$ and all coordinates $j$ of the codomain of $f_{i}$,
the function $\pi_j \circ  f_i \circ \ldots \circ f_1$ is compatible with $\PP$,
where $\pi_j$ is the projection on the coordinate $j$. 
We denote by $\maxout{\PP}{r}$ the set of all functions representable by $\PP$-conforming rank-$\mathbf{r}$-maxout networks. 
For the remainder of this article, we only consider the cases 
$\maxout{\B_d}{r}$ and $\maxout{\B^0_d}{r}$
\begin{lemma}
\label{lem:equivalence_assumptions}
   The function $x \mapsto \max \{0,x_1,\ldots,x_{d-1}\}$ can be represented by a $\B_{d-1}^0$-conforming rank-$\mathbf{r}$-maxout network if and only if the function $x \mapsto \max \{x_1,\ldots,x_{d}\}$ can be represented by a $\B_{d}$-conforming rank-$\mathbf{r}$-maxout network.
\end{lemma}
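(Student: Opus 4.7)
The plan is to exploit the geometric correspondence between $\B_d$ and $\B_{d-1}^0$ induced by modding out the lineality space $\spn(\ind_{[d]})$ of the braid fan, realized explicitly via the linear map $\phi : \R^d \to \R^{d-1}$, $\phi(x) = (x_1 - x_d, \ldots, x_{d-1} - x_d)$, and its affine section $\iota : \R^{d-1} \to \R^d$, $\iota(y) = (y_1, \ldots, y_{d-1}, 0)$. The two key structural observations are that the preimage $\phi^{-1}(\B_{d-1}^0)$ coincides with $\B_d$ as a fan, and that $\iota$ maps every full-dimensional cone of $\B_{d-1}^0$ into some full-dimensional cone of $\B_d$. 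Both checks reduce to bookkeeping with defining hyperplanes: $y_i = y_j$ pulls back to $x_i = x_j$ and $y_i = 0$ pulls back to $x_i = x_d$, which together yield precisely the $\binom{d}{2}$ braid hyperplanes; dually, an ordering $x_{\sigma(1)} \leq \ldots \leq x_{\sigma(d)}$ with $\sigma(k) = d$ corresponds under $\iota$ to the $\B_{d-1}^0$-cone in which $0$ sits at position $k$.

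For the forward implication, I take a $\B_{d-1}^0$-conforming rank-$\mathbf{r}$-maxout network $N$ computing $\max\{0, x_1, \ldots, x_{d-1}\}$ and consider the function $N'(x) \coloneqq N(\phi(x)) + x_d$, which by construction equals $\max\{x_1, \ldots, x_d\}$. To turn $N'$ into a genuine rank-$\mathbf{r}$-maxout network with the same rank vector, I absorb $\phi$ into the affine components of the first maxout layer and adjoin one extra \emph{carrier} unit in each hidden layer that forwards the value $x_d$ (obtained as a degenerate rank-$r_i$ maximum of $r_i$ identical copies of a linear map). The output map $T_{out}$ is then adjusted to add the carried value to the original output. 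Conformity follows because each neuron of $N$ is $\B_{d-1}^0$-compatible as a function of $y$, and precomposition with the affine map $\phi$ yields a function compatible with $\phi^{-1}(\B_{d-1}^0) = \B_d$; the carrier units are linear and hence trivially $\B_d$-compatible.

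For the converse, given a $\B_d$-conforming rank-$\mathbf{r}$-maxout network $N$ computing $\max\{x_1, \ldots, x_d\}$, I define $N'(y) \coloneqq N(\iota(y)) = \max\{0, y_1, \ldots, y_{d-1}\}$ and realize it by substituting $x_d = 0$ into the first maxout layer's affine maps, leaving all subsequent layers and the rank vector unchanged. Each neuron $f$ of $N$ is affine on every full-dimensional cone $C_\sigma$ of $\B_d$, and since $\iota$ sends every full-dimensional cone of $\B_{d-1}^0$ into some $C_\sigma$, the composition $f \circ \iota$ remains affine on each such cone, giving $\B_{d-1}^0$-conformity.

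The main point requiring care is that the reduction really preserves the \emph{rank vector} $\mathbf{r}$ and the number of hidden layers $\ell$, rather than quietly inflating one of them. In the forward direction this hinges on the carrier-unit trick, which is legitimate because maxout layers have unrestricted width and a single unit of rank $r_i$ may compute a linear function by repeating the same affine map $r_i$ times; in the backward direction this is immediate since only the input affine maps are modified. Once both directions are set up along these lines, the equivalence follows from the basic hyperplane correspondence between $\B_d$ and $\B_{d-1}^0$.
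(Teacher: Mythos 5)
Your proposal is correct and uses essentially the same two maps as the paper's proof: your $\phi$ is the paper's $h_d$ and your $\iota$ is the paper's $g_d$, with conformity transferred via the observations that $\phi^{-1}(\B_{d-1}^0)=\B_d$ and that $\iota$ carries each maximal cone of $\B_{d-1}^0$ into a maximal cone of $\B_d$. The carrier-unit construction you spell out is the natural way to realize the term ``$+\,\sigma_{\{d\}}$'' that the paper writes implicitly, and correctly ensures the rank vector is preserved.
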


By computing $r_i$ maxima in each layer, we can compute the basis functions of $\FB{d}(\prod_{i=1}^\ell r_i)$ with a $\B_d$-conforming rank-$\mathbf{r}$-maxout network.
 \begin{proposition}
 \label{prop:inclusion_of_conforming}
     For any rank vector $\mathbf{r} \in \N^\ell$, it holds that all functions in $\FB{d}(\prod_{i=1}^\ell r_i)$ are representable by a $\B_d$-conforming rank-$\mathbf{r}$-maxout network.
 \end{proposition}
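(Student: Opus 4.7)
The plan is to prove by induction on $\ell$ the following slight strengthening of the proposition: there is a single $\B_d$-conforming rank-$\mathbf{r}$-maxout network whose $\ell$-th hidden layer contains, for every $M \subseteq [d]$ with $|M| \leq \prod_{i=1}^\ell r_i$, a neuron that computes exactly $\sigma_M$. Since every $f \in \FB{d}(\prod_{i=1}^\ell r_i)$ is by definition a linear combination of such basis functions $\sigma_M$ (plus a constant, absorbed by the affine output map), the desired $f$ is obtained by choosing the affine output map $T_{out}$ to be the matching affine combination. So the main task reduces to producing the $\sigma_M$'s as individual neurons.

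For the base case $\ell=1$, I would take, for every $M \subseteq [d]$ with $|M|\leq r_1$, one rank-$r_1$ maxout neuron whose affine maps are the coordinate projections $x \mapsto x_i$ for $i\in M$, padded by repetition to reach rank $r_1$. Placing all such neurons in parallel yields the first hidden layer, and the output of each neuron is exactly $\sigma_M$. Each such function is linear on every maximal cone of $\B_d$, since the ordering of the coordinates is constant on such a cone, so the resulting layer is $\B_d$-conforming.

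For the inductive step, I append a rank-$r_\ell$ layer to a network supplied by the inductive hypothesis for the rank vector $(r_1,\dots,r_{\ell-1})$. Given any $M \subseteq [d]$ with $|M|\leq \prod_{i=1}^\ell r_i$, I partition $M$ into $r_\ell$ blocks $M_1,\ldots,M_{r_\ell}$ of size at most $\prod_{i=1}^{\ell-1} r_i$ each. By the inductive hypothesis, layer $\ell-1$ already exposes a neuron computing $\sigma_{M_j}$ for each $j$, so a new rank-$r_\ell$ maxout neuron whose affine maps simply pick out these $r_\ell$ neurons of the previous layer outputs $\max_{j} \sigma_{M_j} = \sigma_M$. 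Doing this in parallel for all admissible $M$ completes the new layer.

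This is essentially the folklore ``tree of maxima'' construction, so the work is routine; the only non-trivial point is $\B_d$-conformity, which demands that every intermediate neuron, not only the final output, be compatible with the braid fan. This is automatic in my construction because every neuron in every hidden layer computes a function of the form $\sigma_N$ for some $N \subseteq [d]$, and all such functions are piecewise linear with breakpoints contained in the hyperplanes $x_i = x_j$. Hence the whole network is $\B_d$-conforming, and composing with the appropriate $T_{out}$ yields any prescribed element of $\FB{d}(\prod_{i=1}^\ell r_i)$.
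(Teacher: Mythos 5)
Your proof proposal is essentially identical to the paper's: both proceed by induction on $\ell$, reduce to producing each basis function $\sigma_M$ by partitioning $M$ into $r_\ell$ blocks of size at most $\prod_{i=1}^{\ell-1} r_i$, take a maximum over the corresponding $\sigma_{M_j}$ from the previous layer, and note that $\B_d$-conformity holds because every intermediate $\sigma_N$ has breakpoints only on the hyperplanes $x_i = x_j$. Your phrasing is slightly more careful in making explicit that each $\sigma_M$ appears as an individual neuron and that linear combinations are absorbed into the output affine map, but the construction and the conformity argument are the same as the paper's.
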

Most of the paper is concerned with proving that $\maxout{\B_d}{r}$ is contained in certain subspaces of $\FB{d}$.
Let $\Sf{\La}^r = \bigoplus_{i \in [r]} \Sf{\La}$ be the $r$-fold direct sum of $\Sf{\La}$ with itself. In order to model the application of the rank-$r$-maxout activation function for a set function under the isomorphism $\Phi$, we define for $(F_1,\ldots,F_r) \in \Sf{\La}^r $ the function $\max\{F_1,\ldots, F_r\} \in \Sf{\La}$ given by $\max\{F_1,\ldots, F_r\}(S) = \max\{F_1(S),\ldots, F_r(S)\}$.

For $f_1,\ldots, f_r \in \FB{d}$, the function $\max\{f_1,\ldots, f_r\}$ is $\B_d$-compatible if taking the maximum does not create breakpoints that do not lie on the braid arrangement, that is, on every cone $C$ of the braid arrangement, it holds that $\max\{f_1,\ldots, f_r\}=f_q$ for a $q \in [r]$. 
Next, we aim to model the compatibility with the braid arrangement for set functions.  We call a tuple $(F_1,\ldots,F_r) \in \Sf{\La}^r$ \emph{conforming} if for every chain $\emptyset =S_0 \subsetneq S_1 \subsetneq \ldots \subsetneq S_n \subseteq [n]$ there is a $j \in [r]$ such that $F_j(S_i) = \max \{F_1,\ldots,F_r\}(S_i)$ for all $i \in [n]_0$. 
Then, the set $\HC{\La}^r\subseteq\Sf{\La}^r$ of conforming tuples are exactly those tuples of CPWL functions such that applying the maxout activation function yields a function that is still compatible with the braid fan as stated in the next lemma. 
Again, to simplify notation, we also set $\HC{d}^r \coloneqq \HC{2^{[d]}}^r$. 
\begin{lemma}
\label{lem:maxout_conforming_setfunc}
    For $(F_1,\ldots,F_r) \in (\Sf{d})^r$, the function $\max\{\Phi^{-1}(F_1),\ldots,\Phi^{-1}(F_r)\}$ is $\B_d$-conforming if and only if $(F_1,\ldots,F_r) \in \HC{d}^r$. In this case, \[\max\{\Phi^{-1}(F_1),\ldots,\Phi^{-1}(F_r)\} = \Phi^{-1}(\max\{F_1,\ldots,F_r\})\]
\end{lemma}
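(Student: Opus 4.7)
The plan is to pass between the geometric side and the combinatorial side via the bijection between full-dimensional cones of $\B_d$ and maximal chains in $2^{[d]}$:
\[
    C_\pi \;=\; \cone(\ind_{S_1},\ldots,\ind_{S_{d-1}}) + \spn(\ind_{[d]}) \quad \longleftrightarrow \quad \emptyset = S_0 \subsetneq S_1 \subsetneq \cdots \subsetneq S_d = [d].
\]
On such a cone every $f_i = \Phi^{-1}(F_i)$ is affine, and by \Cref{prop:iso_cpwl_to_setfunction} the piece $f_i|_{C_\pi}$ is uniquely determined by the chain values $f_i(\ind_{S_j}) = F_i(S_j)$ for $j=0,\dots,d$. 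Hence the pointwise comparison of two pieces on $C_\pi$ reduces to the pointwise comparison of the set-function values along the chain.

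For the ``only if'' direction, suppose $\max\{f_1,\dots,f_r\}$ is $\B_d$-conforming and fix an arbitrary chain $\emptyset = S_0 \subsetneq \cdots \subsetneq S_n$; extend it to a maximal chain, so that the associated full-dimensional cone $C$ contains every $\ind_{S_i}$. On $C$ the maximum coincides with some $f_q$, and evaluating at each $\ind_{S_i}$ gives $F_q(S_i) = \max_l F_l(S_i)$, which is precisely the defining property of $\HC{d}^r$ on the chosen chain.

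For the converse, fix a full-dimensional cone $C_\pi$ with chain $S_0 \subsetneq \cdots \subsetneq S_d$ and let $q$ be the witness supplied by the conforming condition. The difference $f_q - f_l$ is affine on $C_\pi$ and nonnegative at every $\ind_{S_j}$; since these vectors together with the lineality direction generate $C_\pi$, the nonnegativity should propagate to all of $C_\pi$, yielding $\max_l f_l = f_q$ on $C_\pi$. As this holds on every full-dimensional cone, the maximum is $\B_d$-conforming. The final identity $\max\{f_1,\dots,f_r\} = \Phi^{-1}(\max\{F_1,\dots,F_r\})$ then follows: both sides are $\B_d$-compatible CPWL functions, and by construction they agree at every lattice vertex $\ind_S$, so injectivity of $\Phi$ from \Cref{prop:iso_cpwl_to_setfunction} forces equality everywhere.

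The main obstacle I anticipate lies in the converse direction, specifically in controlling the one-dimensional lineality $\spn(\ind_{[d]}) \subseteq C_\pi$. Nonnegativity of the affine function $f_q - f_l$ at the rays of the pointed part of $C_\pi$ is immediate by convex combination, but because the lineality contributes both $\ind_{[d]}$ and $-\ind_{[d]}$, the propagation step forces the linear part of $f_q - f_l$ to vanish in the lineality direction, equivalently $F_q([d]) - F_q(\emptyset) = F_l([d]) - F_l(\emptyset)$. Making this rigorous will require exploiting the conforming condition on chains that span all the way from $\emptyset$ to $[d]$, so that the same witness $q$ is forced to equalize the slopes along $\ind_{[d]}$ with every other $F_l$.
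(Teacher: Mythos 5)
Your forward (``only if'') direction coincides with the paper's argument and is correct: on the full-dimensional cone associated to a maximal chain the maximum equals some $f_q$, and evaluating at the $\ind_{S_j}$ gives the conforming condition. Your instinct about the converse is also right, and the obstacle you flag does not go away with the fix you sketch. The conforming condition, even on maximal chains from $\emptyset$ to $[d]$, does not force the linear part of $f_q-f_l$ to vanish along $\ind_{[d]}$: it only gives $F_q(\emptyset)\geq F_l(\emptyset)$ and $F_q([d])\geq F_l([d])$, two separate inequalities, which do not yield equality of the differences $F_q([d])-F_q(\emptyset)$ and $F_l([d])-F_l(\emptyset)$. Concretely, with $d=2$, take $F_1=(0,1,1,1)$ and $F_2=(2,1,1,1)$, indexed by $\emptyset,\{1\},\{2\},\{1,2\}$. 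On every chain the witness $q=2$ satisfies $F_q(S_i)=\max\{F_1,F_2\}(S_i)$, so the pair is conforming; but $\Phi^{-1}(F_1)=\max\{x_1,x_2\}$ and $\Phi^{-1}(F_2)=2-\max\{x_1,x_2\}$, so the pointwise maximum has its breakpoints on the set $\{\max\{x_1,x_2\}=1\}$, which is not contained in the braid arrangement. Restricted to the affine case, $F=(1,1,0,0)$ (so $f=1-x_2$) already shows the same failure for the pair $(0,F)$: $F\in\HC{2}$, yet $\max\{0,1-x_2\}$ breaks on $\{x_2=1\}$.

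So this is a genuine gap rather than a missing detail, and it is not peculiar to your write-up: the paper's proof makes exactly the leap you are worried about (``\ldots and thus $f_j=\max\{f_1,\ldots,f_r\}$ on $C$'') without justifying why dominance at the finitely many lattice points $\ind_{S_j}$ propagates across the lineality $\spn(\ind_{[d]})\subseteq C$. The implication that the later results actually use is the one you do prove --- compatibility of the max implies conformance and $\Phi(\max\{0,f\})=F^+$ --- and the counterexamples above are always absorbed in later proofs by the trivial cases $F\geq 0$ or $F\leq 0$ (where $F^+=F$ or $F^+=0$). But as a standalone biconditional the statement is false, and neither the paper's proof nor your proposed repair can close the converse direction.
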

The statement ensures that taking the maximum of the set functions is the same as taking the maximum of the piecewise-linear functions exactly for compatible tuples. 
\begin{figure}
\centering
    \begin{subfigure}[h]{.35\textwidth}
\label{fig:braid}
\centering
    \begin{tikzpicture}[scale=0.8]
    \definecolor{convex}{rgb}{0,0.3,0.5}
    \definecolor{concave}{rgb}{0.8,0.2,0}
        \draw[black] (0,0) -- (0,2) ;
        \draw[black] (0,0) -- (0,-2) ;
        \draw[black] (0,0) -- (2,0) ;
        \draw[black] (0,0) -- (-2,0) ;
        \draw[black] (0,0) -- (-2,-2) ;
        \draw[black] (0,0) -- (2,2) ;
        \node at (0.9,1.8) {\tiny $0\leq x_1 \leq x_2$};
        \node at (1.9,0.7) {\tiny $0\leq x_2 \leq x_1$};
        \node at (-1,1) {\tiny $x_1 \leq 0 \leq x_2$};
        \node at (1,-1) {\tiny $x_2 \leq 0 \leq x_1$};
        \node at (-0.9,-1.8) {\tiny $ x_2 \leq x_1 \leq 0$};
        \node at (-1.9,-0.7) {\tiny $x_1 \leq x_2 \leq 0$};
    \end{tikzpicture}
    \caption{The braid arrangement $\B_2^0$.}
    \label{fig:braid_arrangement}
    \end{subfigure}
    \begin{subfigure}[h]{.28\textwidth}
    \centering
\begin{tikzpicture}[scale=0.3]
\node (abcd) at (0, 0) {$1234$};
\node (abc) at (-3, -2) {$\overline{4}$};
\node (abd) at (-1, -2) {$\overline{3}$};
\node (acd) at (1, -2) {$\overline{2}$};
\node (bcd) at (3, -2) {$\overline{1}$};
\node (ab) at (-5, -4) {$12$};
\node (ac) at (-3, -4) {$13$};
\node (ad) at (-1, -4) {$23$};
\node (bc) at (1, -4) {$23$};
\node (bd) at (3, -4) {$24$};
\node (cd) at (5, -4) {$34$};
\node (a) at (-3, -6) {$1$};
\node (b) at (-1, -6) {$2$};
\node (c) at (1, -6) {$3$};
\node (d) at (3, -6) {$4$};
\node (empty) at (0, -8) {$\emptyset$};
\draw[gray!50, very thin] (abcd) -- (abc);
\draw[gray!50, very thin] (abcd) -- (abd);
\draw[gray!50, very thin] (abcd) -- (acd);
\draw[gray!50, very thin] (abcd) -- (bcd);
\draw[gray!50, very thin] (abc) -- (ab);
\draw[gray!50, very thin] (abc) -- (ac);
\draw[gray!50, very thin] (abc) -- (bc);
\draw[gray!50, very thin] (abd) -- (ab);
\draw[gray!50, very thin] (abd) -- (ad);
\draw[gray!50, very thin] (abd) -- (bd);
\draw[gray!50, very thin] (acd) -- (ac);
\draw[gray!50, very thin] (acd) -- (ad);
\draw[gray!50, very thin] (acd) -- (cd);
\draw[gray!50, very thin] (bcd) -- (bc);
\draw[gray!50, very thin] (bcd) -- (bd);
\draw[gray!50, very thin] (bcd) -- (cd);
\draw[gray!50, very thin] (ab) -- (a);
\draw[gray!50, very thin] (ab) -- (b);
\draw[gray!50, very thin] (ac) -- (a);
\draw[gray!50, very thin] (ac) -- (c);
\draw[gray!50, very thin] (ad) -- (a);
\draw[gray!50, very thin] (ad) -- (d);
\draw[gray!50, very thin] (bc) -- (b);
\draw[gray!50, very thin] (bc) -- (c);
\draw[gray!50, very thin] (bd) -- (b);
\draw[gray!50, very thin] (bd) -- (d);
\draw[gray!50, very thin] (cd) -- (c);
\draw[gray!50, very thin] (cd) -- (d);
\draw[gray!50, very thin] (a) -- (empty);
\draw[gray!50, very thin] (b) -- (empty);
\draw[gray!50, very thin] (c) -- (empty);
\draw[gray!50, very thin] (d) -- (empty);
\draw[black] plot [smooth cycle] coordinates {(0,-9) (6,-4) (0,1) (-6,-4)};
\draw[black] plot [smooth] coordinates {(-2,0)  (-2,-5.5) (-0.5,-5)( 0,-3.5) (1.5,-3) (2,-1.5) (2.5,-0.1)  };
\begin{scope}
    \clip plot [smooth cycle] coordinates {(0,-9) (6,-4) (0,1) (-6,-4)};
    \fill[teal, opacity=0.1] plot [smooth] coordinates {
      (-2,0)  (-2,-5.5) (-0.5,-5)( 0,-3.5) (1.5,-3) (2,-1.5) (2.5,-0.1) 
    } --  (0,6)--cycle;
\end{scope}
\end{tikzpicture}
\caption{Illustration of \Cref{prop:mobius_inversion}. The coefficient of the function $\sigma_{\{2,4\}}$ in the linear combination for $F$ is given by $-\asum{13,1234}{F}$. }
\label{fig:coefficient}
\end{subfigure}
\hspace{0.5cm}
\centering
        \begin{subfigure}[h]{.28\textwidth}
        \centering
\begin{tikzpicture}[scale=0.3]
\node (abcd) at (0, 0) {$abcd$};
\node (abc) at (-3, -2) {$abc$};
\node (abd) at (-1, -2) {$abd$};
\node (acd) at (1, -2) {$acd$};
\node (bcd) at (3, -2) {$bcd$};
\node (ab) at (-5, -4) {$ab$};
\node (ac) at (-3, -4) {$ac$};
\node (ad) at (-1, -4) {$ad$};
\node (bc) at (1, -4) {$bc$};
\node (bd) at (3, -4) {$bd$};
\node (cd) at (5, -4) {$cd$};
\node (a) at (-3, -6) {$a$};
\node (b) at (-1, -6) {$b$};
\node (c) at (1, -6) {$c$};
\node (d) at (3, -6) {$d$};
\node (empty) at (0, -8) {$\emptyset$};
\draw[gray!50, very thin] (abcd) -- (abc);
\draw[gray!50, very thin] (abcd) -- (abd);
\draw[gray!50, very thin] (abcd) -- (acd);
\draw[gray!50, very thin] (abcd) -- (bcd);
\draw[gray!50, very thin] (abc) -- (ab);
\draw[gray!50, very thin] (abc) -- (ac);
\draw[gray!50, very thin] (abc) -- (bc);
\draw[gray!50, very thin] (abd) -- (ab);
\draw[gray!50, very thin] (abd) -- (ad);
\draw[gray!50, very thin] (abd) -- (bd);
\draw[gray!50, very thin] (acd) -- (ac);
\draw[gray!50, very thin] (acd) -- (ad);
\draw[gray!50, very thin] (acd) -- (cd);
\draw[gray!50, very thin] (bcd) -- (bc);
\draw[gray!50, very thin] (bcd) -- (bd);
\draw[gray!50, very thin] (bcd) -- (cd);
\draw[gray!50, very thin] (ab) -- (a);
\draw[gray!50, very thin] (ab) -- (b);
\draw[gray!50, very thin] (ac) -- (a);
\draw[gray!50, very thin] (ac) -- (c);
\draw[gray!50, very thin] (ad) -- (a);
\draw[gray!50, very thin] (ad) -- (d);
\draw[gray!50, very thin] (bc) -- (b);
\draw[gray!50, very thin] (bc) -- (c);
\draw[gray!50, very thin] (bd) -- (b);
\draw[gray!50, very thin] (bd) -- (d);
\draw[gray!50, very thin] (cd) -- (c);
\draw[gray!50, very thin] (cd) -- (d);
\draw[gray!50, very thin] (a) -- (empty);
\draw[gray!50, very thin] (b) -- (empty);
\draw[gray!50, very thin] (c) -- (empty);
\draw[gray!50, very thin] (d) -- (empty);
\draw[black] plot [smooth cycle] coordinates {(0,-9) (6,-4) (0,1) (-6,-4)};
\node[draw, circle, thick, red, fit=(bcd), inner sep=-1pt] {};
\node[draw, circle, dashed, thick, red, fit=(bd), inner sep=-2pt] {};
\node[draw, circle, dashed, thick, red, fit=(cd), inner sep=-2pt]{};
\node[draw, circle, dashed, thick, red, fit=(d), inner sep=-1pt]{};
\node[draw, circle, dashed, red, fit=(bc), inner sep=-2pt] {};
\node[draw, circle, dashed, thick, red, fit=(b), inner sep=-1pt] {};
\node[draw, circle, dashed, thick, red, fit=(c), inner sep=-0.5pt] {};
\node[draw, circle, dashed, thick, red, fit=(empty), inner sep=-1pt] {};
\draw[black] plot [smooth] coordinates {(-2.5,-7.9) (-2,-5.5) (-0.5,-5)( 0,-3.5) (1.5,-3) (2,-1.5) (2.5,-0.1)   };
\begin{scope}
    \clip plot [smooth cycle] coordinates {(0,-9) (6,-4) (0,1) (-6,-4)};   
    \fill[teal, opacity=0.1] plot [smooth] coordinates {
        (-2.5,-7.9) (-2,-5.5) (-0.5,-5)( 0,-3.5) (1.5,-3) 
    } --  plot [smooth] coordinates {
       (2.5,-0.1) (1.75,-2.5)  (1.75,-8.4)  
    } -- (0,-10) --cycle;
      \fill[teal, opacity=0.1] plot [smooth] coordinates {
        (1.75,-8.4) (1.75,-2.5)  (2.5,-0.1)
    } -- (6,0) -- (6,-8) -- cycle;
\end{scope}
\end{tikzpicture} 
\caption{Illustration of \Cref{lem:lowsubsets}. If $F\in \Sf{\La}(2) \cap \HC{\La}$ and $F(\{b,c,d\}) < 0$, then there is a $S \subsetneq\{b,c,d\}$ with $F(S) < 0$ since $\asum{\emptyset,bcd}{F}=0$.}
\label{fig:pushing_down}
\end{subfigure}
\caption{}
\end{figure}
    \section{Doubly-logarithmic lower bound}
    \label{sec:loglog}
    In this section, we prove that for any number of layers $\ell \in \N$, the function $\max \{0,x_1,\ldots, x_{2^{2^{\ell}-1}}\}$ is not computable by a $\B^0_d$-conforming rank-$2$-maxout neural network (or equivalently ReLU neural network) with $\ell$ hidden layers. Due to the equivalence of $\B_d$ and $\B^0_d$, we will prove that $\mtwo{\B_d}{\ell} \subseteq \FB{d}(2^{2^{\ell}-1})$ for $d \geq 2^{2^{\ell}-1}+1$.

    First, we define an operation $\mathcal{A}$ on subspaces of $\FB{d}$ that describes rank-$2$-maxout layers that maintain compatibility with $\B_d$. 
    For any subspace $U \subseteq \FB{d}$, let $\mathcal{A}(U) \subseteq \FB{d}$ be the subspace containing all the functions computable by a $\B_d$-conforming rank $2$-maxout layer that takes functions from $U$ as  input. 
    Formally, \[\mathcal{A}(U) = \spn \{\max\{f_1,f_2\} \mid  f_1,f_2 \in U,\max\{f_1,f_2\} \in \FB{d} \}.\]
Clearly,  $\mathcal{A}(U_1)$ is a subspace of $\mathcal{A}(U_2)$ whenever $U_1$ is a subspace of $U_2$.
    We recursively define $\mathcal{A}^\ell(U) = \mathcal{A}(\mathcal{A}^{\ell-1}(U))$. This recursive definition allows to describe the set of $\B_d$-conforming network with $\ell$ rank-$2$-maxout layers $\mtwo{\B_d}{\ell}$. 
        \begin{lemma}
        \label{lem:layer_description}It holds that 
        \begin{inlinelist}
        \item \label{lem:layer_description_case1}$\mtwo{\B_d}{1}=\mathcal{A}(\FB{d}(1)) = \FB{d}(2)$
        \item \label{lem:layer_description_casel} for all $\ell \in \N$, $\mtwo{\B_d}{\ell}=\mathcal{A}(\mtwo{\B_d}{\ell-1})=\mathcal{A}^\ell(\FB{d}(1))$.
        \end{inlinelist}
    \end{lemma}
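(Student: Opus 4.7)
The plan is to treat part~(\ref{lem:layer_description_case1}) as the base case of an induction on $\ell$ that yields part~(\ref{lem:layer_description_casel}). Conceptually, $\mathcal{A}$ models ``adding one $\B_d$-conforming rank-$2$ maxout layer on top'', so the lemma formalizes that $\ell$ hidden layers correspond to $\ell$ iterated applications of $\mathcal{A}$, starting from the affine functions captured by $\FB{d}(1)$.

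For part~(\ref{lem:layer_description_case1}), I would first argue $\mtwo{\B_d}{1} = \mathcal{A}(\FB{d}(1))$: a one-hidden-layer network outputs an affine combination (via the output map $T_{out}$) of units of the form $\max\{f_1,f_2\}$ with $f_1,f_2$ affine in $x$; affine functions are precisely $\FB{d}(1)$, and $\B_d$-conformance forces each such max to lie in $\FB{d}$, matching the defining condition of $\mathcal{A}(\FB{d}(1))$; conversely, every generator of $\mathcal{A}(\FB{d}(1))$ is realized by a single maxout unit with the span operation performed by $T_{out}$. To prove $\mathcal{A}(\FB{d}(1)) = \FB{d}(2)$, the inclusion $\supseteq$ is immediate from $\sigma_{\{i,j\}} = \max\{x_i,x_j\}$ together with $\sigma_{\{i\}} = \max\{x_i,x_i\}$. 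For the inclusion $\subseteq$, if $f_1,f_2 \in \FB{d}(1)$ are affine and $\max\{f_1,f_2\} \in \FB{d}$, then the breakpoint hyperplane $\{f_1 = f_2\}$ must lie in the braid arrangement, forcing either $f_1 = f_2$ or $f_1 - f_2 = c(x_i - x_j)$ for some $c \neq 0$ and $i \neq j$; in the latter case the identity $\max\{f_1,f_2\} = f_2 + \max\{c(x_i-x_j), 0\}$ rewrites the function as an affine term plus a nonnegative multiple of $\sigma_{\{i,j\}}$, placing it in $\FB{d}(2)$.

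For part~(\ref{lem:layer_description_casel}), I would induct on $\ell$, the base case $\ell = 1$ being part~(\ref{lem:layer_description_case1}). Assuming $\mtwo{\B_d}{\ell-1} = \mathcal{A}^{\ell-1}(\FB{d}(1))$, the second equality $\mathcal{A}(\mtwo{\B_d}{\ell-1}) = \mathcal{A}^\ell(\FB{d}(1))$ is immediate. For the main equality $\mtwo{\B_d}{\ell} = \mathcal{A}(\mtwo{\B_d}{\ell-1})$, I first observe that $\mtwo{\B_d}{\ell-1}$ is closed under linear combinations: parallel composition of networks followed by summation in $T_{out}$ produces sums, and scalar multiples come from scaling $T_{out}$. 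Given $f \in \mtwo{\B_d}{\ell}$, each layer-$\ell$ neuron computes a $\B_d$-compatible max of two affine combinations of layer-$(\ell-1)$ scalar outputs; each such scalar output defines a full $(\ell-1)$-layer $\B_d$-conforming subnetwork and thus lies in $\mtwo{\B_d}{\ell-1}$, so by the closure property so do the affine combinations. Hence every layer-$\ell$ neuron lies in $\mathcal{A}(\mtwo{\B_d}{\ell-1})$, and so does $f$ as an affine combination. Conversely, every generator $\max\{g_1,g_2\}$ with $g_i \in \mtwo{\B_d}{\ell-1}$ is realized by placing two $(\ell-1)$-layer subnetworks for $g_1,g_2$ in parallel and feeding them into a single rank-$2$ maxout at layer $\ell$; linear combinations of such generators are then handled by parallelizing the construction and using the output affine map.

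The main bookkeeping task is verifying these closure and composition properties while preserving both $\B_d$-conformance and the exact layer count. This is routine: copying an already $\B_d$-compatible neuron into a larger network does not change its output function, and constants or affine corrections can always be absorbed into $T_{out}$. The only genuinely polyhedral input is the characterization of $\B_d$-compatible maxes of affine functions from part~(\ref{lem:layer_description_case1}); everything else is structural reasoning about how subnetworks and affine maps compose.
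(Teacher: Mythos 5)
Your proposal is correct and follows essentially the same route as the paper: part (1) is established by analyzing the breakpoint hyperplane of a $\B_d$-compatible max of two affine functions, and part (2) is the direct unfolding of the definition of $\mathcal{A}$ layer by layer, using closure of $\mtwo{\B_d}{\ell-1}$ under linear combinations. You are slightly more careful than the paper's write-up in two spots (allowing $f_1 - f_2 = c(x_i - x_j)$ with arbitrary $c\neq 0$ rather than implicitly normalizing $c=1$, and explicitly invoking the vector-space closure of $\mtwo{\B_d}{\ell-1}$ to absorb the affine preactivation maps), but the decomposition and the key steps are identical.
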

    Since it holds that $\max\{f_1,f_2\}=\max\{0,f_1-f_2\}+f_2$, we can assume wlog that one of the functions is the zero map, as stated in the following lemma.
    \begin{lemma}
    \label{lem:wlog_zero_map}
    It holds that 
 $\mathcal{A}(U) = \spn \{\max\{0,f\} \mid  f \in U,\max\{0,f\} \in \FB{d} \}$.
    \end{lemma}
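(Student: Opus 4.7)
The plan is to establish the claimed equality via two inclusions, with the algebraic identity $\max\{f_1, f_2\} = \max\{0, f_1 - f_2\} + f_2$ stated just before the lemma as the main engine.

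For the inclusion $\supseteq$, given any $f \in U$ with $\max\{0, f\} \in \FB{d}$, the pair $(f, 0)$ lies in $U \times U$ because $U$ is a linear subspace and hence contains the zero function. So $\max\{0, f\} = \max\{f, 0\}$ is one of the generators of $\mathcal{A}(U)$ by definition, and the span containment is immediate upon taking linear combinations.

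For the reverse inclusion $\subseteq$, I would take an arbitrary generator $\max\{f_1, f_2\}$ of $\mathcal{A}(U)$, where $f_1, f_2 \in U$ and $\max\{f_1, f_2\} \in \FB{d}$, and set $g := f_1 - f_2 \in U$ using linearity of $U$. The identity rewrites $\max\{f_1, f_2\} = \max\{0, g\} + f_2$, and the rearrangement $\max\{0, g\} = \max\{f_1, f_2\} - f_2$ exhibits $\max\{0, g\}$ as a difference of two $\FB{d}$-functions, so it itself lies in $\FB{d}$ and is therefore a generator of the right-hand side. Symmetrically, the identity $\max\{f_1, f_2\} = \max\{0, -g\} + f_1$ places $\max\{0, -g\}$ on the right-hand side as well. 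A small but key observation is that $\max\{0, h\}$ and $\max\{0, -h\}$ have the same nondifferentiability locus for every $h \in \FB{d}$ (both equal the breakpoints of $h$ together with the zero set $\{h = 0\}$), so the condition $\max\{0, g\} \in \FB{d}$ automatically guarantees $\max\{0, -g\} \in \FB{d}$, and consequently $g = \max\{0, g\} - \max\{0, -g\}$ also lies in the right-hand side. Combining the two symmetric identities and the expression for $g$ then writes $\max\{f_1, f_2\}$ as a linear combination of $\max\{0, g\}$, $\max\{0, -g\}$, and the already-controlled difference $g$, placing $\max\{f_1, f_2\}$ in the span.

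The main obstacle I anticipate is absorbing the affine residuals $f_1, f_2 \in U$ cleanly, since at first glance the identity only shifts the generator by an arbitrary element of $U$ that need not a priori lie in the $\max\{0, \cdot\}$-span. The breakpoint coincidence for $\max\{0, h\}$ and $\max\{0, -h\}$ is precisely what unlocks this step: it lets us rewrite any $h \in U$ with $\max\{0, h\} \in \FB{d}$ as $h = \max\{0, h\} - \max\{0, -h\}$, and the symmetry of the two representations of $\max\{f_1, f_2\}$ then allows both residual summands to be absorbed simultaneously, yielding the inclusion.
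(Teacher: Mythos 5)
Your setup (the two inclusions, and the identity $\max\{f_1,f_2\}=\max\{0,f_1-f_2\}+f_2$) matches the paper, and the $\supseteq$ direction is fine. The problem is the closing step of $\subseteq$. You claim that the two symmetric identities together with $g=\max\{0,g\}-\max\{0,-g\}$ (where $g=f_1-f_2$) let you express $\max\{f_1,f_2\}$ as a linear combination of $\max\{0,g\}$, $\max\{0,-g\}$, and $g$. They do not: adding the two representations $\max\{f_1,f_2\}=\max\{0,g\}+f_2$ and $\max\{f_1,f_2\}=\max\{0,-g\}+f_1$ gives $2\max\{f_1,f_2\}=\max\{0,g\}+\max\{0,-g\}+(f_1+f_2)$, while subtracting them collapses to the trivial $0=g-g$. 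So a residual $\tfrac12(f_1+f_2)\in U$ survives, and nothing you have shown places it in the right-hand span: knowing $g$ pins down the \emph{difference} $f_1-f_2$, not the \emph{sum}, so $\tfrac12(f_1+f_2)$ is an arbitrary element of $U$ modulo the span of $g$. Your breakpoint observation correctly gives $g\in\spn\{\max\{0,g\},\max\{0,-g\}\}$ (indeed more simply $\max\{0,-g\}=\max\{0,g\}-g\in\FB{d}$ whenever $\max\{0,g\}\in\FB{d}$), but that controls only the difference and cannot absorb $f_1+f_2$.

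The missing ingredient is precisely the containment $U\subseteq\spn\{\max\{0,f\}\mid f\in U,\ \max\{0,f\}\in\FB{d}\}$, which is the step the paper isolates explicitly (``$U\subseteq A_0(U)$'') and invokes to absorb the leftover $f_2$. Your proof neither states nor supplies this; the ``absorption by symmetry'' that was meant to replace it is the step that fails. Note also that this containment is the genuinely delicate point: the identity $f=\max\{0,f\}-\max\{0,-f\}$ only exhibits $f$ as a difference of two \emph{generators} of $A_0(U)$ when $\max\{0,f\}$ itself happens to be compatible with $\B_d$, which need not hold for an arbitrary $f\in U$, so one cannot simply apply it blindly to the residuals $f_1$ and $f_2$ either.
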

    To prove that $\mtwo{\B_d}{\ell}= \mathcal{A}^\ell(\FB{d}(1))$ is a proper subspace of $\FB{d}$ for $d \geq 2^{2^\ell-1}+1$, we perform a layerwise analysis and inductively bound $n_k$ depending on $k$ such that $\mathcal{A}(\FB{d}(k)) \subseteq \FB{d}(n_k)$ for all $k \in \N$. 
    In this attempt, we translate this task to the setting of set functions on Boolean lattices using the isomorphism $\Phi$. Recall that the pairs $(F_1,F_2) \in \HC{\La}^2$ are precisely the functions such that the maximum of the corresponding CPWL functions $f_1$ and $f_2$ is still compatible with $\B_d$. Moreover, it is easy to observe, that the pair $(0,F)\in \Sf{\La}^2$ is conforming if and only if $F$ is contained in the set \[\HC{\La} \coloneqq \{F \in \Sf{\La} \mid F(S) \text{ and } F(T) \text{ do not have opposite signs for } S \subseteq T \}.\]  
Again, to simplify notation, we also set $\HC{d} \coloneqq \HC{2^{[d]}}$ and use the notation $F^+=\max\{0,F\}$. By slightly overloading notation, for any subspace $U \subseteq \Sf{\La}$, let $\mathcal{A}(U) = \spn \{F^+ \mid F \in U \cap \HC{\La}\}$. \Cref{lem:maxout_conforming_setfunc} justifies this notation and allows us to carry out the argumentation to the world of set functions on Boolean lattices, as we conclude in the following lemma. 
\begin{lemma}
\label{lem:indBase_maxout}
   It holds that $\mathcal{A}(\Phi(U))= \Phi(\mathcal{A}(U))$ for all subspaces $U \subseteq \FB{d}$.
    In particular, for any lattice $\La=[X,Y]$, it holds that 
 $ \mathcal{A}(\Sf{\La}(1)) = \Sf{\La}(2)$.
  \end{lemma}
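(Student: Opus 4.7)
The plan is to handle the two identities separately.

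For the first identity $\mathcal{A}(\Phi(U)) = \Phi(\mathcal{A}(U))$, I would unfold the description $\mathcal{A}(U) = \spn\{\max\{0,f\} : f \in U,\ \max\{0,f\} \in \FB{d}\}$ given by Lemma~\ref{lem:wlog_zero_map} and push $\Phi$ through the span. Applying Lemma~\ref{lem:maxout_conforming_setfunc} to the tuple $(0, f)$ simultaneously identifies $\max\{0,f\} \in \FB{d}$ with $(0,\Phi(f)) \in \HC{d}^2$ and yields $\Phi(\max\{0,f\}) = \Phi(f)^+$. Since the conforming condition $(0, G) \in \HC{d}^2$ unwinds to $G \in \HC{d}$ (on each chain $\max\{0, G\}$ must be realized throughout by either $0$ or $G$, i.e., $G$ does not change sign), the generating sets of $\Phi(\mathcal{A}(U))$ and $\mathcal{A}(\Phi(U))$ coincide, and linearity of $\Phi$ gives the identity.

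For the second identity, I would use the lattice isomorphism $S \mapsto S \setminus X$ to reduce to $\La = 2^{[n]}$ where $n = |Y \setminus X|$. Combining the first identity with Proposition~\ref{prop:mobius_inversion} then turns the claim into $\mathcal{A}(\FB{n}(1)) = \FB{n}(2)$, which I would prove by double containment in the CPWL picture. For \emph{$\supseteq$}, the generators $\sigma_M$ of $\FB{n}(2)$ with $|M| \leq 1$ are affine and therefore lie in $\FB{n}(1) \subseteq \mathcal{A}(\FB{n}(1))$ through the identity $f = \max\{f, f\}$, while $\sigma_{\{i,j\}} = \max\{x_i, x_j\}$ has its sole breakpoint on the braid hyperplane $\{x_i = x_j\}$ and thus lies in $\mathcal{A}(\FB{n}(1))$ directly.

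For \emph{$\subseteq$}, an arbitrary generator has the form $\max\{0,f\}$ with $f \in \FB{n}(1)$ globally affine and with a kink locus $\{f = 0\}$ that, by $\B_n$-compatibility, cannot cut through the interior of any full-dimensional cone of the braid arrangement. Since these cones correspond to the total orderings of the coordinates and their common facets are exactly the braid hyperplanes $\{x_i = x_j\}$, a case distinction shows that either $f$ has constant sign on $\R^n$ (so $\max\{0, f\}$ equals $0$ or $f$, hence is in $\FB{n}(2)$) or $\{f = 0\}$ coincides with some $\{x_i = x_j\}$, forcing $f = \lambda(x_i - x_j)$. In the latter case $\max\{0, f\}$ is a linear combination of $\sigma_{\{i,j\}}$, $x_i$, and $x_j$, and therefore lies in $\FB{n}(2)$. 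The main obstacle will be this last geometric step: rigorously verifying that an affine $f$ whose zero set is not a braid hyperplane and which takes both signs must violate $\B_n$-compatibility of $\max\{0, f\}$---once that is secured, the rest is a short case check.
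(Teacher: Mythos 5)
Your proof follows the same route as the paper: the first identity is the identical $\Phi$-pushing argument through Lemmas~\ref{lem:wlog_zero_map} and \ref{lem:maxout_conforming_setfunc}, and for the ``in particular'' clause you reduce to $2^{[n]}$ and establish $\mathcal{A}(\FB{n}(1)) = \FB{n}(2)$, which is exactly Lemma~\ref{lem:layer_description}\eqref{lem:layer_description_case1} (which you could simply cite, though re-deriving it via the breakpoint/hyperplane analysis matches the paper's own proof of that lemma). The final geometric step you flag---that a non-constant affine $f$ with $\max\{0,f\}$ compatible with $\B_n$ must have $\{f=0\}$ equal to a braid hyperplane---is left at the same level of informality by the paper, so no new gap is introduced.
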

In the following, we prove that $\mathcal{A}(\Sf{\La}(k)) \subseteq \Sf{\La}(k^2+k)$ by an induction on $k$ and \Cref{lem:indBase_maxout} serves as the base case. 

Next, we describe properties of the vector space $\R^\La$ that will be useful for the induction step.
Every sublattice of $\La$ of rank $k+1$ is of the form $[S,S\cup T]$, where $S\cap T = \emptyset$ and $|T|=k+1$. 
For any $T \subseteq Y\setminus X$, one can decompose $\La=[X,Y]$ into the sublattices $[S,S\cup T]$ for all $S \subseteq Y\setminus T$, resulting in the following lemma.
\begin{figure}
    \centering
\begin{subfigure}[t]{.32\textwidth}
\begin{tikzpicture}[scale=0.33]
\node (abcd) at (0, 0) {$abcd$};
\node (abc) at (-3, -2) {$abc$};
\node (abd) at (-1, -2) {$abd$};
\node (acd) at (1, -2) {$acd$};
\node (bcd) at (3, -2) {$bcd$};
\node (ab) at (-5, -4) {$ab$};
\node (ac) at (-3, -4) {$ac$};
\node (ad) at (-1, -4) {$ad$};
\node (bc) at (1, -4) {$bc$};
\node (bd) at (3, -4) {$bd$};
\node (cd) at (5, -4) {$cd$};
\node (a) at (-3, -6) {$a$};
\node (b) at (-1, -6) {$b$};
\node (c) at (1, -6) {$c$};
\node (d) at (3, -6) {$d$};
\node (empty) at (0, -8) {$\emptyset$};
\draw[gray!50, very thin] (abcd) -- (abc);
\draw[gray!50, very thin] (abcd) -- (abd);
\draw[gray!50, very thin] (abcd) -- (acd);
\draw[gray!50, very thin] (abcd) -- (bcd);
\draw[gray!50, very thin] (abc) -- (ab);
\draw[gray!50, very thin] (abc) -- (ac);
\draw[gray!50, very thin] (abc) -- (bc);
\draw[gray!50, very thin] (abd) -- (ab);
\draw[gray!50, very thin] (abd) -- (ad);
\draw[gray!50, very thin] (abd) -- (bd);
\draw[gray!50, very thin] (acd) -- (ac);
\draw[gray!50, very thin] (acd) -- (ad);
\draw[gray!50, very thin] (acd) -- (cd);
\draw[gray!50, very thin] (bcd) -- (bc);
\draw[gray!50, very thin] (bcd) -- (bd);
\draw[gray!50, very thin] (bcd) -- (cd);
\draw[gray!50, very thin] (ab) -- (a);
\draw[gray!50, very thin] (ab) -- (b);
\draw[gray!50, very thin] (ac) -- (a);
\draw[gray!50, very thin] (ac) -- (c);
\draw[gray!50, very thin] (ad) -- (a);
\draw[gray!50, very thin] (ad) -- (d);
\draw[gray!50, very thin] (bc) -- (b);
\draw[gray!50, very thin] (bc) -- (c);
\draw[gray!50, very thin] (bd) -- (b);
\draw[gray!50, very thin] (bd) -- (d);
\draw[gray!50, very thin] (cd) -- (c);
\draw[gray!50, very thin] (cd) -- (d);
\draw[gray!50, very thin] (a) -- (empty);
\draw[gray!50, very thin] (b) -- (empty);
\draw[gray!50, very thin] (c) -- (empty);
\draw[gray!50, very thin] (d) -- (empty);
\draw[black] plot [smooth cycle] coordinates {(0,-9) (6,-4) (0,1) (-6,-4)};
\draw[black] plot [smooth] coordinates {(-2,0.2) (-2,-5) (-0.5,-5)( 0,-3.5) (1.75,-3)  (1.75,-8.4)  };
\draw[black,dashed] plot [smooth] coordinates {(-2.5,-7.9) (-2,-5)   };
\begin{scope}
    \clip plot [smooth cycle] coordinates {(0,-9) (6,-4) (0,1) (-6,-4)};
        \fill[teal, opacity=0.1] plot [smooth] coordinates {
        (-2.5,-7.9) (-2,-5) 
    } --  plot [smooth] coordinates {
       (-2,0.2) (-2,-5) (-0.5,-5)( 0,-3.5) (1.75,-3)  (1.75,-8.4)
    } -- (0,-10) --cycle;
    \fill[blue, opacity=0.1] plot [smooth] coordinates {
        (-2.5,-7.9) (-2,-5.5)  (-2,0.2)
    } -- (-6,0) -- (-6,-8) -- cycle;
\end{scope}
\end{tikzpicture}
\caption{$\al{\emptyset,abc}=\al{\emptyset,bc}-\al{a,abc}$}
\label{fig:solid_line}
\end{subfigure}
\begin{subfigure}[t]{0.32\textwidth}
    \begin{tikzpicture}[scale=0.33]
\node (abcd) at (0, 0) {$abcd$};
\node (abc) at (-3, -2) {$abc$};
\node (abd) at (-1, -2) {$abd$};
\node (acd) at (1, -2) {$acd$};
\node (bcd) at (3, -2) {$bcd$};
\node (ab) at (-5, -4) {$ab$};
\node (ac) at (-3, -4) {$ac$};
\node (ad) at (-1, -4) {$ad$};
\node (bc) at (1, -4) {$bc$};
\node (bd) at (3, -4) {$bd$};
\node (cd) at (5, -4) {$cd$};
\node (a) at (-3, -6) {$a$};
\node (b) at (-1, -6) {$b$};
\node (c) at (1, -6) {$c$};
\node (d) at (3, -6) {$d$};
\node (empty) at (0, -8) {$\emptyset$};
\draw[gray!50, very thin] (abcd) -- (abc);
\draw[gray!50, very thin] (abcd) -- (abd);
\draw[gray!50, very thin] (abcd) -- (acd);
\draw[gray!50, very thin] (abcd) -- (bcd);
\draw[gray!50, very thin] (abc) -- (ab);
\draw[gray!50, very thin] (abc) -- (ac);
\draw[gray!50, very thin] (abc) -- (bc);
\draw[gray!50, very thin] (abd) -- (ab);
\draw[gray!50, very thin] (abd) -- (ad);
\draw[gray!50, very thin] (abd) -- (bd);
\draw[gray!50, very thin] (acd) -- (ac);
\draw[gray!50, very thin] (acd) -- (ad);
\draw[gray!50, very thin] (acd) -- (cd);
\draw[gray!50, very thin] (bcd) -- (bc);
\draw[gray!50, very thin] (bcd) -- (bd);
\draw[gray!50, very thin] (bcd) -- (cd);
\draw[gray!50, very thin] (ab) -- (a);
\draw[gray!50, very thin] (ab) -- (b);
\draw[gray!50, very thin] (ac) -- (a);
\draw[gray!50, very thin] (ac) -- (c);
\draw[gray!50, very thin] (ad) -- (a);
\draw[gray!50, very thin] (ad) -- (d);
\draw[gray!50, very thin] (bc) -- (b);
\draw[gray!50, very thin] (bc) -- (c);
\draw[gray!50, very thin] (bd) -- (b);
\draw[gray!50, very thin] (bd) -- (d);
\draw[gray!50, very thin] (cd) -- (c);
\draw[gray!50, very thin] (cd) -- (d);
\draw[gray!50, very thin] (a) -- (empty);
\draw[gray!50, very thin] (b) -- (empty);
\draw[gray!50, very thin] (c) -- (empty);
\draw[gray!50, very thin] (d) -- (empty);
\draw[black] plot [smooth cycle] coordinates {(0,-9) (6,-4) (0,1) (-6,-4)};
\draw[black] plot [smooth] coordinates {(-2.5,-7.9) (-2,-5.5) (-0.5,-5)( 0,-3.5) (1.5,-3) (2,-1.5) (2.5,-0.1)   };
\draw[black,dashed] plot [smooth] coordinates {(1.75,-8.4) (1.75,-2.5)   };
\draw[black,dashed] plot [smooth] coordinates {(-2,-5.5) (-2,0)   };
\begin{scope}
    \clip plot [smooth cycle] coordinates {(0,-9) (6,-4) (0,1) (-6,-4)};   
    \fill[teal, opacity=0.1] plot [smooth] coordinates {
        (-2.5,-7.9) (-2,-5.5) (-0.5,-5)( 0,-3.5) (1.5,-3) 
    } --  plot [smooth] coordinates {
       (2.5,-0.1) (1.75,-2.5)  (1.75,-8.4)  
    } -- (0,-10) --cycle;
      \fill[green, opacity=0.1] plot [smooth] coordinates {
        (1.75,-8.4) (1.75,-2.5)  (2.5,-0.1)
    } -- (6,0) -- (6,-8) -- cycle;
\end{scope}
\end{tikzpicture}
\caption{$\al{\emptyset,bcd}=\al{\emptyset,bc}- \al{d,bcd}$}
\end{subfigure}
\begin{subfigure}[t]{0.32\textwidth}
\begin{tikzpicture}[scale=0.33]
\node (abcd) at (0, 0) {$abcd$};
\node (abc) at (-3, -2) {$abc$};
\node (abd) at (-1, -2) {$abd$};
\node (acd) at (1, -2) {$acd$};
\node (bcd) at (3, -2) {$bcd$};
\node (ab) at (-5, -4) {$ab$};
\node (ac) at (-3, -4) {$ac$};
\node (ad) at (-1, -4) {$ad$};
\node (bc) at (1, -4) {$bc$};
\node (bd) at (3, -4) {$bd$};
\node (cd) at (5, -4) {$cd$};
\node (a) at (-3, -6) {$a$};
\node (b) at (-1, -6) {$b$};
\node (c) at (1, -6) {$c$};
\node (d) at (3, -6) {$d$};
\node (empty) at (0, -8) {$\emptyset$};
\draw[gray!50, very thin] (abcd) -- (abc);
\draw[gray!50, very thin] (abcd) -- (abd);
\draw[gray!50, very thin] (abcd) -- (acd);
\draw[gray!50, very thin] (abcd) -- (bcd);
\draw[gray!50, very thin] (abc) -- (ab);
\draw[gray!50, very thin] (abc) -- (ac);
\draw[gray!50, very thin] (abc) -- (bc);
\draw[gray!50, very thin] (abd) -- (ab);
\draw[gray!50, very thin] (abd) -- (ad);
\draw[gray!50, very thin] (abd) -- (bd);
\draw[gray!50, very thin] (acd) -- (ac);
\draw[gray!50, very thin] (acd) -- (ad);
\draw[gray!50, very thin] (acd) -- (cd);
\draw[gray!50, very thin] (bcd) -- (bc);
\draw[gray!50, very thin] (bcd) -- (bd);
\draw[gray!50, very thin] (bcd) -- (cd);
\draw[gray!50, very thin] (ab) -- (a);
\draw[gray!50, very thin] (ab) -- (b);
\draw[gray!50, very thin] (ac) -- (a);
\draw[gray!50, very thin] (ac) -- (c);
\draw[gray!50, very thin] (ad) -- (a);
\draw[gray!50, very thin] (ad) -- (d);
\draw[gray!50, very thin] (bc) -- (b);
\draw[gray!50, very thin] (bc) -- (c);
\draw[gray!50, very thin] (bd) -- (b);
\draw[gray!50, very thin] (bd) -- (d);
\draw[gray!50, very thin] (cd) -- (c);
\draw[gray!50, very thin] (cd) -- (d);
\draw[gray!50, very thin] (a) -- (empty);
\draw[gray!50, very thin] (b) -- (empty);
\draw[gray!50, very thin] (c) -- (empty);
\draw[gray!50, very thin] (d) -- (empty);
\draw[black] plot [smooth cycle] coordinates {(0,-9) (6,-4) (0,1) (-6,-4)};
\draw[black] plot [smooth] coordinates {(-2.5,-7.9) (-2,-5.5) (-0.5,-5)( 0,-3.5) (1.5,-3) (2,-1.5) (2.5,-0.1)   };
\draw[black,dashed] plot [smooth] coordinates {(1.75,-8.4) (1.75,-2.5)   };
\draw[black,dashed] plot [smooth] coordinates {(-2,-5.5) (-2,0)   };
\begin{scope}
    \clip plot [smooth cycle] coordinates {(0,-9) (6,-4) (0,1) (-6,-4)};
    \fill[blue, opacity=0.1] plot [smooth] coordinates {
        (-2.5,-7.9) (-2,-5.5)  (-2,0)
    } -- (-6,-1) -- (-6,-8) -- cycle;
      \fill[green, opacity=0.1] plot [smooth] coordinates {
        (1.75,-8.4) (1.75,-2.5)  (2.5,-0.1)
    } -- (6,0) -- (6,-8) -- cycle;
\end{scope}
\end{tikzpicture}
\caption{$\al{d,bcd} - \al{a,abc} =\\\al{\emptyset,bcd} - \al{\emptyset,abc}$}
\end{subfigure}
\caption{Illustration of \Cref{lem:lattice_structure}. The solid line in \Cref{fig:solid_line}, decomposes the lattice in $[\emptyset, abc] \cup  [d,abcd]$, which implies that $\al{\emptyset,abcd}= \al{\emptyset,abc} - \al{d,abcd}$. The dashed line further decomposes $[\emptyset,abc] = [\emptyset,bc] \cup [a,abc]$. The 3 figures illustrate that $\al{S,S\cup \{b,c\}}-\al{S',S'\cup \{b,c\}}\in \R^\La(2)$ for all $S,S' \subseteq \{a,d\}$.}
\label{fig:spreading}
\end{figure}
\begin{lemma}
\label{lem:lattice_structure}
	 Let $\La = [X,Y]$ be a lattice of rank $n$. Then,
\begin{inlinelist}
    \item \label{lem:lattice_decomposition} for every $T \subseteq Y \setminus X$, it holds that $\al{X,Y} \in \spn \{\al{S,S \cup T} \mid {S \subseteq Y \setminus T} \}$
    \item \label{lem:induction_step} for every $T \subseteq Y \setminus X$ with $|T|=k$, it holds that $\al{S,S \cup T}- \al{S',S' \cup T} \in \R^\La(k)$ for all $S,S' \in [X,Y \setminus T]$.
\end{inlinelist}
\end{lemma}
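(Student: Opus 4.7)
My plan is to handle both parts by the same combinatorial mechanism: exploiting the product decomposition $[X,Y] \cong [X, Y\setminus T] \times [\emptyset, T]$ induced by a fixed subset $T \subseteq Y \setminus X$, and then deriving (2) from (1) via a rank-raising trick.

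For part (1), I would start from the defining expansion of $\al{X,Y}$ and regroup its summands according to the bijection $Q \mapsto (S, Q')$ with $S \coloneqq Q \setminus T$ and $Q' \coloneqq Q \cap T$. Since $T \cap X = \emptyset$, the set $S$ ranges over $[X, Y\setminus T]$ and $Q'$ ranges independently over subsets of $T$, while the sign factorizes as $(-1)^{|S|-|X|}(-1)^{|Q'|}$. Carrying out the inner sum over $Q'$ recognizes $\al{S, S\cup T}$, giving the explicit identity
\[
\al{X,Y} \;=\; \sum_{S \in [X,\, Y\setminus T]} (-1)^{|S|-|X|}\, \al{S, S\cup T},
\]
which establishes (1). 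This step is a direct rearrangement and I foresee no difficulty.

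For part (2), my plan is to reduce to the case where $S$ and $S'$ differ by exactly one element and then extend by telescoping. If $S' = S \cup \{y\}$ with $y \in (Y\setminus X) \setminus (S \cup T)$, then $[S, S\cup T\cup\{y\}]$ is a sublattice of rank $k+1$, and applying part (1) to it with the distinguished subset being $T$ (its complement inside the sublattice is just $\{y\}$) collapses to the two-term identity
\[
\al{S, S\cup T\cup\{y\}} \;=\; \al{S, S\cup T} - \al{S\cup\{y\}, S\cup\{y\}\cup T}.
\]
Since the left-hand side has rank exactly $k+1$, it is a generator of $\R^{\La}(k)$, and the desired membership follows. For arbitrary $S, S' \in [X, Y\setminus T]$, I would connect them by a chain $S = S_0, \dots, S_m = S'$ inside $[X, Y\setminus T]$ changing one element at a time (first removing elements of $S \setminus S'$, then adding elements of $S'\setminus S$) and telescope: $\al{S, S\cup T} - \al{S', S'\cup T}$ becomes a sum of single-element differences, each already known to lie in $\R^{\La}(k)$, and subspaces are closed under addition.

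The only conceptual subtlety, rather than a genuine obstacle, is the rank-raising step: on its own $\al{S, S\cup T}$ has rank only $k$ and is not in $\R^{\La}(k)$, but the one-element difference absorbs the missing unit of rank by expanding to a rank-$(k+1)$ sublattice via the single element $y$ that distinguishes the two base sets. Once this is spotted, (2) is simply (1) applied one level higher plus a standard telescoping argument.
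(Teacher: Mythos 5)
Your argument is correct and essentially identical to the paper's: part (1) is the same regrouping of the defining sum of $\al{X,Y}$ over the sublattices $[S,S\cup T]$, and part (2) is the same reduction to a single-element difference via the rank-$(k+1)$ sublattice $[S,S\cup T\cup\{y\}]$ followed by telescoping (the paper passes through $S\cup S'$ rather than $S\cap S'$, a cosmetic difference).
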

See \Cref{fig:spreading} for a visualization  of \Cref{lem:lattice_structure}. \Cref{lem:lattice_structure} implies that  it suffices to find a  $T \subseteq Y$ such that $\asum{S,S \cup T}{F^+} = 0$ for all $S \subseteq Y \setminus T$, in order to prove that $\Sf{\La}(n-1)$. The idea of the induction step is to find a $T$ of cardinality at least $(k-1)^2+(k-1)+1$ such that $F \in \Sf{[S,S \cup T]}(k-1)$ for all $S \subseteq Y 
\setminus T$.
Then, applying the induction hypothesis to each sublattice $[S,S \cup T]$ yields $\asum{S,S \cup T}{F^+} = 0$ and hence $F^+ \in \Sf{\La}(n-1)$.

If $F \in \Sf{\La}(k)$, \Cref{lem:lattice_structure}  implies that for any $T' \subseteq Y \setminus X$ of cardinality $k$, the value $\asum{S',S'\cup T'}{F}$ is independent of $S' \subseteq Y \setminus T'$.
Hence, in this case, it suffices to find a $T$ such that $F \in \Sf{[S,S\cup T]}(k-1)$ for only one $S \subseteq Y \setminus T$, since it is equivalent to $F \in \Sf{[S,S\cup T]}(k-1)$  for all $S \subseteq Y \setminus T$.
 
    Given $F \in \Sf{\La}(k) \cap \HC{\La}$, it remains to find such $S$ and $T$. We define the \emph{support} of $F \in \Sf{\La}$ by $\supp(F) = \{S \in \La \mid F(S) \neq 0\}$ and the \emph{positive and negative support} by $\supp^+(F) = \{S \in \La \mid F(S) >0\}$ respectively $\supp^-(F) = \{S \in \La \mid F(S) <0\}$.
In particular, $F \in \HC{\La}$ implies that for  $X^+ \in \supp^+(F)$ and $X^- \in \supp^-(F)$, it holds that $F(R)=0$ for all $R \supseteq X^+ \cup X^-$. 
 
\Cref{lem:lowsubsets} says that, given that the positive and negative support are not empty, we can always ``push the elements $X^+$ and $X^-$ in the support down in the lattice'', that is, we can find elements in the supports that are of relatively low  rank. See \Cref{fig:pushing_down}
for an illustration.
  \begin{lemma}
    	\label{lem:lowsubsets}
    	Let $\La = [X,Y]$ be a lattice of rank $n$.
    	Let $F \in \Sf{\La}(k) \cap \HC{\La}$ such that $F  \not\geq 0$ and $F \not\leq 0$. Then, there are $X^- \in \La_{\leq k} \cap \supp^-(F)$ and $ X^+ \in \La_{\leq k} \cap \supp^+$ as well as $Y^- \in \La_{\geq n-k} \cap \supp^-(F)$ and $
            Y^+ \in \La_{\geq n-k} \cap \supp^+(F)$.
    \end{lemma}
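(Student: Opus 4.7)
\textbf{Proof plan for \Cref{lem:lowsubsets}.} The plan is to prove the four existence claims by taking rank-extremal elements of the supports and deriving a contradiction from the defining vanishing relations of $\Sf{\La}(k)$ if they lie in the wrong rank range. By the symmetry between $F$ and $-F$, and between the lattice $\La$ and its dual (which swaps ``low rank'' with ``high rank''), it suffices to prove just one of the four statements, say the existence of $X^- \in \La_{\leq k} \cap \supp^-(F)$.

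Let $T \in \supp^-(F)$ be chosen of \emph{minimum rank} in $\supp^-(F)$; this exists by the assumption $F \not\geq 0$. I would now show $r(T) \leq k$ by contradiction: suppose $r(T) \geq k+1$. Then I can pick some $S' \in \La$ with $X \subseteq S' \subseteq T$ and $r(S') = r(T)-(k+1)$, so that the sublattice $[S',T]$ has rank $k+1$. By the definition of $\Sf{\La}(k)$,
\[
\asum{S',T}{F} \;=\; \sum_{S' \subseteq Q \subseteq T} (-1)^{r(Q)-r(S')}\,F(Q) \;=\; 0 .
\]
The key step is to argue that every summand except the one at $Q=T$ actually vanishes. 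Indeed, for every $Q \in [S',T]$ we have $Q \subseteq T$, so the conforming property $F \in \HC{\La}$ together with $F(T) < 0$ forces $F(Q) \leq 0$. Moreover, for $Q \subsetneq T$ one has $r(Q) < r(T)$, so by minimality of $r(T)$ in $\supp^-(F)$ we get $F(Q) \geq 0$. Combining these two inequalities gives $F(Q) = 0$ for all $Q \in [S',T] \setminus \{T\}$. Substituting into the displayed equation yields $(-1)^{k+1} F(T) = 0$, contradicting $F(T) < 0$. Hence $r(T) \leq k$, which provides the required $X^- := T$.

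The other three cases are entirely analogous: the element $X^+$ is obtained by applying the same argument to $-F$ (which also lies in $\Sf{\La}(k) \cap \HC{\La}$); and the high-rank elements $Y^-, Y^+$ are obtained by taking $T$ of \emph{maximum} rank in the relevant support and choosing a sublattice $[T,T']$ of rank $k+1$ above $T$, where now the conforming property together with maximality collapses the alternating sum to a single nonzero term $F(T)$.

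I do not anticipate any essential obstacle here; the only things to be careful about are (i) verifying that the sublattices $[S',T]$ and $[T,T']$ really exist inside $\La = [X,Y]$, which follows from the rank assumptions $r(T) \geq k+1$ and $r(T) \leq n-k-1$ respectively, and (ii) correctly handling the sign $(-1)^{r(Q)-r(S')}$ in $\al{S',T}$, which is immaterial for the contradiction since only one summand survives. The result then follows.
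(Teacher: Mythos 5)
Your proof is correct and uses essentially the same idea as the paper: combining $F\in\HC{\La}$ (signs don't flip along chains) with the vanishing of $\asum{\cdot,\cdot}{F}$ on rank-$(k{+}1)$ sublattices forces the alternating sum to collapse to a single nonzero term, a contradiction. The paper packages this as an iterative descent from an arbitrary $R\in\supp^+(F)$ down to rank $\leq k$, using $\asum{X,R}{F}=0$ for the (possibly larger) interval $[X,R]$, whereas you phrase it as minimality-plus-contradiction on a rank-exactly-$(k{+}1)$ interval, which is a slightly more self-contained variant; the mathematical content is the same.
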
  
\begin{figure}
    \centering
\begin{subfigure}[t]{0.49\textwidth}
\begin{tikzpicture}[scale=0.33][
  every node/.style={draw, circle, minimum size=0.8cm, inner sep=2pt},
  every edge/.style={draw, gray, very thin}]
\node (abcde) at (0, 0) {abcde};
\node (abcd) at (-4, -2) {$\overline{e}$};
\node (abce) at (-2, -2) {$\overline{d}$};
\node (abde) at (0, -2) {$\overline{c}$};
\node (acde) at (2, -2) {$\overline{b}$};
\node (bcde) at (4, -2) {$\overline{a}$};
\node (abc) at (-9, -4) {abc};
\node (abd) at (-7, -4) {abd};
\node (abe) at (-5, -4) {abe};
\node (acd) at (-3, -4) {acd};
\node (ace) at (-1, -4) {ace};
\node (ade) at (1, -4) {ade};
\node (bcd) at (3, -4) {bcd};
\node (bce) at (5, -4) {bce};
\node (bde) at (7, -4) {bde};
\node (cde) at (9, -4) {cde};
\node (ab) at (-9, -6) {ab};
\node (ac) at (-7, -6) {ac};
\node (ad) at (-5, -6) {ad};
\node (ae) at (-3, -6) {ae};
\node (bc) at (-1, -6) {bc};
\node (bd) at (1, -6) {bd};
\node (be) at (3, -6) {be};
\node (cd) at (5, -6) {cd};
\node (ce) at (7, -6) {ce};
\node (de) at (9, -6) {de};
\node (a) at (-4, -8) {a};
\node (b) at (-2, -8) {b};
\node (c) at (0, -8) {c};
\node (d) at (2, -8) {d};
\node (e) at (4, -8) {e};
\node[draw, circle, thick, red, fit=(a), inner sep=0.5pt] {};
\node[draw, circle, thick, dashed, blue, fit=(b), inner sep=0pt] {};
\node (empty) at (0, -10) {$\emptyset$};
\draw[gray!50, very thin] (abcde) -- (abcd);
\draw[gray!50, very thin] (abcde) -- (abce);
\draw[gray!50, very thin] (abcde) -- (abde);
\draw[gray!50, very thin] (abcde) -- (acde);
\draw[gray!50, very thin] (abcde) -- (bcde);
\draw[gray!50, very thin] (abcd) -- (abc);
\draw[gray!50, very thin] (abcd) -- (abd);
\draw[gray!50, very thin] (abcd) -- (acd);
\draw[gray!50, very thin] (abcd) -- (bcd);
\draw[gray!50, very thin] (abce) -- (abc);
\draw[gray!50, very thin] (abce) -- (abe);
\draw[gray!50, very thin] (abce) -- (ace);
\draw[gray!50, very thin] (abce) -- (bce);
\draw[gray!50, very thin] (abde) -- (abd);
\draw[gray!50, very thin] (abde) -- (abe);
\draw[gray!50, very thin] (abde) -- (ade);
\draw[gray!50, very thin] (abde) -- (bde);
\draw[gray!50, very thin] (acde) -- (acd);
\draw[gray!50, very thin] (acde) -- (ace);
\draw[gray!50, very thin] (acde) -- (ade);
\draw[gray!50, very thin] (acde) -- (cde);
\draw[gray!50, very thin] (bcde) -- (bcd);
\draw[gray!50, very thin] (bcde) -- (bce);
\draw[gray!50, very thin] (bcde) -- (bde);
\draw[gray!50, very thin] (bcde) -- (cde);
\draw[gray!50, very thin] (abc) -- (ab);
\draw[gray!50, very thin] (abc) -- (ac);
\draw[gray!50, very thin] (abc) -- (bc);
\draw[gray!50, very thin] (abd) -- (ab);
\draw[gray!50, very thin] (abd) -- (ad);
\draw[gray!50, very thin] (abd) -- (bd);
\draw[gray!50, very thin] (abe) -- (ab);
\draw[gray!50, very thin] (abe) -- (ae);
\draw[gray!50, very thin] (abe) -- (be);
\draw[gray!50, very thin] (acd) -- (ac);
\draw[gray!50, very thin] (acd) -- (ad);
\draw[gray!50, very thin] (acd) -- (cd);
\draw[gray!50, very thin] (ace) -- (ac);
\draw[gray!50, very thin] (ace) -- (ae);
\draw[gray!50, very thin] (ace) -- (ce);
\draw[gray!50, very thin] (ade) -- (ad);
\draw[gray!50, very thin] (ade) -- (ae);
\draw[gray!50, very thin] (ade) -- (de);
\draw[gray!50, very thin] (bcd) -- (bc);
\draw[gray!50, very thin] (bcd) -- (bd);
\draw[gray!50, very thin] (bcd) -- (cd);
\draw[gray!50, very thin] (bce) -- (bc);
\draw[gray!50, very thin] (bce) -- (be);
\draw[gray!50, very thin] (bce) -- (ce);
\draw[gray!50, very thin] (bde) -- (bd);
\draw[gray!50, very thin] (bde) -- (be);
\draw[gray!50, very thin] (bde) -- (de);
\draw[gray!50, very thin] (cde) -- (cd);
\draw[gray!50, very thin] (cde) -- (ce);
\draw[gray!50, very thin] (cde) -- (de);
\draw[gray!50, very thin] (ab) -- (a);
\draw[gray!50, very thin] (ab) -- (b);
\draw[gray!50, very thin] (ac) -- (a);
\draw[gray!50, very thin] (ac) -- (c);
\draw[gray!50, very thin] (ad) -- (a);
\draw[gray!50, very thin] (ad) -- (d);
\draw[gray!50, very thin] (ae) -- (a);
\draw[gray!50, very thin] (ae) -- (e);
\draw[gray!50, very thin] (bc) -- (b);
\draw[gray!50, very thin] (bc) -- (c);
\draw[gray!50, very thin] (bd) -- (b);
\draw[gray!50, very thin] (bd) -- (d);
\draw[gray!50, very thin] (be) -- (b);
\draw[gray!50, very thin] (be) -- (e);
\draw[gray!50, very thin] (cd) -- (c);
\draw[gray!50, very thin] (cd) -- (d);
\draw[gray!50, very thin] (ce) -- (c);
\draw[gray!50, very thin] (ce) -- (e);
\draw[gray!50, very thin] (de) -- (d);
\draw[gray!50, very thin] (de) -- (e);
\draw[gray!50, very thin] (a) -- (empty);
\draw[gray!50, very thin] (b) -- (empty);
\draw[gray!50, very thin] (c) -- (empty);
\draw[gray!50, very thin] (d) -- (empty);
\draw[gray!50, very thin] (e) -- (empty);
\draw[black] plot [smooth cycle] coordinates {
    (0,-11) (9,-7) (10,-5) (9,-3) (0,1) (-9,-3) (-10,-5) (-9,-7)
};
\draw[black,] plot [smooth] coordinates {
    (-8,-7.6) (-8,-6) (-7,-5) (-5,-5) (-4,-4) (-3,-3) (0,-3) (1,-2) (2,0.6) 
}; 
\begin{scope}
    \clip plot [smooth cycle] coordinates {
        (0,-11) (9,-7) (10,-5) (9,-3) (0,1) (-9,-3) (-10,-5) (-9,-7)
    };
    \fill[green,opacity=0.1] plot [smooth] coordinates {
        (-8,-7.6) (-8,-6) (-7,-5) (-5,-5) (-4,-4) (-3,-3) (0,-3) (1,-2) (2,0.6)
    } -- (2,0.6) -- (0,1) -- (-10,1) -- (-10,-8) -- (-8,-7.6) -- cycle;
\end{scope}
\end{tikzpicture}
\caption{}
\label{fig:low_subsets} 
\end{subfigure}
\begin{subfigure}[t]{0.46\textwidth}
    \begin{tikzpicture}[scale=0.33]
\node (abcde) at (0, 0) {abcde};
\node (abcd) at (-4, -2) {$\overline{e}$};
\node (abce) at (-2, -2) {$\overline{d}$};
\node (abde) at (0, -2) {$\overline{c}$};
\node (acde) at (2, -2) {$\overline{b}$};
\node (bcde) at (4, -2) {$\overline{a}$};
\node (abc) at (-9, -4) {abc};
\node (abd) at (-7, -4) {abd};
\node (abe) at (-5, -4) {abe};
\node (acd) at (-3, -4) {acd};
\node (ace) at (-1, -4) {ace};
\node (ade) at (1, -4) {ade};
\node (bcd) at (3, -4) {bcd};
\node (bce) at (5, -4) {bce};
\node (bde) at (7, -4) {bde};
\node (cde) at (9, -4) {cde};
\node (ab) at (-9, -6) {ab};
\node (ac) at (-7, -6) {ac};
\node (ad) at (-5, -6) {ad};
\node (ae) at (-3, -6) {ae};
\node (bc) at (-1, -6) {bc};
\node (bd) at (1, -6) {bd};
\node (be) at (3, -6) {be};
\node (cd) at (5, -6) {cd};
\node (ce) at (7, -6) {ce};
\node (de) at (9, -6) {de};
\node (a) at (-4, -8) {a};
\node (b) at (-2, -8) {b};
\node (c) at (0, -8) {c};
\node (d) at (2, -8) {d};
\node (e) at (4, -8) {e};
\node (empty) at (0, -10) {$\emptyset$};
\draw[gray!50, very thin] (abcde) -- (abcd);
\draw[gray!50, very thin] (abcde) -- (abce);
\draw[gray!50, very thin] (abcde) -- (abde);
\draw[gray!50, very thin] (abcde) -- (acde);
\draw[gray!50, very thin] (abcde) -- (bcde);
\draw[gray!50, very thin] (abcd) -- (abc);
\draw[gray!50, very thin] (abcd) -- (abd);
\draw[gray!50, very thin] (abcd) -- (acd);
\draw[gray!50, very thin] (abcd) -- (bcd);
\draw[gray!50, very thin] (abce) -- (abc);
\draw[gray!50, very thin] (abce) -- (abe);
\draw[gray!50, very thin] (abce) -- (ace);
\draw[gray!50, very thin] (abce) -- (bce);
\draw[gray!50, very thin] (abde) -- (abd);
\draw[gray!50, very thin] (abde) -- (abe);
\draw[gray!50, very thin] (abde) -- (ade);
\draw[gray!50, very thin] (abde) -- (bde);
\draw[gray!50, very thin] (acde) -- (acd);
\draw[gray!50, very thin] (acde) -- (ace);
\draw[gray!50, very thin] (acde) -- (ade);
\draw[gray!50, very thin] (acde) -- (cde);
\draw[gray!50, very thin] (bcde) -- (bcd);
\draw[gray!50, very thin] (bcde) -- (bce);
\draw[gray!50, very thin] (bcde) -- (bde);
\draw[gray!50, very thin] (bcde) -- (cde);
\draw[gray!50, very thin] (abc) -- (ab);
\draw[gray!50, very thin] (abc) -- (ac);
\draw[gray!50, very thin] (abc) -- (bc);
\draw[gray!50, very thin] (abd) -- (ab);
\draw[gray!50, very thin] (abd) -- (ad);
\draw[gray!50, very thin] (abd) -- (bd);
\draw[gray!50, very thin] (abe) -- (ab);
\draw[gray!50, very thin] (abe) -- (ae);
\draw[gray!50, very thin] (abe) -- (be);
\draw[gray!50, very thin] (acd) -- (ac);
\draw[gray!50, very thin] (acd) -- (ad);
\draw[gray!50, very thin] (acd) -- (cd);
\draw[gray!50, very thin] (ace) -- (ac);
\draw[gray!50, very thin] (ace) -- (ae);
\draw[gray!50, very thin] (ace) -- (ce);
\draw[gray!50, very thin] (ade) -- (ad);
\draw[gray!50, very thin] (ade) -- (ae);
\draw[gray!50, very thin] (ade) -- (de);
\draw[gray!50, very thin] (bcd) -- (bc);
\draw[gray!50, very thin] (bcd) -- (bd);
\draw[gray!50, very thin] (bcd) -- (cd);
\draw[gray!50, very thin] (bce) -- (bc);
\draw[gray!50, very thin] (bce) -- (be);
\draw[gray!50, very thin] (bce) -- (ce);
\draw[gray!50, very thin] (bde) -- (bd);
\draw[gray!50, very thin] (bde) -- (be);
\draw[gray!50, very thin] (bde) -- (de);
\draw[gray!50, very thin] (cde) -- (cd);
\draw[gray!50, very thin] (cde) -- (ce);
\draw[gray!50, very thin] (cde) -- (de);
\draw[gray!50, very thin] (ab) -- (a);
\draw[gray!50, very thin] (ab) -- (b);
\draw[gray!50, very thin] (ac) -- (a);
\draw[gray!50, very thin] (ac) -- (c);
\draw[gray!50, very thin] (ad) -- (a);
\draw[gray!50, very thin] (ad) -- (d);
\draw[gray!50, very thin] (ae) -- (a);
\draw[gray!50, very thin] (ae) -- (e);
\draw[gray!50, very thin] (bc) -- (b);
\draw[gray!50, very thin] (bc) -- (c);
\draw[gray!50, very thin] (bd) -- (b);
\draw[gray!50, very thin] (bd) -- (d);
\draw[gray!50, very thin] (be) -- (b);
\draw[gray!50, very thin] (be) -- (e);
\draw[gray!50, very thin] (cd) -- (c);
\draw[gray!50, very thin] (cd) -- (d);
\draw[gray!50, very thin] (ce) -- (c);
\draw[gray!50, very thin] (ce) -- (e);
\draw[gray!50, very thin] (de) -- (d);
\draw[gray!50, very thin] (de) -- (e);
\draw[gray!50, very thin] (a) -- (empty);
\draw[gray!50, very thin] (b) -- (empty);
\draw[gray!50, very thin] (c) -- (empty);
\draw[gray!50, very thin] (d) -- (empty);
\draw[gray!50, very thin] (e) -- (empty);
\draw[black] plot [smooth cycle] coordinates {
    (0,-11) (9,-7) (10,-5) (9,-3) (0,1) (-9,-3) (-10,-5) (-9,-7)
};
\draw[black,] plot [smooth] coordinates {
    (-1,-10.9) (-1,-8) (0,-7) (3,-7) (5,-5) (7,-5) (9,-3) 
};
\begin{scope}
    \clip plot [smooth cycle] coordinates {
        (0,-11) (9,-7) (10,-5) (9,-3) (0,1) (-9,-3) (-10,-5) (-9,-7)
    };
    \fill[blue, opacity=0.1] plot [smooth] coordinates {
        (-1,-11) (-1,-8) (0,-7) (3,-7) (5,-5) (7,-5) (9,-3) (12,-2)
    } -- (9,-11) -- (-1,-11) -- cycle;
\end{scope}
\draw[black,] plot [smooth] coordinates {
    (-3,-10.25) (-3,-8) (-2,-6) (-1,-5) (1,-5) (2,-4) (3,-2) (4,-0.25) 
}; 
\begin{scope}
    \clip plot [smooth cycle] coordinates {
        (0,-11) (9,-7) (10,-5) (9,-3) (0,1) (-9,-3) (-10,-5) (-9,-7)
    };  
    \fill[teal,opacity=0.1] plot [smooth] coordinates {
         (-3,-10.25) (-3,-8) (-2,-6) (-1,-5) (1,-5) (2,-4) (3,-2) (4,-0.25) (5,0.25)
    } -- plot [smooth] coordinates {
        (9,-3) (7,-5) (5,-5) (3,-7) (0,-7) (-1,-8) (-1,-11)
    } -- cycle;
\end{scope}
\draw[black,] plot [smooth] coordinates {
    (-8,-7.6) (-8,-6) (-7,-5) (-5,-5) (-4,-4) (-3,-3) (0,-3) (1,-2) (2,0.6) 
}; 
\begin{scope}
    \clip plot [smooth cycle] coordinates {
        (0,-11) (9,-7) (10,-5) (9,-3) (0,1) (-9,-3) (-10,-5) (-9,-7)
    }; 
    \fill[green,opacity=0.1] plot [smooth] coordinates {
        (-8,-7.6) (-8,-6) (-7,-5) (-5,-5) (-4,-4) (-3,-3) (0,-3) (1,-2) (2,0.6)
    } -- (2,0.6) -- (0,1) -- (-10,1) -- (-10,-8) -- (-8,-7.6) -- cycle;
\end{scope}
\begin{scope}
    \clip plot [smooth cycle] coordinates {
        (0,-11) (9,-7) (10,-5) (9,-3) (0,1) (-9,-3) (-10,-5) (-9,-7)
    };    
    \fill[yellow,opacity=0.1]  plot [smooth] coordinates {
    (-8,-7.6) (-8,-6) (-7,-5) (-5,-5) (-4,-4) (-3,-3) (0,-3) (1,-2) (2,0.6)}--(5,0) -- plot [smooth] coordinates { (4,-0.25) (3,-2)  (2,-4) (1,-5) (-1,-5)  (-2,-6) (-3,-8)
    (-3,-10.25) } --cycle;
\end{scope}
\end{tikzpicture}
\caption{}
\label{subfig:decomposition}
\label{fig:decomposition}
\end{subfigure}
    \caption{An illustration of the induction step. Let $Y= \{a,b,c,d,e\},X = \emptyset, \La =[X,Y]$ and $F \in \Sf{\La}(2) \cap \HC{\La}$. If $F(a) < 0$ and $F(b) > 0$, then it follows that $F(R)$ for all $R \in [S,S \cup T]$ for $S= ab$ and $T= cde$ (\Cref{fig:low_subsets}).  In particular, $F \in \Sf{S,S  \cup T}(1)$ and thus, by \Cref{lem:lattice_structure}, it holds that $F \in \Sf{S',S'  \cup T}(1)$  for all $S' \subseteq Y \setminus T$. \\ \Cref{subfig:decomposition} shows the decomposition of the lattice $\La=[X,Y]$ for $T= \{c,d,e\}$ into the sublattices $[S,S\cup T]$ for all $S \subseteq Y \setminus T$. For every such sublattice we have that $F \in \Sf{[S,S\cup T]}(1) \cap \HC{[S,S\cup T]}$ and thus by induction $\asum{S,S\cup T}{F^+}=0$.
     }
\end{figure}
Let $S = X^+ \cup X^-$, then $F \in \HC{\La}$ implies that for $T=Y \setminus S$, we have that $F(R) = 0$ for all $R \in [S,S\cup T]$. In particular, it holds that $F \in \Sf{[S,S\cup T]}(k-1)$. Thus, by \Cref{lem:lattice_structure}, if $F \in \Sf{\La}(k)$, it follows that $F \in \Sf{[S',S'\cup T]}(k-1)$ for all $S' \subseteq Y\setminus T'$. Since $|S|$ is at most $2k$ it follows by counting that if $n \geq (k^2+k+1)$, the cardinality of $T$ is at least $(k-1)^2+(k-1)+1$. This allows to apply the inductions hypothesis to all sublattices $[S',S'\cup T]$ for $S'\subseteq Y \setminus T$, resulting in the following proposition. See also \Cref{fig:decomposition} for an illustration of the induction.
\begin{proposition}
\label{prop:maxout_quadratic_bound}
For $k \in \N$, let $\La=[X,Y]$ be a lattice of rank $n \geq k^2+k+1$ and $F \in \Sf{\La}(k) \cap \HC{\La}$. Then it holds that $\asum{X,Y}{F^+}=0$  
\end{proposition}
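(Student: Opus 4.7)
The plan is to induct on $k$. The induction hypothesis at level $k-1$ will be applied to each sublattice of the form $[S, S \cup T]$ for a judiciously chosen $T \subseteq Y \setminus X$; \Cref{lem:lowsubsets} supplies one such sublattice on which $F$ vanishes identically, and the structural identities in \Cref{lem:lattice_structure} will both propagate the vanishing of the relevant alternating sums to every other sublattice of that form and, at the end, decompose $\al{X,Y}$ as a linear combination of the $\al{S, S\cup T}$.

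For the base case $k=1$, \Cref{lem:indBase_maxout} gives $F^+ \in \Sf{\La}(2)$; since $n \geq 3$, picking any $T' \subseteq Y \setminus X$ with $|T'|=3$ and applying the first part of \Cref{lem:lattice_structure} expresses $\al{X,Y}$ as a linear combination of the $\al{S, S\cup T'}$, each of which annihilates $F^+$ by definition of $\Sf{\La}(2)$. For the induction step with $F \in \Sf{\La}(k) \cap \HC{\La}$ and $n \geq k^2+k+1$, the cases $F \geq 0$ (where $F^+ = F \in \Sf{\La}(k)$, and the same decomposition argument now with $|T'|=k+1$ applies) and $F \leq 0$ (where $F^+ \equiv 0$) are immediate. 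Otherwise, \Cref{lem:lowsubsets} produces $X^+, X^- \in \La_{\leq k}$ of opposite signs; set $W \coloneqq X^+ \cup X^-$ (so $|W \setminus X| \leq 2k$) and $T \coloneqq Y \setminus W \subseteq Y \setminus X$ (so $|T| \geq n - 2k \geq (k-1)^2+(k-1)+1$).

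The central verification is that for every $S \in [X, W]$ the restriction $F|_{[S, S\cup T]}$ lies in $\Sf{[S, S\cup T]}(k-1) \cap \HC{[S, S\cup T]}$. Membership in $\HC{[S, S\cup T]}$ is inherited from $\HC{\La}$. For the other condition, given $U, V \in [S, S\cup T]$ with $r(V) - r(U) = k$, put $T' \coloneqq V \setminus U$: then $|T'| = k$ and $T' \cap W = \emptyset$, so the second part of \Cref{lem:lattice_structure} combined with $F \in \Sf{\La}(k)$ forces $\asum{U, V}{F} = \asum{W, W \cup T'}{F}$. The right-hand side vanishes because $X^+, X^- \subseteq W$ and $F \in \HC{\La}$ together force $F|_{[W, Y]} \equiv 0$. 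Once this is established, the induction hypothesis applied to each sublattice $[S, S\cup T]$ gives $\asum{S, S \cup T}{F^+} = 0$, and a final application of the first part of \Cref{lem:lattice_structure} with the set $T$ decomposes $\al{X, Y}$ as a linear combination of the $\al{S, S\cup T}$ for $S \in [X, W]$, yielding $\asum{X, Y}{F^+} = 0$.

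The hard part is this anchor-shift argument: transferring the local vanishing of $F$ on $[W, Y]$ to every other translated sublattice $[S, S\cup T]$ through $\Sf{\La}(k)$-membership. This is where the combinatorial conforming condition and the linear condition interlock nontrivially, and where the quantitative bound $n \geq k^2+k+1$ becomes essential: it is exactly what guarantees that the sublattices produced by the construction all have rank large enough for the induction hypothesis to apply.
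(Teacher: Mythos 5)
Your proof is correct and follows essentially the same route as the paper's: induction on $k$, disposing of the one-signed cases, using \Cref{lem:lowsubsets} to find low-rank $X^+,X^-$, anchoring the vanishing of $F$ on $[X^+\cup X^-, Y]$, propagating it to every parallel sublattice $[S,S\cup T]$ via \Cref{lem:lattice_structure}(2) together with $F\in\Sf{\La}(k)$, invoking the induction hypothesis on each, and reassembling with \Cref{lem:lattice_structure}(1). The only cosmetic difference is that you spell out the linear-combination argument in the base case and in the $F\geq 0$ subcase, which the paper leaves implicit.
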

Applying \Cref{prop:maxout_quadratic_bound} to every sublattice of rank $k^2+k+1$ allows to sharpen the bound.
    \begin{proposition}
    \label{prop:maxout_quadratic_bound2}
   Let $\La$ be a lattice and $k \in \N$, then it holds that $\mathcal{A}(\Sf{\La}(k)) \subseteq \Sf{\La}(k^2+k)$.
        \end{proposition}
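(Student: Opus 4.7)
The plan is to deduce the inclusion from \Cref{prop:maxout_quadratic_bound} by localizing to sublattices of rank $k^2+k+1$. Since $\Sf{\La}(k^2+k)$ is the linear subspace cut out by the conditions $\asum{S,T}{\cdot}=0$ for all sublattices $[S,T]$ of rank $k^2+k+1$, and $\mathcal{A}(\Sf{\La}(k))$ is spanned by functions of the form $F^+$ with $F\in\Sf{\La}(k)\cap\HC{\La}$, it is enough to show that for any such generator $F^+$ and any sublattice $[S,T]\subseteq\La$ of rank $k^2+k+1$ we have $\asum{S,T}{F^+}=0$.

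Fix such $F$ and $[S,T]$, and pass to the restriction $G \coloneqq F|_{[S,T]}$, viewed as an element of $\Sf{[S,T]}$. Two routine observations make this legitimate. First, conformance descends: every chain in $[S,T]$ is a chain in $\La$, so the conformance of $F$ implies $G\in\HC{[S,T]}$. Second, the rank-$k$ vanishing condition descends: every sublattice of $[S,T]$ of rank $k+1$ is already a sublattice of $\La$ of rank $k+1$, and because $\al{S',T'}$ is supported on $[S',T']$, the linear functional $\asum{S',T'}{\cdot}$ takes the same value on $G$ as on $F$, which vanishes by hypothesis. Hence $G \in \Sf{[S,T]}(k)\cap\HC{[S,T]}$.

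Since $G^+$ agrees with $F^+$ on $[S,T]$ and $\al{S,T}$ is supported on $[S,T]$, we have $\asum{S,T}{F^+}=\asum{S,T}{G^+}$. The ambient lattice $[S,T]$ has rank exactly $k^2+k+1$, so \Cref{prop:maxout_quadratic_bound} applied to $G$ on this sublattice directly yields $\asum{S,T}{G^+}=0$, completing the argument. The substantive combinatorial work has already been done in the proof of \Cref{prop:maxout_quadratic_bound}; the proposed step is a straightforward restriction-to-sublattice argument, and the only thing to verify carefully is that both the conformance property and membership in $\Sf{\La}(k)$ behave well under restriction, which is immediate from the definitions. I do not expect any real obstacle beyond this bookkeeping.
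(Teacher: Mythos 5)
Your proof is correct and takes essentially the same route as the paper: both arguments reduce to a rank-$(k^2+k+1)$ sublattice, observe that membership in $\Sf{\cdot}(k)\cap\HC{\cdot}$ is preserved under restriction, apply \Cref{prop:maxout_quadratic_bound} there, and conclude by linearity of $\Sf{\La}(k^2+k)$. You simply make the ``restriction is harmless'' bookkeeping more explicit than the paper does.
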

   Translating this result back to the CPWL functions and applying the argument iteratively for a rank-$2$-maxout network, layer by layer, we obtain the following theorem.
    \begin{theorem}
    \label{thm:maxout_doubly_exponential}
        For a number of layers $\ell \in \N$, it holds that $\mtwo{\B_d}{\ell} \subseteq \FB{d}(2^{2^{\ell}-1})$.
    \end{theorem}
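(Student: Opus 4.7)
The plan is a straightforward induction on $\ell$ with \Cref{prop:maxout_quadratic_bound2} as the engine, followed by a short numerical estimate. First I would translate the statement to the set function side via the isomorphism $\Phi$. By \Cref{lem:layer_description} we have $\mtwo{\B_d}{\ell} = \mathcal{A}^\ell(\FB{d}(1))$, and by \Cref{lem:indBase_maxout} the operator $\mathcal{A}$ commutes with $\Phi$, so using $\Phi(\FB{d}(k)) = \Sf{d}(k)$ from \Cref{prop:mobius_inversion}, it suffices to prove that $\mathcal{A}^\ell(\Sf{d}(1)) \subseteq \Sf{d}(2^{2^\ell - 1})$ and then apply $\Phi^{-1}$.

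Next, I would define the sequence $a_0 := 1$ and $a_{\ell+1} := a_\ell^2 + a_\ell$, and show by induction on $\ell$ that $\mathcal{A}^\ell(\Sf{d}(1)) \subseteq \Sf{d}(a_\ell)$. The base case is the identity $\Sf{d}(1) = \Sf{d}(1)$. For the inductive step, assume $\mathcal{A}^{\ell-1}(\Sf{d}(1)) \subseteq \Sf{d}(a_{\ell-1})$. Since $\mathcal{A}$ is monotone in its input (immediate from the definition of $\mathcal{A}(U)$ as a span), applying $\mathcal{A}$ to both sides yields $\mathcal{A}^\ell(\Sf{d}(1)) \subseteq \mathcal{A}(\Sf{d}(a_{\ell-1}))$, and \Cref{prop:maxout_quadratic_bound2} bounds the right-hand side by $\Sf{d}(a_{\ell-1}^2 + a_{\ell-1}) = \Sf{d}(a_\ell)$.

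It remains to verify the closed-form estimate $a_\ell \leq 2^{2^\ell - 1}$. The base case $a_0 = 1 = 2^{2^0 - 1}$ holds with equality. For the induction step, note $a_\ell + 1 \leq 2 a_\ell$ since $a_\ell \geq 1$, so
\[
a_{\ell+1} \;=\; a_\ell(a_\ell + 1) \;\leq\; 2 a_\ell^2 \;\leq\; 2 \cdot \bigl(2^{2^\ell - 1}\bigr)^2 \;=\; 2^{\,2^{\ell+1} - 1}.
\]
Combining everything gives $\mathcal{A}^\ell(\Sf{d}(1)) \subseteq \Sf{d}(a_\ell) \subseteq \Sf{d}(2^{2^\ell - 1})$, and pulling back through $\Phi$ yields the desired inclusion $\mtwo{\B_d}{\ell} \subseteq \FB{d}(2^{2^\ell - 1})$.

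I do not expect a serious obstacle: all the substantive combinatorial work is already absorbed into \Cref{prop:maxout_quadratic_bound2}, and the remainder is bookkeeping plus an elementary exponential estimate. The only small items to be careful about are the monotonicity of $\mathcal{A}$ in its argument (so that the induction hypothesis can be fed into \Cref{prop:maxout_quadratic_bound2}) and the trivial case where $2^{2^\ell - 1} \geq d$, in which $\FB{d}(2^{2^\ell - 1}) = \FB{d}$ and the inclusion is vacuous.
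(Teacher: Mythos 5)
Your proposal is correct and is essentially the same as the paper's proof: both run the induction on $\ell$ using \Cref{lem:layer_description}, \Cref{lem:indBase_maxout}, and \Cref{prop:maxout_quadratic_bound2} as the engine, track the recursion $k \mapsto k^2+k$, and close with the identical estimate $k^2+k \leq 2k^2$ to obtain the doubly-exponential bound. Your index convention starts one step earlier ($a_0=1$ rather than $k_1=2$) and you make the monotonicity of $\mathcal{A}$ and the translation via $\Phi$ explicit, but these are cosmetic differences.
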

    \begin{corollary}\label{cor:loglog}
       The function $x \mapsto \{0,x_1,\ldots,x_{2^{2^\ell-1}}\}$ is not computable by a $\B^0_d$-conforming ReLU neural network with $\ell$ hidden layers.
    \end{corollary}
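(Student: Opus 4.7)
The plan is to deduce the corollary directly from Theorem \ref{thm:maxout_doubly_exponential} together with the equivalence lemma and the basis description of $\FB{d}$. First I would set up notation: let $d = 2^{2^\ell-1}$, and suppose for contradiction that $f(x) = \max\{0, x_1, \ldots, x_d\}$ is computable by a $\B_d^0$-conforming ReLU network with $\ell$ hidden layers. Since a ReLU neuron implements a rank-2-maxout unit via $\max\{0,y\}$, the function $f$ lies in the set of $\B_d^0$-conforming rank-$\mathbf{r}$-maxout networks for $\mathbf{r} = (2,\ldots,2) \in \N^\ell$.

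Next I would apply Lemma \ref{lem:equivalence_assumptions} to translate the problem to the $\B_{d+1}$ setting: the existence of such a $\B_d^0$-conforming ReLU network computing $\max\{0,x_1,\ldots,x_d\}$ is equivalent to the existence of a $\B_{d+1}$-conforming rank-2-maxout network with $\ell$ hidden layers computing $g(x) = \max\{x_1, \ldots, x_{d+1}\}$. In the language introduced earlier, this means $g \in \mtwo{\B_{d+1}}{\ell}$.

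Now I would apply Theorem \ref{thm:maxout_doubly_exponential} to the dimension $d+1$, obtaining $\mtwo{\B_{d+1}}{\ell} \subseteq \FB{d+1}(2^{2^\ell-1}) = \FB{d+1}(d)$. So it remains to show $g \notin \FB{d+1}(d)$. This is where I would invoke the basis $\{\sigma_M \mid M \subseteq [d+1]\}$ of $\FB{d+1}$ from the discussion preceding Proposition \ref{prop:mobius_inversion}: the function $g$ is exactly $\sigma_{[d+1]}$, a basis element indexed by a set of cardinality $d+1$. Since $\FB{d+1}(d) = \spn\{\sigma_M \mid |M| \leq d\}$ and the $\sigma_M$ form a basis, $\sigma_{[d+1]}$ is linearly independent from $\FB{d+1}(d)$, hence $g \notin \FB{d+1}(d)$. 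This contradicts the chain of inclusions above and completes the proof.

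The deduction is short because all of the technical work lives in Theorem \ref{thm:maxout_doubly_exponential}; the only subtle points are (i) remembering to pass through Lemma \ref{lem:equivalence_assumptions} so that the maxout theorem (phrased over $\B_d$) can be applied, and (ii) justifying $\sigma_{[d+1]} \notin \FB{d+1}(d)$ via linear independence of the basis $\{\sigma_M\}$ rather than by a direct CPWL argument. No additional obstacle arises.
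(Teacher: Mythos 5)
Your proposal is correct and follows essentially the same route the paper intends: translate via Lemma \ref{lem:equivalence_assumptions} to the $\B_{d+1}$ setting, apply Theorem \ref{thm:maxout_doubly_exponential} to conclude $\mtwo{\B_{d+1}}{\ell}\subseteq\FB{d+1}(d)$, and observe that $\max\{x_1,\ldots,x_{d+1}\}=\sigma_{[d+1]}$ lies outside $\FB{d+1}(d)$ because $\{\sigma_M\}$ is a basis. The only difference is cosmetic: the paper's text labelled as the proof of this corollary actually performs the rank-$r_i$-to-rank-$2$ reduction for the preceding maxout corollary and treats the ReLU version as a direct special case, whereas you spell out the linear-independence step explicitly, which is a welcome clarification.
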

\section{Combinatorial proof for dimension four}
\label{sec:d=4}
In this section, we prove that the function $\max \{0, x_1 , \ldots, x_4\}$ cannot be computed by a $\B^0_d$-conforming rank-$(2,2)$-maxout networks or equivalently ReLU neural networks with $2$ hidden layers. This completely classfies the set of functions computable by $\B_d$-conforming ReLU neural networks with $2$ hidden layers.

If $\La$ is a lattice of rank $5$ and $F\in \Sf{\La}(2) \cap \HC{\La}$, we know by \Cref{lem:lowsubsets}, given that the supports of $F$ are not empty, that there are $X^+ \in \La_2 \cap \supp^+(F)$ and $X^- \in \La_2 \cap \supp^-(F)$. We first argue that in the special case of rank $5$ we can even assume that there are $X^+ \in \La_1 \cap \supp^+(F)$ and $X^- \in \La_1 \cap \supp^-(F)$. Then, with analogous arguments as in \Cref{sec:loglog}, we prove that $F^+ \in \Sf{\La}(4)$, resulting in the sharp bound for rank-$(2,2)$-maxout networks.
 \begin{figure}
    \centering
\begin{subfigure}[t]{0.49\textwidth}
    \begin{tikzpicture}[scale=0.33][
  every node/.style={draw, circle, minimum size=0.8cm, inner sep=2pt},
  every edge/.style={draw, gray, very thin}
]
\node (abcde) at (0, 0) {abcde};
\node (abcd) at (-4, -2) {$\overline{e}$};
\node (abce) at (-2, -2) {$\overline{d}$};
\node (abde) at (0, -2) {$\overline{c}$};
\node (acde) at (2, -2) {$\overline{b}$};
\node (bcde) at (4, -2) {$\overline{a}$};
\node (abc) at (-9, -4) {abc};
\node (abd) at (-7, -4) {abd};
\node (abe) at (-5, -4) {abe};
\node (acd) at (-3, -4) {acd};
\node (ace) at (-1, -4) {ace};
\node (ade) at (1, -4) {ade};
\node (bcd) at (3, -4) {bcd};
\node (bce) at (5, -4) {bce};
\node (bde) at (7, -4) {bde};
\node (cde) at (9, -4) {cde};
\node (ab) at (-9, -6) {ab};
\node (ac) at (-7, -6) {ac};
\node (ad) at (-5, -6) {ad};
\node (ae) at (-3, -6) {ae};
\node (bc) at (-1, -6) {bc};
\node (bd) at (1, -6) {bd};
\node (be) at (3, -6) {be};
\node (cd) at (5, -6) {cd};
\node (ce) at (7, -6) {ce};
\node (de) at (9, -6) {de};
\node (a) at (-4, -8) {a};
\node (b) at (-2, -8) {b};
\node (c) at (0, -8) {c};
\node (d) at (2, -8) {d};
\node (e) at (4, -8) {e};
\node[draw, circle, thick, red, fit=(ab), inner sep=-2.5pt] {};
\node[draw, circle, thick, red, fit=(abc), inner sep=-3.5pt] {};
\node[draw, circle, thick, red, fit=(ce), inner sep=-2.5pt] {};
\node[draw, circle, thick, red, fit=(bce), inner sep=-3.5pt] {};
\node[draw, circle, thick, dashed, blue, fit=(ae), inner sep=-2.5pt] {};
\node[draw, circle, thick, dashed, blue, fit=(ace), inner sep=-3.5pt] {};
\node[draw, circle, thick, dashed, blue, fit=(cd), inner sep=-2.5pt] {};
\node[draw, circle, thick, dashed, blue, fit=(bcd), inner sep=-3.5pt] {};
\node (empty) at (0, -10) {$\emptyset$};
\draw[gray!50, very thin] (abcde) -- (abcd);
\draw[gray!50, very thin] (abcde) -- (abce);
\draw[gray!50, very thin] (abcde) -- (abde);
\draw[gray!50, very thin] (abcde) -- (acde);
\draw[gray!50, very thin] (abcde) -- (bcde);
\draw[gray!50, very thin] (abcd) -- (abc);
\draw[gray!50, very thin] (abcd) -- (abd);
\draw[gray!50, very thin] (abcd) -- (acd);
\draw[gray!50, very thin] (abcd) -- (bcd);
\draw[gray!50, very thin] (abce) -- (abc);
\draw[gray!50, very thin] (abce) -- (abe);
\draw[gray!50, very thin] (abce) -- (ace);
\draw[gray!50, very thin] (abce) -- (bce);
\draw[gray!50, very thin] (abde) -- (abd);
\draw[gray!50, very thin] (abde) -- (abe);
\draw[gray!50, very thin] (abde) -- (ade);
\draw[gray!50, very thin] (abde) -- (bde);
\draw[gray!50, very thin] (acde) -- (acd);
\draw[gray!50, very thin] (acde) -- (ace);
\draw[gray!50, very thin] (acde) -- (ade);
\draw[gray!50, very thin] (acde) -- (cde);
\draw[gray!50, very thin] (bcde) -- (bcd);
\draw[gray!50, very thin] (bcde) -- (bce);
\draw[gray!50, very thin] (bcde) -- (bde);
\draw[gray!50, very thin] (bcde) -- (cde);
\draw[red] (abc) -- (ab);
\draw[gray!50, very thin] (abc) -- (ac);
\draw[gray!50, very thin] (abc) -- (bc);
\draw[gray!50, very thin] (abd) -- (ab);
\draw[gray!50, very thin] (abd) -- (ad);
\draw[gray!50, very thin] (abd) -- (bd);
\draw[gray!50, very thin] (abe) -- (ab);
\draw[gray!50, very thin] (abe) -- (ae);
\draw[gray!50, very thin] (abe) -- (be);
\draw[gray!50, very thin] (acd) -- (ac);
\draw[gray!50, very thin] (acd) -- (ad);
\draw[gray!50, very thin] (acd) -- (cd);
\draw[gray!50, very thin] (ace) -- (ac);
\draw[blue] (ace) -- (ae);
\draw[gray!50, very thin] (ace) -- (ce);
\draw[gray!50, very thin] (ade) -- (ad);
\draw[gray!50, very thin] (ade) -- (ae);
\draw[gray!50, very thin] (ade) -- (de);
\draw[gray!50, very thin] (bcd) -- (bc);
\draw[gray!50, very thin] (bcd) -- (bd);
\draw[blue] (bcd) -- (cd);
\draw[gray!50, very thin] (bce) -- (bc);
\draw[gray!50, very thin] (bce) -- (be);
\draw[red] (bce) -- (ce);
\draw[gray!50, very thin] (bde) -- (bd);
\draw[gray!50, very thin] (bde) -- (be);
\draw[gray!50, very thin] (bde) -- (de);
\draw[gray!50, very thin] (cde) -- (cd);
\draw[gray!50, very thin] (cde) -- (ce);
\draw[gray!50, very thin] (cde) -- (de);
\draw[gray!50, very thin] (ab) -- (a);
\draw[gray!50, very thin] (ab) -- (b);
\draw[gray!50, very thin] (ac) -- (a);
\draw[gray!50, very thin] (ac) -- (c);
\draw[gray!50, very thin] (ad) -- (a);
\draw[gray!50, very thin] (ad) -- (d);
\draw[gray!50, very thin] (ae) -- (a);
\draw[gray!50, very thin] (ae) -- (e);
\draw[gray!50, very thin] (bc) -- (b);
\draw[gray!50, very thin] (bc) -- (c);
\draw[gray!50, very thin] (bd) -- (b);
\draw[gray!50, very thin] (bd) -- (d);
\draw[gray!50, very thin] (be) -- (b);
\draw[gray!50, very thin] (be) -- (e);
\draw[gray!50, very thin] (cd) -- (c);
\draw[gray!50, very thin] (cd) -- (d);
\draw[gray!50, very thin] (ce) -- (c);
\draw[gray!50, very thin] (ce) -- (e);
\draw[gray!50, very thin] (de) -- (d);
\draw[gray!50, very thin] (de) -- (e);
\draw[gray!50, very thin] (a) -- (empty);
\draw[gray!50, very thin] (b) -- (empty);
\draw[gray!50, very thin] (c) -- (empty);
\draw[gray!50, very thin] (d) -- (empty);
\draw[gray!50, very thin] (e) -- (empty);
\draw[black] plot [smooth cycle] coordinates {
    (0,-11) (9,-7) (10,-5) (9,-3) (0,1) (-9,-3) (-10,-5) (-9,-7)
};
\end{tikzpicture}
\end{subfigure}
\begin{subfigure}[t]{0.46\textwidth}
\begin{tikzpicture}[scale=0.33]
\node (abcde) at (0, 0) {abcde};
\node (abcd) at (-4, -2) {$\overline{e}$};
\node (abce) at (-2, -2) {$\overline{d}$};
\node (abde) at (0, -2) {$\overline{c}$};
\node (acde) at (2, -2) {$\overline{b}$};
\node (bcde) at (4, -2) {$\overline{a}$};
\node (abc) at (-9, -4) {abc};
\node (abd) at (-7, -4) {abd};
\node (abe) at (-5, -4) {abe};
\node (acd) at (-3, -4) {acd};
\node (ace) at (-1, -4) {ace};
\node (ade) at (1, -4) {ade};
\node (bcd) at (3, -4) {bcd};
\node (bce) at (5, -4) {bce};
\node (bde) at (7, -4) {bde};
\node (cde) at (9, -4) {cde};
\node (ab) at (-9, -6) {ab};
\node (ac) at (-7, -6) {ac};
\node (ad) at (-5, -6) {ad};
\node (ae) at (-3, -6) {ae};
\node (bc) at (-1, -6) {bc};
\node (bd) at (1, -6) {bd};
\node (be) at (3, -6) {be};
\node (cd) at (5, -6) {cd};
\node (ce) at (7, -6) {ce};
\node (de) at (9, -6) {de};
\node (a) at (-4, -8) {a};
\node (b) at (-2, -8) {b};
\node (c) at (0, -8) {c};
\node (d) at (2, -8) {d};
\node (e) at (4, -8) {e};
\node (empty) at (0, -10) {$\emptyset$};
\draw[gray!50, very thin] (abcde) -- (abcd);
\draw[gray!50, very thin] (abcde) -- (abce);
\draw[gray!50, very thin] (abcde) -- (abde);
\draw[gray!50, very thin] (abcde) -- (acde);
\draw[gray!50, very thin] (abcde) -- (bcde);
\draw[gray!50, very thin] (abcd) -- (abc);
\draw[gray!50, very thin] (abcd) -- (abd);
\draw[gray!50, very thin] (abcd) -- (acd);
\draw[gray!50, very thin] (abcd) -- (bcd);
\draw[gray!50, very thin] (abce) -- (abc);
\draw[gray!50, very thin] (abce) -- (abe);
\draw[gray!50, very thin] (abce) -- (ace);
\draw[gray!50, very thin] (abce) -- (bce);
\draw[gray!50, very thin] (abde) -- (abd);
\draw[gray!50, very thin] (abde) -- (abe);
\draw[gray!50, very thin] (abde) -- (ade);
\draw[gray!50, very thin] (abde) -- (bde);
\draw[gray!50, very thin] (acde) -- (acd);
\draw[gray!50, very thin] (acde) -- (ace);
\draw[gray!50, very thin] (acde) -- (ade);
\draw[gray!50, very thin] (acde) -- (cde);
\draw[gray!50, very thin] (bcde) -- (bcd);
\draw[gray!50, very thin] (bcde) -- (bce);
\draw[gray!50, very thin] (bcde) -- (bde);
\draw[gray!50, very thin] (bcde) -- (cde);
\draw[gray!50, very thin] (abc) -- (ab);
\draw[gray!50, very thin] (abc) -- (ac);
\draw[gray!50, very thin] (abc) -- (bc);
\draw[gray!50, very thin] (abd) -- (ab);
\draw[gray!50, very thin] (abd) -- (ad);
\draw[gray!50, very thin] (abd) -- (bd);
\draw[gray!50, very thin] (abe) -- (ab);
\draw[gray!50, very thin] (abe) -- (ae);
\draw[gray!50, very thin] (abe) -- (be);
\draw[gray!50, very thin] (acd) -- (ac);
\draw[gray!50, very thin] (acd) -- (ad);
\draw[gray!50, very thin] (acd) -- (cd);
\draw[gray!50, very thin] (ace) -- (ac);
\draw[gray!50, very thin] (ace) -- (ae);
\draw[gray!50, very thin] (ace) -- (ce);
\draw[gray!50, very thin] (ade) -- (ad);
\draw[gray!50, very thin] (ade) -- (ae);
\draw[gray!50, very thin] (ade) -- (de);
\draw[gray!50, very thin] (bcd) -- (bc);
\draw[gray!50, very thin] (bcd) -- (bd);
\draw[gray!50, very thin] (bcd) -- (cd);
\draw[gray!50, very thin] (bce) -- (bc);
\draw[gray!50, very thin] (bce) -- (be);
\draw[gray!50, very thin] (bce) -- (ce);
\draw[gray!50, very thin] (bde) -- (bd);
\draw[gray!50, very thin] (bde) -- (be);
\draw[gray!50, very thin] (bde) -- (de);
\draw[gray!50, very thin] (cde) -- (cd);
\draw[gray!50, very thin] (cde) -- (ce);
\draw[gray!50, very thin] (cde) -- (de);
\draw[gray!50, very thin] (ab) -- (a);
\draw[gray!50, very thin] (ab) -- (b);
\draw[gray!50, very thin] (ac) -- (a);
\draw[gray!50, very thin] (ac) -- (c);
\draw[gray!50, very thin] (ad) -- (a);
\draw[gray!50, very thin] (ad) -- (d);
\draw[gray!50, very thin] (ae) -- (a);
\draw[gray!50, very thin] (ae) -- (e);
\draw[gray!50, very thin] (bc) -- (b);
\draw[gray!50, very thin] (bc) -- (c);
\draw[gray!50, very thin] (bd) -- (b);
\draw[gray!50, very thin] (bd) -- (d);
\draw[gray!50, very thin] (be) -- (b);
\draw[gray!50, very thin] (be) -- (e);
\draw[gray!50, very thin] (cd) -- (c);
\draw[gray!50, very thin] (cd) -- (d);
\draw[gray!50, very thin] (ce) -- (c);
\draw[gray!50, very thin] (ce) -- (e);
\draw[gray!50, very thin] (de) -- (d);
\draw[gray!50, very thin] (de) -- (e);
\draw[gray!50, very thin] (a) -- (empty);
\draw[gray!50, very thin] (b) -- (empty);
\draw[gray!50, very thin] (c) -- (empty);
\draw[gray!50, very thin] (d) -- (empty);
\draw[gray!50, very thin] (e) -- (empty);
\draw[black] plot [smooth cycle] coordinates {
    (0,-11) (9,-7) (10,-5) (9,-3) (0,1) (-9,-3) (-10,-5) (-9,-7)
};
\node[draw, circle, thick, red, fit=(a), inner sep=-0.5pt] {};
\node[draw, circle, thick, blue, dashed ,fit=(bcde), inner sep=-2pt] {};
\begin{scope}
    \clip plot [smooth cycle] coordinates {
        (0,-11) (9,-7) (10,-5) (9,-3) (0,1) (-9,-3) (-10,-5) (-9,-7)
    };    
    \fill[blue, opacity=0.1] plot [smooth] coordinates {
        (-1,-11) (-1,-8) (0,-7) (3,-7) (5,-5) (7,-5) (9,-3) (12,-2)
    } -- (9,-11) -- (-1,-11) -- cycle;
\end{scope}
\draw[black,] plot [smooth] coordinates {
    (-3,-10.25) (-3,-8) (-2,-6) (-1,-5) (1,-5) (2,-4) (3,-2) (4,-0.25) 
}; 
\begin{scope}
    \clip plot [smooth cycle] coordinates {
        (0,-11) (9,-7) (10,-5) (9,-3) (0,1) (-9,-3) (-10,-5) (-9,-7)
    };   
    \fill[blue,opacity=0.1] plot [smooth] coordinates {
         (-3,-10.25) (-3,-8) (-2,-6) (-1,-5) (1,-5) (2,-4) (3,-2) (4,-0.25) (5,0.25)
    } -- plot [smooth] coordinates {
        (9,-3) (7,-5) (5,-5) (3,-7) (0,-7) (-1,-8) (-1,-11)
    } -- cycle;
\end{scope}
\begin{scope}
    \clip plot [smooth cycle] coordinates {
        (0,-11) (9,-7) (10,-5) (9,-3) (0,1) (-9,-3) (-10,-5) (-9,-7)
    };   
    \fill[red,opacity=0.1] plot [smooth] coordinates {
        (-8,-7.6) (-8,-6) (-7,-5) (-5,-5) (-4,-4) (-3,-3) (0,-3) (1,-2) (2,0.6)
    } -- (2,0.6) -- (0,1) -- (-10,1) -- (-10,-8) -- (-8,-7.6) -- cycle;
\end{scope}
\begin{scope}
    \clip plot [smooth cycle] coordinates {
        (0,-11) (9,-7) (10,-5) (9,-3) (0,1) (-9,-3) (-10,-5) (-9,-7)
    };   
    \fill[red,opacity=0.1]  plot [smooth] coordinates {
    (-8,-7.6) (-8,-6) (-7,-5) (-5,-5) (-4,-4) (-3,-3) (0,-3) (1,-2) (2,0.6)}--(5,0) -- plot [smooth] coordinates { (4,-0.25) (3,-2)  (2,-4) (1,-5)(-1,-5)  (-2,-6) (-3,-8) (-3,-10.25) } --cycle;
\end{scope}
\end{tikzpicture}
\end{subfigure}
\caption{An illustration of \Cref{lem:pairing} (left) and \Cref{lem:case1-4} (right). If $\supp(F)\subseteq \La_2 \cup \La_3$, then we can match every $S \in \La_2$ with a $T \in \La_3$ such that $F(T)=F(S)$ which implies $\asum{\emptyset,abcde}{F^+}= \sum_{S \in \La_2}F^+(S) - \sum_{T \in \La_3} F^+(T)=0$. If $F(a) <0$ and $F(bcde) > 0$, then it holds that $\asum{\emptyset,abcde}{F}= \asum{\emptyset,bcde}{F}=0$.}
    \label{fig:4d}
\end{figure}   

If the positive support of a function $F 
\in \Sf{\La}(2) \cap \HC{\La}$ is contained in the levels $\La_2$ and $\La_3$, then for every $S \in \supp^+(F) \cap \La_2$ there must be a $T \in \supp^+(F) \cap \La_3$ such that $T \supseteq S$ and $F(S) \leq F(T)$ since $\asum{S,Y}{F}=0$. Applying the same argument to $T$, we conclude that $F(S) = F(T)$ and that there are no further subsets in $\supp^+(F)$ that are comparable to $S$ or $T$. Thus, we can match the subsets $S \in \La_2$ with the subsets $T \in \La_3$ such that $F(S) = F(T)$ and hence it follows that $\asum{X,Y}{F^+}= \sum_{S \in \La_2}F^+(S) - \sum_{T \in \La_3} F^+(T)=0$. By symmetry, the same holds if $\supp^-(F) \subseteq \La_2 \cup \La_3$.  See \Cref{fig:4d} for an illustration. Following this idea, we state the lemma for a more general case.
 \begin{lemma}
    	\label{lem:pairing}
    	Let $\La = [X,Y]$ be a lattice of rank $n$ and $F \in \Sf{\La}(k) \cap \HC{\La}$ with $n \geq 2k+1$. If there are   $i,j \in [n]_0$ such that $\supp^+(F) \subseteq \La_i \cup \La_j$ or $\supp^-(F) \subseteq \La_i \cup \La_j$, then it holds that $F^+ \in \Sf{\La}(n-1)$.
      \end{lemma}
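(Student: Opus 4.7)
The plan is to exploit the fact that on a lattice of rank $n$ the subspace $\Sf{\La}(n-1)$ is cut out by the single linear functional $\asum{X,Y}{\cdot}$, so the conclusion reduces to showing $\asum{X,Y}{F^+} = 0$. Two easy reductions set up the main argument. First, if $\supp^-(F) = \emptyset$ then $F^+ = F$, and the containment $\Sf{\La}(k) \subseteq \Sf{\La}(n-1)$ (which follows from \Cref{lem:lattice_structure}(1) by expressing $\al{X,Y}$ through rank-$(k+1)$ alphas) yields the claim; if $\supp^+(F) = \emptyset$ then $F^+ = 0$ is trivial. Second, the symmetric case $\supp^-(F) \subseteq \La_i \cup \La_j$ reduces to the $\supp^+$ case via the identity $F^+ = F + (-F)^+$, using $\asum{X,Y}{F} = 0$ and $\supp^+(-F) = \supp^-(F)$.

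In the main case both $\supp^+(F)$ and $\supp^-(F)$ are nonempty and $\supp^+(F) \subseteq \La_i \cup \La_j$ with $i \leq j$. By \Cref{lem:lowsubsets}, $\supp^+(F)$ meets both $\La_{\leq k}$ and $\La_{\geq n-k}$; since $n \geq 2k+1$ prevents any single level from meeting both regions, the two supporting levels must satisfy $i \leq k$ and $j \geq n-k$. Consequently $n-i \geq k+1$ and $j \geq k+1$, so $\al{S,Y}, \al{X,T} \in \R^\La(k)$ for every $S \in \La_i$ and $T \in \La_j$, and $F$ is annihilated by both linear functionals.

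The heart of the argument is a value-preserving bijection between $\La_i \cap \supp^+(F)$ and $\La_j \cap \supp^+(F)$. For $S \in \La_i \cap \supp^+(F)$, the condition $F \in \HC{\La}$ with $F(S) > 0$ forces $F \geq 0$ on every $Q \supseteq S$, and combined with $\supp^+(F) \subseteq \La_i \cup \La_j$ this collapses the expansion of $\asum{S,Y}{F} = 0$ to
\[
F(S) + (-1)^{j-i} \sum_{\substack{T \in \La_j \cap \supp^+(F) \\ T \supseteq S}} F(T) = 0.
\]
All terms of the sum being strictly positive forces $j-i$ to be odd, in which case $F(S) = \sum_{T \supseteq S} F(T)$. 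An analogous expansion of $\asum{X,T}{F} = 0$ (using $F \geq 0$ below $T$) gives $F(T) = \sum_{S' \subseteq T,\, S' \in \La_i \cap \supp^+(F)} F(S')$. Any $T$ appearing in the first sum then satisfies $F(T) \leq F(S)$, while $S$ sits in the sum for $T$ so $F(T) \geq F(S)$; equality collapses both sums to single terms, yielding a bijection $S \leftrightarrow T$ with $F(S) = F(T)$.

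Finally the bijection produces
\[
\asum{X,Y}{F^+} = (-1)^i \sum_{S \in \La_i \cap \supp^+(F)} F(S) + (-1)^j \sum_{T \in \La_j \cap \supp^+(F)} F(T) = 0,
\]
since $(-1)^i + (-1)^j = 0$ when $j-i$ is odd. The main obstacle is establishing the bijection cleanly: the cancellations require simultaneously using $\HC{\La}$ to suppress all contributions outside $\{i,j\}$ in the two two-sided expansions and using $\Sf{\La}(k)$ to provide the rank-$(k+1)$ annihilation, while the parity condition $j-i$ odd emerges as the hidden consistency requirement that enables the final cancellation.
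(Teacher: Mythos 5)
Your proof is correct and follows essentially the same route as the paper: reduce to showing $\asum{X,Y}{F^+}=0$, use \Cref{lem:lowsubsets} to pin $i\leq k$ and $j\geq n-k$, establish a value-preserving bijection between $\La_i\cap\supp^+(F)$ and $\La_j\cap\supp^+(F)$ via the annihilators $\asum{S,Y}{F}=0$ and $\asum{X,T}{F}=0$ together with $F\in\HC{\La}$, and conclude by sign cancellation because $j-i$ is odd. Your derivation of the bijection through the paired inequalities $F(T)\leq F(S)$ and $F(T)\geq F(S)$ is a slightly cleaner organization of the paper's existence-plus-uniqueness argument, but the underlying mechanism is identical.
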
 
    If there is a $X^+ \in \La_1 \cap \supp^+(F)$ and a $X^- \in \La_4 \cap \supp^-(F)$, then it holds that $\asum{X,Y}{F^+}=\asum{X^+,Y}{F}=0$ (\Cref{fig:4d} and \Cref{lem:case1-4} in the appendix). Thus we can assume that there are $X^+ \in \La_1 \cap \supp^+(F)$ and $X^- \in \La_1 \cap \supp^-(F)$. By proceeding analogously as in \Cref{sec:loglog}, we prove the following theorem.
    \begin{theorem}
    \label{thm:maxout4d}
          It holds that $\mtwo{\B_d}{2} =\FB{d}(4)$. In particular, the function $x \mapsto \{0,x_1,\ldots,x_{4}\}$ is not computable by a $\B^0_d$-conforming ReLU neural network with $2$ hidden layers.
    \end{theorem}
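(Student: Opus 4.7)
The inclusion $\FB{d}(4) \subseteq \mtwo{\B_d}{2}$ is immediate from \Cref{prop:inclusion_of_conforming} applied with $\mathbf{r} = (2,2)$; the work lies in the converse. By \Cref{lem:layer_description} we have $\mtwo{\B_d}{2} = \mathcal{A}(\FB{d}(2))$, and via the isomorphism $\Phi$ together with \Cref{lem:indBase_maxout} the problem reduces to showing $\mathcal{A}(\Sf{\La}(2)) \subseteq \Sf{\La}(4)$ for the boolean lattice $\La = 2^{[d]}$. \Cref{lem:wlog_zero_map} then allows one to replace $\max\{f_1,f_2\}$ by $f^+$, so it is enough to verify that $F^+ \in \Sf{\La}(4)$ for every $F \in \Sf{\La}(2) \cap \HC{\La}$. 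Mimicking the sublattice argument from the proof of \Cref{prop:maxout_quadratic_bound2}, this reduces further to a pointwise statement: for any $\La = [X,Y]$ of rank exactly $5$ and any $F \in \Sf{\La}(2) \cap \HC{\La}$, one has $\asum{X,Y}{F^+} = 0$.

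The plan is to distinguish cases so as to locate, in the non-trivial case, $X^+, X^- \in \La_1$ with $F(X^+) > 0$ and $F(X^-) < 0$. If one of $\supp^+(F), \supp^-(F)$ is empty then $F$ has constant sign and $F^+ \in \Sf{\La}(2)$ immediately. Otherwise, since $\emptyset$ and $Y$ are comparable to every element of $\La$, the $\HC{\La}$-condition forces $F(\emptyset) = F(Y) = 0$, so $\supp(F) \subseteq \La_1 \cup \La_2 \cup \La_3 \cup \La_4$. If $\supp^+(F)$ or $\supp^-(F)$ meets only two of these levels, \Cref{lem:pairing} settles the claim at once. Otherwise each support meets at least three of the four levels; I then case-split on whether each support meets $\La_1$. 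If both meet $\La_1$, the desired $X^+, X^- \in \La_1$ appear directly; if exactly one meets $\La_1$, the other is forced to meet $\La_4$ and \Cref{lem:case1-4} applies; if neither meets $\La_1$, both must meet $\La_4$, and I reduce to the first sub-case by passing to the dualized function $\widetilde F(S) := F(Y \setminus S)$, which preserves $\HC{\La}$ and $\Sf{\La}(2)$ while swapping $\La_i \leftrightarrow \La_{5-i}$.

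In the decisive situation with $X^+, X^- \in \La_1$ of opposite signs, set $S := X^+ \cup X^-$ so that $r(S) = 2$ and $T := Y \setminus S$ so that $|T| = 3$. The $\HC{\La}$-condition forces $F \equiv 0$ on $[S, S \cup T] = [S, Y]$. To push this vanishing into every sublattice $[S', S' \cup T]$ with $S' \subseteq S$, I would combine \Cref{lem:lattice_structure}(2) with $F \in \Sf{\La}(2)$: for any $A \subseteq T$ and any $B \subseteq T \setminus A$ with $|B| = 2$, the difference $\al{S' \cup A, S' \cup A \cup B} - \al{S \cup A, S \cup A \cup B}$ lies in $\R^\La(2)$, and pairing with $F$ yields $\asum{S' \cup A, S' \cup A \cup B}{F} = \asum{S \cup A, S \cup A \cup B}{F} = 0$. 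Hence $F \in \Sf{[S', S' \cup T]}(1) \cap \HC{[S', S' \cup T]}$ for every $S' \subseteq S$, and \Cref{lem:indBase_maxout} applied to each rank-$3$ sublattice delivers $\asum{S', S' \cup T}{F^+} = 0$. Finally, \Cref{lem:lattice_structure}(1) with $|T| = 3$ expresses $\al{X,Y}$ as a linear combination of the $\al{S', S' \cup T}$, and so $\asum{X,Y}{F^+} = 0$ by linearity.

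The main obstacle I anticipate is the reduction step in the middle paragraph: sharpening the a priori conclusion $X^\pm \in \La_{\leq 2}$ supplied by \Cref{lem:lowsubsets} down to $X^\pm \in \La_1$ cannot be done uniformly, and genuinely requires the combined leverage of \Cref{lem:pairing}, \Cref{lem:case1-4}, and the self-duality of the boolean lattice. Once that reduction is secured, the spreading-and-base-case mechanism is structurally the same as the induction step in \Cref{prop:maxout_quadratic_bound2}, tightened only by the specific choice $|T| = 3$ that rank $5$ makes available.
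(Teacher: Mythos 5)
Your proposal is correct and follows essentially the same route as the paper: reduce via $\Phi$, \Cref{lem:layer_description}, \Cref{lem:wlog_zero_map}, and \Cref{lem:indBase_maxout} to the pointwise statement $\asum{X,Y}{F^+}=0$ on rank-$5$ lattices, locate $X^+,X^-\in\La_1$ via \Cref{lem:pairing} and \Cref{lem:case1-4}, and then run the spreading argument of \Cref{prop:maxout_quadratic_bound} with $S=X^+\cup X^-$ and $|T|=3$ to land in rank-$3$ sublattices where \Cref{lem:indBase_maxout} closes the argument. The one place you go beyond the paper's wording is in spelling out the case reduction to $\La_1$: the paper invokes \Cref{lem:pairing} and \Cref{lem:case1-4} in a single sentence, whereas you make explicit both the pigeonhole over levels (using $F(\emptyset)=F(Y)=0$ from $\HC{\La}$) and the complementation map $\widetilde F(S)=F(Y\setminus S)$ for the case where neither support meets $\La_1$; this duality step (and the analogous replacement $F\leftrightarrow -F$ to cover $X^-\in\La_1$, $Y^+\in\La_4$) is needed to make the reduction airtight and is a genuine, if small, clarification of what the paper leaves implicit.
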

\section{The unimaginable power of maxouts}
\label{sec:3-2}
By \Cref{prop:inclusion_of_conforming}, all functions in $\FB{d}(\prod_{i=1}^\ell r_i)$ are representable by a $\B_d$-conforming rank-$\mathbf{r}$-maxout network. In \Cref{sec:d=4}, we have seen that this bound is tight for the rank vector $(2,2)$. 
In this section, we prove that this bound in general is not tight by demonstrating that  the function $x \mapsto \{0,x_1,\ldots,x_6\}$ is computable by a $\B_d^0$-conforming rank-$(3,2)$-maxout network. 
    \begin{proposition}
    \label{prop:outof6space}
        Let $f_1,f_2 \in \FB{7}(3)$ be the functions given by \begin{align*}
            f_1 &= 2 \cdot \sigma_{\{1,2\}} + \sigma_{\{1,4,5\}} + \sigma_{\{1,6,7\}} + \sigma_{\{2,4,6\}} + \sigma_{\{2,5,7\}} \\
            f_2 &=  \sigma_{\{3,4,5\}} + \sigma_{\{3,6,7\}} + \sigma_{\{1,2,4\}} + \sigma_{\{1,2,5\}
        } + \sigma_{\{1,2,6\}} + \sigma_{\{1,2,7\}}
        \end{align*}
         Then it holds that $\max\{f_1,f_2\}  \in \FB{7}(7) \setminus \FB{7}(6)$.
    \end{proposition}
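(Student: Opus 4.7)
The plan is to translate the question to set functions via the isomorphism $\Phi$. Setting $F_i := \Phi(f_i)$, \Cref{prop:iso_cpwl_to_setfunction} together with the fact that $\sigma_M(\ind_S)$ equals $1$ when $M \cap S \neq \emptyset$ and $0$ otherwise makes $F_1$ and $F_2$ explicit integer-valued set functions on $2^{[7]}$ that can be tabulated by inspection. The statement then splits into (i) showing $(F_1, F_2) \in \HC{7}^2$, which by \Cref{lem:maxout_conforming_setfunc} certifies $\max\{f_1, f_2\} \in \FB{7} = \FB{7}(7)$; and (ii) showing $\alpha := \asum{\emptyset, [7]}{\max\{F_1, F_2\}} \neq 0$. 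Step (ii) is the correct certificate because by \Cref{prop:mobius_inversion} the coefficient of $\sigma_{[7]}$ in the $\sigma$-basis expansion of any $f \in \FB{7}$ equals $-\asum{\emptyset,[7]}{\Phi(f)}$, so membership in $\FB{7}(6)$ is precisely the vanishing of this alternating sum.

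For step (i), set $D := F_2 - F_1$ and observe that conformity is equivalent to the absence of a comparable pair $S \subsetneq T \subseteq [7]$ with $D(S) D(T) < 0$: any such pair would obstruct conformity on a maximal chain through $S$ and $T$, while the absence of such pairs means $D$ has a fixed weak sign along any chain, and the corresponding $F_j$ achieves the maximum throughout. To verify this efficiently, I would exploit the Klein four-group of symmetries of the pair $(f_1, f_2)$ generated by the double transpositions $(45)(67)$, $(46)(57)$, $(47)(56)$, which preserve both $f_1$ and $f_2$ individually; the induced action on $2^{[7]}$ cuts the $128$ subsets down to a modest number of orbits, reducing the sign-pattern check for $D$ to a concrete finite case analysis.

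For step (ii), since $f_1 \in \FB{7}(3) \subseteq \FB{7}(6)$, \Cref{prop:mobius_inversion} forces $\asum{\emptyset,[7]}{F_1} = 0$. Using the identity $\max\{F_1, F_2\} = F_1 + \max\{0, D\}$, the alternating sum collapses to
\[
\alpha = \sum_{S \,:\, D(S) > 0} (-1)^{|S|} D(S),
\]
supported entirely on the positive region of $D$ identified in step (i); I would evaluate this explicit sum and exhibit a small nonzero integer.

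The chief obstacle is step (i): conformity is a global condition on all comparable pairs with no obvious inductive shortcut of the kind used in \Cref{sec:loglog}. The correctness of the proposition hinges on the specific combinatorial design of $f_1$ and $f_2$ --- the triples $\{1,4,5\}, \{1,6,7\}, \{2,4,6\}, \{2,5,7\}, \{3,4,5\}, \{3,6,7\}$ together with the transversals $\{1,2,i\}$ for $i \in \{4,5,6,7\}$ reflect the parallel-class structure of the affine plane $AG(2,2)$ on $\{4,5,6,7\}$ --- and it is this structure that forces any sign change of $D$ to pass through zero, preventing chain-level conflict. Once the symmetry reduction is performed, the remainder of the proof is finite arithmetic.
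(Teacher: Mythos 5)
Your plan is correct and follows essentially the same route as the paper: translate via $\Phi$ to set functions on $2^{[7]}$, reduce conformity of the pair $(F_1, F_2)$ to the condition that $D = F_2 - F_1$ has no comparable pair of opposite signs (which is exactly $D \in \HC{7}$, hence $(F_1,F_2) \in \HC{7}^2$), and certify $\max\{f_1,f_2\} \notin \FB{7}(6)$ by showing $\asum{\emptyset,[7]}{\max\{F_1,F_2\}} \neq 0$, using $\asum{\emptyset,[7]}{F_1} = 0$ and the identity $\max\{F_1,F_2\} = F_1 + D^+$. The one place you diverge is how the finite verification is organized. The paper partitions $2^{[7]}$ into the eight sublattices $[X,\,X \cup\{4,5,6,7\}]$ indexed by $X \subseteq \{1,2,3\}$ and shows that $F_1$ or $F_2$ uniformly dominates on each of them except $[1,\overline{23}]$; this localizes the sign changes of $D$ to a single rank-4 interval plus a handful of corner elements, which both yields conformity and makes the top alternating sum evaluate to $-2$ by inspection (it reduces to $\asum{12,\overline{3}}{F} - F(145) - F(167)$ with $F = -D$, the first term vanishing because $F \in \Sf{7}(3)$). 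You propose instead to exploit the Klein four-group $\langle(45)(67),(46)(57),(47)(56)\rangle$, which does indeed fix $f_1$ and $f_2$ --- a genuine symmetry the paper does not use --- to shrink the case analysis. Both devices are sound; the paper's sublattice decomposition has the additional virtue of exhibiting exactly \emph{where} $D$ changes sign, whereas the orbit count under the Klein group still leaves you with a fairly large check on comparable pairs. The proposal, however, stops at the plan: both the conformity verification and the evaluation of the alternating sum are deferred to ``finite arithmetic,'' which is precisely the substance the paper supplies. Also, a small caution on your $AG(2,2)$ remark: the triples attached to $1$ and to $3$ use the same parallel class $\{45,67\}$, so the design is not a balanced cover of all three parallel classes as your phrasing suggests.
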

   \begin{proof}[Proof Sketch]
           Let $F_1 = \Phi(f_1)$ and $F_2 = \Phi(f_2)$.
           We write $i_1 \cdots i_n$ for $\{i_1, \ldots, i_n\}$  and $\overline{i_1 \cdots i_n}$ for $[7] \setminus \{i_1,\ldots, i_n\}$ and note that the sublattices 
           $[12,\overline{3}], [13,\overline{2}], [23,\overline{1}],  
           [3,\overline{12}], [2,\overline{13}],$ $ [1,\overline{23}], [\emptyset,\overline{123}],[123,[7]]$ form a partition of $[\emptyset,[7]]$.

           We first show that on any of the above sublattices except $[1,\overline{23}]$, either $F_1$ or $F_2$ attains the maximum on all elements of the sublattice and that for $F \coloneqq F_1 -F_2$ it holds that $\supp^+(F) \subseteq [1,\overline{23}] \cup 146 \cup 167$ and $\asum{[\emptyset,[7]]}{F^+} =\asum{12,\overline{3}}{F}-F(146) - F(167) = -2$ and thus $F^+ \in \Sf{\La} \setminus \Sf{\La}(6)$.
             Then by looking at the partition into sublattices, we argue that $F \in \HC{\La}$ and thus by \Cref{lem:maxout_conforming_setfunc}, we conclude that $\max \{f_1,f_2\} \in \FB{7}\setminus\FB{7}(6)$.
              \end{proof}
 Hence $\max \{f_1,f_2\} = \sum_{M \subseteq [7]} \lambda_m \sigma_M$ with $\lambda_{[7]} \neq 0$ and since all functions in $\FB{d}(6)$ are computable by a rank-$(3,2)$-maxout network, we conclude that $x \mapsto \{x_1,\ldots,x_7\}$ is computable by a rank-$(3,2)$-maxout network or equivalently:
    \begin{theorem}
        The function $x \mapsto \{0,x_1,\ldots,x_6\}$ is computable by a rank-$(3,2)$-maxout network.
    \end{theorem}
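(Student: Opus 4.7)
The plan is to derive this theorem as a direct corollary of Proposition~\ref{prop:outof6space}, which does the real work, combined with Proposition~\ref{prop:inclusion_of_conforming} and Lemma~\ref{lem:equivalence_assumptions}. First I would extract the key algebraic consequence of Proposition~\ref{prop:outof6space}: expanding $\max\{f_1, f_2\} = \sum_{M \subseteq [7]} \lambda_M \sigma_M$ in the basis $\{\sigma_M\}_{M \subseteq [7]}$, the coefficient $\lambda_{[7]}$ must be nonzero, since $\max\{f_1, f_2\} \in \FB{7}(7) \setminus \FB{7}(6)$ and $\FB{7}(6) = \spn\{\sigma_M : |M| \leq 6\}$. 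Solving for $\sigma_{[7]}$ then gives
\[
\sigma_{[7]} \;=\; \tfrac{1}{\lambda_{[7]}}\bigl(\max\{f_1, f_2\} - g\bigr), \qquad g \;:=\; \sum_{M \subsetneq [7]} \lambda_M\, \sigma_M \;\in\; \FB{7}(6).
\]

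Next I would assemble a rank-$(3,2)$-maxout network realizing this right-hand side by combining two subnetworks into one. The first subnetwork computes $\max\{f_1, f_2\}$: each summand $\sigma_M$ appearing in the given expansions of $f_1$ and $f_2$ satisfies $|M| \leq 3$ and is therefore the output of a single rank-$3$ maxout unit (duplicating inputs when $|M| < 3$), so a single rank-$2$ unit in the second hidden layer can output $\max\{f_1, f_2\}$ with $f_1, f_2$ represented as affine combinations of the first-layer outputs. The second subnetwork computes $g \in \FB{7}(3 \cdot 2)$, which is supplied directly by Proposition~\ref{prop:inclusion_of_conforming} applied to the rank vector $(3,2)$. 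Merging these two subnetworks by concatenating their first and second hidden layers yields a single rank-$(3,2)$-maxout network whose output affine map $\tfrac{1}{\lambda_{[7]}}\bigl((\cdot) - (\cdot)\bigr)$ then produces $\sigma_{[7]} = \max\{x_1, \dots, x_7\}$.

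Finally I would apply Lemma~\ref{lem:equivalence_assumptions} to convert this $\B_7$-conforming representation of $\max\{x_1, \dots, x_7\}$ into a $\B_6^0$-conforming rank-$(3,2)$-maxout network computing $\max\{0, x_1, \dots, x_6\}$, which is precisely the claim. The main obstacle is entirely packaged inside Proposition~\ref{prop:outof6space}: exhibiting explicit $f_1, f_2 \in \FB{7}(3)$ whose pointwise maximum is $\B_7$-conforming yet carries a nonzero $\sigma_{[7]}$-coefficient. Once that has been established, only routine architectural bookkeeping remains, and conformity at every intermediate stage is automatic because every function produced inside the network lies in some $\FB{7}(k)$.
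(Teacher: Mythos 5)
Your proposal is correct and matches the paper's own argument essentially step for step: extract $\sigma_{[7]}$ from $\max\{f_1,f_2\}$ via the nonzero top coefficient guaranteed by Proposition~\ref{prop:outof6space}, realize $\max\{f_1,f_2\}$ with a rank-$3$ layer (building the $\sigma_M$, $|M|\le 3$) followed by a rank-$2$ layer, absorb the remainder $g\in\FB{7}(6)$ via Proposition~\ref{prop:inclusion_of_conforming} with rank vector $(3,2)$, and convert to the $\B_6^0$-conforming statement using Lemma~\ref{lem:equivalence_assumptions}. No gaps; the heavy lifting is indeed entirely in Proposition~\ref{prop:outof6space}.
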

    \begin{remark}
        One can check (e.g.,  with a computer) that $x \mapsto \{0,x_1,\ldots,x_6\}$ is computable by a rank-$(3,2)$-maxout network with integral weights. This is particularly interesting in light of \citet{haase2023lower}, who prove a $\lceil \log_2(d+1) \rceil $ lower bound for the case of integral weights and ReLU networks.
    \end{remark}

    \section{Conclusion and Limitations}
    Characterizing the set of functions that a ReLU network with a fixed number of layers can compute remains an open problem. We established a doubly-logarithmic lower bound under the assumption that breakpoints lie on the braid fan. This assumption allowed us to exploit specific combinatorial properties of the braid arrangement. In the specific case of four dimensions, we reprove the tight bound for $\B_d^0$-conforming networks of \citet{hertrich2023towards} with combinatorial arguments. Given that \citet{bakaev2025betterneuralnetworkexpressivity} showed that one can compute the maximum of $5$ numbers with $2$-layers, this implies that considering $\B_d^0$-conforming networks is a real restriction. While this indicates that the doubly-logarithmic lower bound may not extend to all networks, our approach provides a foundation for adapting these techniques toward more general depth lower bounds, for example, by looking at different underlying fans instead of just the braid fan.
    \paragraph{\textbf{Acknowledgments}}
Moritz Grillo was supported by the Deutsche Forschungsgemeinschaft (DFG, German Research Foundation) — project 464109215 within the priority programme SPP 2298 “Theoretical Foundations of Deep Learning,” and by Germany’s Excellence Strategy — MATH+: The Berlin Mathematics Research Center (EXC-2046/1, project ID: 390685689).
    Part of this work was completed while Christoph Hertrich was affiliated with Université Libre de Bruxelles, Belgium, and received support by the European Union's Horizon Europe research and innovation program under the Marie Skłodowska-Curie grant agreement No 101153187---NeurExCo.
\bibliographystyle{abbrvnat}

\bibliography{\ifnum\pdfstrcmp{\jobname}{output}=0 ref\else ../ref\fi}
\newpage

\newpage
\appendix
\crefalias{section}{appendix}
\section{List of notation}
\label{sec:notation}
\begin{tabular}{cp{0.9\textwidth}}
  $\PP$ & Polyhedral complex \\
  $\B_d$ & Braid fan with lineality space  \\
  $\B_d^0$ & Braid fan without lineality space \\
  $\ell$ & Number of layers \\
  $\mathbf{r} \in \N^\ell$ & Vector where each entry corresponds to the rank of a layer \\
  $\maxout{d}{r}$ & Functions computable by rank-$\mathbf{r}$-maxout networks \\
  $\maxout{\B_d}{r}$ & Functions computable by $\B_d$-conforming rank-$\mathbf{r}$-maxout networks  \\
    $\mtwo{\B_d}{\ell}$ & Functions computable by $\B_d$-conforming networks with $\ell$ rank-$2$-maxout layers  \\
    $\sigma_M$ & Function $x \mapsto \max_{i \in M} x_i$ \\
    $\FB{d}$ & CPWL functions compatible with $\B_d$  \\
    $\FB{d}(k)$ & Span of $\{\sigma_M \mid M \subseteq [d], |M| \leq k\}$  \\
    $\La$ & Boolean lattice \\
    $\Sf{\La}$ & Functions on $\La$ \\
    $\ind_S$ & $\sum_{i \in S} e_i$ \\
    $\al{S,T}$ &  $\sum_{S \subseteq Q \subseteq T} (-1)^{r(Q)-r(S)} \ind_Q$ \\
    $\Sf{\La}(k)$ & Functions on $\La$ that are orthogonal to $\spn \{\al{S,T} \mid r(T) -r(S) \leq k, S \subseteq T \}$  \\
    $\Phi$ & Isomorphism between $\FB{d}$ and $\Sf{d}$. \\
    $\mathcal{A}(U)$ & Span of $ \{\max\{f_1,f_2\} \mid  f_1,f_2 \in U,\max\{f_1,f_2\} \in \FB{d} \}$ \\
   $x_1 \cdots  x_n$ & $\{x_1,\ldots,x_n\} \in \La=[X,Y]$ \\
   $\overline{x_1 \cdots  x_n}$ &  $X \cup (Y \setminus \{x_1,\ldots,x_n\}) $\\
  $\HC{\La}^r\subseteq\Sf{\La}^r$ &  Set of conforming tuples \\
  $\HC{\La}$ &$\{F \in \Sf{\La} \mid F(S) \text{ and } F(T) \text{ do not have opposite signs for } S \subseteq T \}$
  
\end{tabular}\\

\section{Proofs}\label{sec:proofs}

\begin{proof}[Proof of \Cref{prop:iso_cpwl_to_setfunction}]
 The map $\Phi$ is clearly a linear map.
    To prove that $\Phi$ is an isomorphism, we show that a function $f\in \FB{d}$ is uniquely determined by its values on $\{\ind_S\}_{S\subseteq [d]}$ and any choice of real values $\{y_S\}_{S\subseteq [d]}$ give rise to a function $f\in \FB{d}$ such that $f(\ind_S)=y_S$.

    First, note that the maximal cones of $\B_d$ are of the form $C_{\pi}=\{x \in \R^d \mid x_{\pi(1)} \leq \ldots \leq  x_{\pi(d)}\}$ for a permutation $\pi \colon [d] \to [d]$. There are exactly the $d+1$ indicator vectors $\{\ind_{S_i}\}_{i=0,\ldots,d}$ contained in $C_{\pi}$, where $S_i \coloneqq\{\pi(d+1-i),\ldots,\pi(d)\}$ for $i \in [d]$ and $S_0 \coloneqq \emptyset$. Moreover, the vectors $\{\ind_{S_i}\}_{i=0,\ldots,d}$ are affinely independent and hence the values $\{f(\ind_{S_i})\}_{i=0,\ldots,d}$ uniquely determine the affine linear function $f|_{C_{\pi}}$. Therefore, $f$ is uniquely determined by $\{f(\ind_{S_i})\}_{S\subseteq[d]}$.
    
    Given any values $\{y_S\}_{S \subseteq [d]}$, by the discussion above, there are unique affine linear maps $f|_{C_{\pi}}$ yielding $f|_{C_{\pi}}(\ind_S)=y_S$ for all $S \subseteq [d]$ such that $\ind_S \in C_{\pi}$. It remains to show that the resulting function $f$ is well-defined on the facets. Any such facet is of the form \[C_{\pi,i}=\{x \in \R^d \mid x_{\pi(1)} \leq \ldots \leq x_{\pi(i)} = x_{\pi(i+1)} \leq \ldots \leq  x_{\pi(d)}\},\] which is the intersection of $C_{\pi}$ and $C_{\pi \circ (i,i+1)}$,  where $(i,i+1)$ denotes the transposition swapping $i$ and $i+1$. However, the indicator vectors $\{\ind_{S_i}\}_{i\in[d]\setminus \{i\}}$ contained in $C_{\pi,i}$ are a subset of the indicator vectors contained in $C_{\pi}$ and $C_{\pi \circ (i,i+1)}$. Therefore, it holds that $f|_{C_{\pi}}(x)=f|_{C_{\pi \circ (i,i+1)}}(x)$ for all $x \in C_{\pi,i}$ implying that $f$ is well-defined as a CPWL function.
\end{proof}

\begin{proof}[Proof of \Cref{prop:mobius_inversion}]
   For $S,M \subseteq [d]$, it holds that \[\sigma_M(\ind_S) =
   \begin{cases}
       1 &   M \cap S \neq \emptyset \\
       0 &  M \cap S =\emptyset
   \end{cases}\] and thus $\Phi(f)(S) = f(\ind_S) = \sum\limits_{\substack{M \subseteq [d] \\ M \cap S \neq \emptyset}}\lambda_M$.
   
   For the inverse, let $G \in \Sf{d}$ be the map given by $G(S) = \sum_{M \subseteq S} \lambda_M$. 
   We use the notation $S^c \coloneqq [d] \setminus S$ and observe that it holds that \[F(S) = \sum\limits_{\substack{M \subseteq [d] \\ M \cap S \neq \emptyset}}\lambda_M=\sum\limits_{\substack{M \subseteq [d] \\ M \nsubseteq S^c }}\lambda_M=\sum\limits_{M \subseteq [d]}\lambda_M-\sum\limits_{\substack{M \subseteq [d] \\ M \subseteq S^c }}\lambda_M = \sum\limits_{M \subseteq [d]}\lambda_M - G(S^c)\]
   Given a $F \in \Sf{d}$, we now compute the corresponding $\lambda_M$ using the M\"obius inversion formula. For a Boolean lattice the M\"obius function $\mu$ on sublattices satisfies  $\mu(S,M) = (-1)^{r(M)-r(S)}$ \cite{stanley07arrangement}. Hence the coefficient $\lambda_M$ yields
   \begin{align*}
       \lambda_M  &= \sum_{S \subseteq M} \mu(S,M)\cdot G(S)
       \\&= \sum_{S \subseteq M} \mu(S,M)\cdot \bigg(\Big(\sum\limits_{M \subseteq [d]}\lambda_M\Big)-F(S^c)\bigg)
       \\&= \sum\limits_{M \subseteq [d]}\lambda_M \Big(\sum_{S \subseteq M} \mu(S,M)\Big)-\sum_{S \subseteq M} \mu(S,M)\cdot F(S^c)
       \\&= -\sum_{S \subseteq M} (-1)^{r(M)-r(S)}\cdot F(S^c)
       \\&= -\sum_{S \supseteq M^c} (-1)^{r(M)-r(S^c)}\cdot F(S)
   \end{align*}
   It holds that $(-1)^{r(M)-r(S^c)}=(-1)^{r(S)-r(M^c)}$ since  the parity of $r(M)-r(S^c)$ and $r(S)-r(M^c)$ is the same. Thus, it follows that $\lambda_M = -\sum_{S \supseteq M^c} (-1)^{r(S)-r(M^c)}\cdot F(S) = -\asum{[d]\setminus M,[d]}{F}$, proving the claim.
  \end{proof}
   \begin{proof}[Proof of \Cref{lem:equivalence_assumptions}]
    Let $g_d \colon \R^{d-1} \to \R^{d}$ be the linear map given by $(x_1,\ldots,x_{d-1}) \mapsto  (x_1,\ldots,x_{d-1},0)$. Then $f_i \circ \ldots \circ f_1$ is $\B_d$-conforming if and only if $f_i \circ \ldots \circ f_1 \circ g_d$ is $\B^0_{d-1}$-conforming. In particular, if $x \mapsto \max \{x_1,\ldots,x_{d}\}$ can be represented with a $\B_{d}$-conforming  rank-$\mathbf{r}$-maxout network $f$, then $f \circ g_d$ is a $\B_{d-1}^0$-conforming rank-$\mathbf{r}$-maxout network computing $x \mapsto \max \{0,x_1,\ldots,x_{d-1}\}$.
    
    Conversely, let $h_d \colon \R^{d} \to \R^{d-1}$ be the linear map given by $(x_1,\ldots,x_{d}) \mapsto (x_1-x_{d},\ldots,x_{d-1}-x_{d})$. Then $f_i \circ \ldots \circ f_1$ is $\B^0_d$-conforming if and only if $f_i \circ \ldots \circ f_1 \circ h_d$ is $\B_{d}$-conforming.
    In particular, if $f$ is a $\B_{d-1}^0$-conforming rank-$\mathbf{r}$-maxout network computing $x \mapsto \max \{0,x_1,\ldots,x_{d-1}\}$, then $f \circ h_d + \sigma_{\{d\}}$ is a $\B_{d}$-conforming rank-$\mathbf{r}$-maxout network computing $x \mapsto \max \{x_1,\ldots,x_{d}\}$
   \end{proof}
\begin{proof}[Proof of \Cref{prop:inclusion_of_conforming}]
     By induction on the number of layer $\ell$, assume  that $\FB{d}(\prod_{i=1}^{\ell-1} r_i) \subseteq \maxout{d}{(r_1,\ldots,r_{\ell-1})}$, since the case $\ell=1$ is trivially satisfied.  Now, let $M \subseteq [d]$ be a subset of cardinality at most $|\prod_{i=1}^{\ell} r_i|$ and let $M_1 \cup \cdots M_{r_\ell}$ be a partition of $M$ into subsets of cardinality at most $|\prod_{i=1}^{\ell-1} r_i|$. Then $\sigma_{M_1}, \ldots, \sigma_{M_{r_\ell}} \in \FB{d}(\prod_{i=1}^{\ell-1} r_i) \subseteq \maxout{d}{(r_1,\ldots,r_{\ell-1})}$ and hence $\max\{\sigma_{M_1}, \ldots, \sigma_{M_{r_\ell}}\}$ is computable by a rank-$\mathbf{r}$-maxout network. Since all restriction to polyhedra of the braid fan of the CPWL functions $\sigma_{M_1}, \ldots, \sigma_{M_{r_\ell}}$ are of the form $x \mapsto x_i$, the function $\sigma_M = \max\{\sigma_{M_1}, \ldots, \sigma_{M_{r_\ell}}\}$ has only breakpoints on hyperplanes $x_i = x_j$ and thus is compatible with $\B_d$. Hence, it holds that $\sigma_M \in \maxout{d}{r}$ and since $\sigma_M$ is an arbitrary basis function of $\FB{d}(\prod_{i=1}^\ell r_i)$ and $\maxout{d}{r}$ is a vector space, the claim follows.
\end{proof}
   \begin{proof}[Proof of \Cref{lem:maxout_conforming_setfunc}]
Let  $(F_1,\ldots,F_r) \in \HC{d}^r$, $f_i \coloneqq \Phi^{-1}(F_i)$ the corresponding cpwl functions and $C$ be a cone of the braid arrangement. Then, $C= \cone(\ind_{S_1}, \ldots \ind_{S_{k-1}}) +\spn(\ind_{[k]})$ for a $k \in [d]$ and a chain $\emptyset \subsetneq S_1 \subsetneq S_2 \subsetneq \cdots \subsetneq S_k = [d]$. Since $(F_1,\ldots,F_r) \in \HC{d}^r$, there is a $i \in [r]$ such that $F_i(S_j) = \max\{F_1,\ldots,F_r\}(S_j)$ for all $j \in [k]$, which is equivalent to $f_i(\ind_{S_j}) = \max \{f_1(\ind_{S_j}),\ldots,f_r(\ind_{S_j})\}$ for all $j \in [k]$ and thus $f_{j} = \max \{f_1,\ldots, f_r\}$ on $C$. Since $f_j$ is compatible with $\B_d$, it holds that $f_j$ is linear on $C$ and therefore also \[\max \{f_1,\ldots,f_r\} = f_j = \Phi^{-1}(F_j) = \Phi^{-1}(\max \{F_1,\ldots,F_r\})\] is linear on $C$. Since $C$ was arbitrary, it holds that $\max \{f_1,\ldots,f_r\} = \Phi^{-1}(\max \{F_1,\ldots,F_r\})$ is compatible with $\B_d$.

Conversely, if $(F_1,\ldots,F_r) \notin \HC{d}^r$, then there are is a chain  $\emptyset =S_0 \subsetneq S_1 \subsetneq \ldots \subsetneq S_k=[d]$ with $\bigcap_{i=0}^n a(S_i) = \emptyset$, where $a(S_i) = \argmax \{F_1,\ldots, F_r\}(S_i)$. This chain corresponds to a cone $C$ of the braid arrangment and it means that there is no function $f_j$ that attains the maximum on all rays $\ind_{S_i}$ of the cone $C$. Note that $\max\{f_1,\ldots,f_r\}$ can just be affine linear on a cone of the braid arrangement if $\max\{f_1,\ldots,f_r\}=f_j$ for some $j \in [r]$. We excluded this possibility by the above discussion and hence the function $\max\{f_1,\ldots,f_r\}$ is not linear on $C$ and therefore not $\B_d$-conforming.

\end{proof}
\begin{proof}[Proof of \Cref{lem:layer_description}]
    For \ref{lem:layer_description_case1}, note that $\FB{d}(1)$ equals the set of affine linear functions. Now, $f \in \mtwo{\B_d}{1}$, if and only if there are $f^{(i)}_1, f^{(i)}_2 \in \FB{d}(1)$ for $i \in [m]$ for some   $m \in \N$ such that $f = \sum_{i \in [m]} \lambda_i \max \{f^{(i)}_1, f^{(i)}_2\}$ and all summands $\max \{f^{(i)}_1, f^{(i)}_2\}$ are compatible with $\B_d$, which implies that $\mtwo{\B_d}{1}=\mathcal{A}(\FB{d}(1))$. Since the functions are affine linear, we have that $\{x \in \R^d \mid (f^{(i)}_1- f^{(i)}_2)(x) = 0 \}$ is a hyperplane (or $\R^d$) and since $\max \{f^{(i)}_1, f^{(i)}_2\}$ is compatible with $\B_d$, it follows that $f^{(i)}_1- f^{(i)}_2 = \sigma_{\{j_1\}} - \sigma_{\{j_2\}}$ for some $j_1,j_2 \in [d]$. Thus $f^{(i)}_1 = \sigma_{\{j_1\}} + g_i$ and $f^{(i)}_2 = \sigma_{\{j_2\}} + g_i$ for some $g_i \in \FB{d}(1)$ and therefore $f = \sum_{i \in [m]} \lambda_i \max \{\sigma_{\{j_1\}},\sigma_{\{j_2\}}\}+g_i = \sum_{i \in [m]} \lambda_i \max \{x_{j_1},x_{j_2}\}+g_i \in \FB{d}(2)$. Since it clearly also holds that $\FB{d}(2) \subseteq \mathcal{A}(\FB{d}(1))$, the claim follows.

      For \ref{lem:layer_description_casel}, the function $f$ is in $\mtwo{\B_d}{\ell}$ if and only if there are $f^{(i)}_1, f^{(i)}_2 \in \mtwo{\B_d}{\ell-1}$ for $i \in [m]$ for some  $m \in \N$  such that  $f = \sum_{i \in [m]} \lambda_i \max \{f^{(i)}_1, f^{(i)}_2\}$ and all functions  $\max \{f^{(i)}_1, f^{(i)}_2\}$ are compatible with $\B_d$. This is the case if and only if $f \in \mathcal{A}(\mtwo{\B_d}{\ell-1})$. The claim then follows by induction. 

\end{proof}
    \begin{proof}[Proof of \Cref{lem:wlog_zero_map}]
    Let $A_0(U) \coloneqq \spn \{\max\{0,f\} \mid  f \in U,\max\{0,f\} \in \FB{d} \}.$
    It is clear that $A_0(U) \subseteq \mathcal{A}(U)$. For the opposite inclusion, first note that $U \subseteq \mathcal{A}(U)$ and that $\max\{f^{(i)}_1, f^{(i)}_2\}$ is compatible with $\B_d$ if and only if  $\max\{0, f^{(i)}_1 -f^{(i)}_2\}$ is. Moreover, since $f= \max\{0,f\} - \max \{0,-f\}$, it also holds that $U \subseteq A_0(U)$. Let $f = \sum_{i \in [m]} \lambda_i \max\{f^{(i)}_1, f^{(i)}_2\} \in \mathcal{A}(U)$. Then we have that $f= \sum_{i \in [m]} \lambda_i \max\{f^{(i)}_1- f^{(i)}_2,0\} +f^{(i)}_2 \in A_0(U)$, proving the claim.
\end{proof}
    \begin{proof}[Proof of \Cref{lem:indBase_maxout}]
  Follows from \Cref{lem:wlog_zero_map} and \Cref{lem:maxout_conforming_setfunc}:
    \begin{align*}
        \Phi(\mathcal{A}(U)) &\overset{\ref{lem:wlog_zero_map}}{=} \Phi(\spn \{\max\{0,f\} \mid  f \in U,\max\{0,f\} \in \FB{d} \}) 
        \\&= \spn \{\Phi(\max\{0,f\}) \mid  f \in U,\max\{0,f\} \in \FB{d} \}
        \\&\overset{\ref{lem:maxout_conforming_setfunc}}{=} \spn \{\max\{0,\Phi(f)\}) \mid  (0,\Phi(f)) \in \Phi(U)^2 \cap \HC{d}^2\}
        \\&= \spn \{\max\{0,F\}) \mid  F \in \Phi(U) \cap \HC{d}\}
        \\&= \mathcal{A}(\Phi(U))
    \end{align*} 
   \end{proof}
\begin{proof}[Proof of \Cref{lem:lattice_structure}]
 \ref{lem:lattice_decomposition} follows immediately from
	   \[
		\al{X,Y}
		= \sum_{X \subseteq Q \subseteq Y} (-1)^{r(Q)}\ind_Q
		 = \sum_{S \subseteq Y \setminus T}\sum_{S \subseteq Q \subseteq S  \cup T} (-1)^{r(Q)}\ind_Q 
		= \sum_{S \subseteq Y \setminus T}(-1)^{r(S)}\al{S,S \cup T}. \]

        For \ref{lem:induction_step},  first assume that $S=S' \cup \{a\}$, where $a \in Y \setminus (T\cup S)$. Then we have that $|(S' \cup T) \setminus S|=k+1$. Then \[[S,S \cup (T \cup \{a\})] = [S,S \cup T] \cup [S',S'\cup T]\] and hence $\al{S,S \cup (T \cup \{a\})} = \al{S,S \cup T} - \al{S',S'\cup T}$ by \ref{lem:lattice_decomposition}.  In general we have that $S \cup S' \setminus S= S' \cup S \setminus S'$ and applying the above argument iteratively proves the claim.
\end{proof}
 \begin{proof}[Proof of \Cref{lem:lowsubsets}]

    	Since $F \not\leq 0$, there is a $R \in \La$ such that $F(R) >0$. If $r(R) > k$, then it holds that $\asum{X,R}{F} = 0$ since $|R \setminus X| \geq k+1$ and $F \in \Sf{\La}(k)$. Moreover, $F(Q) \geq 0$ for all $Q \in [X,R]$, since $F \in \HC{\La}$. Hence, there is a $Q \in [X,R], Q \neq R$ such that $F(Q) > 0$. In particular, $r(Q) < r(R)$. This argument can be applied iteratively until we find a $X^+ \in \La_{\leq k}$ such that $F(X^+) >0.$ The proof for the existence of a $X^- \in \La_{\leq k}$ as well as $Y^+,Y^- \in \La_{\geq n-k}$ follows analogously.
\end{proof}
    \begin{proof}[Proof of \Cref{prop:maxout_quadratic_bound}]
We prove the statement by induction on $k$. For $k=1$, we have that $n \geq 3$ and hence the claim follows from \Cref{lem:indBase_maxout}.

We can assume that there are $Y_1,Y_2$ such that $F(Y_1) > 0$ and $F(Y_2) > 0$. Otherwise, either $F^+=F$ or $F^+=0$ and in both cases it trivially holds that $F^+ \in \Sf{\La}(k) \subseteq \Sf{\La}(n-1)$. 

Hence, by \Cref{lem:lowsubsets}, we can find subsets of rank less or equal to $k$ that are in the positive respectively negative support of $F$. That is, there are $X^+,X^- \in \La_{\leq k}$ such that $F(X^+) > 0$ and $F(X^-) < 0$.
Let $S\coloneqq X^+ \cup X^-$ and $T \coloneqq Y \setminus S$. 
Then we have that $|S| \leq 2k$ and hence 
\[
|T| 
= n - |S|    
\geq (k^2+k+1) - 2k
= k^2-k+1
=(k-1)^2 +2k -1 -k +1
= (k-1)^2 +(k-1) +1
\]

Since $F\in \HC{\La}$ and hence $F$ cannot have opposite signs on comparabale subsets, 
we have that $F(R) = 0$ for all $R \in [S,S\cup T]$ due to $F(X^+) > 0$ and $F(X^-) < 0$.
In particular, it holds that $\asum{S,S\cup T'}{F} = 0$ for all $T' \subseteq T$ with $|T'|=k$.   

Let $S' \subseteq Y \setminus T$. We proceed by showing that the restriction of $F$ to $[S',S'\cup T]$ is in $\Sf{[S',S'\cup T]}(k-1)$. To see this, consider an arbitrary sublattice $[R, R \cup T'] \subseteq [S',S'\cup T]$ of rank $k$. Note that $R$ and $T'$ can be choosen such that $T' \subseteq T$ and $|T'|=k$. 
Hence, by \Cref{lem:lattice_structure}, 
it follows that \[0 = \asum{R, R \cup T'}{F} - \asum{S, S \cup T'}{F} = \asum{R, R \cup T'}{F}\] and therefore, $F \in \Sf{[S',S'\cup T]}(k-1)$ by definition.

   Since the rank of the lattice $[S',S'\cup T]$ is at least $(k-1)^2 +(k-1) +1$, the induction hypothesis implies that $\asum{S',S'\cup T}{F^+}=0$ and therefore $\asum{X,Y}{F^+}=0$ by \Cref{lem:lattice_structure}.
   For an illustration of the induction step, see \Cref{fig:decomposition}.

\end{proof}
\begin{proof}[Proof of \Cref{prop:maxout_quadratic_bound2}]
        Let $F \in \Sf{\La}(k) \cap \HC{\La}$ and  $X,Y \in \La$ such that $|Y\setminus X|= k^2+k+1$. For the restriction of $F$ to $[X,Y]$, it still holds that $F \in \Sf{[X,Y]}(k) \cap \HC{[X,Y]}$. Then, by \Cref{prop:maxout_quadratic_bound}, we have that $\asum{X,Y}{F^+}=0$ and thus $F^+ \in \Sf{\La}(k^2+k)$. This concludes the proof, since $\Sf{\La}(k^2+k)$ is a vector space and therefore also every linear combination of such $F^+$ is contained in $\Sf{\La}(k^2+k)$.
\end{proof}
\begin{proof}[Proof of \Cref{thm:maxout_doubly_exponential}]
             Let the sequence $(k_\ell)_{\ell \in \N}$ be defined by $k_1 \coloneqq 2$ and $k_\ell \coloneqq k_{\ell-1}^2+k_{\ell-1}$.
             By \Cref{prop:maxout_quadratic_bound2} and \Cref{lem:indBase_maxout} it holds that $\mathcal{A}(\FB{d}(k_\ell)) \subseteq \FB{d}(k_\ell^2 +k_\ell)$ for all $\ell \in \N$.
             
             We first show that $\mtwo{\B_d}{\ell} \subseteq \FB{d}(k_\ell)$ by induction on $\ell$.
        For $\ell=1$, we have that $\maxout{\B_d}{2} = \FB{d}(2)$ , settling the induction base.
        For the induction step, assume that $\mtwo{\B_d}{\ell} \subseteq \FB{d}(k_{\ell})$. Then we have that \[\mtwo{\B_d}{\ell+1} = \mathcal{A}(\mtwo{\B_d}{\ell}) \subseteq \mathcal{A}(\FB{d}(k_\ell)) \subseteq \FB{d}(k_\ell^2 +k_\ell) =\FB{d}(k_{\ell+1})\]

        Moreover, by induction it holds that
        \[k_{\ell+1} = k_\ell^2+k_\ell
        \leq 2 \cdot k_\ell^2 \leq 2 \cdot (2^{2^\ell-1})^2 = 2 \cdot (2^{2 \cdot(2^\ell-1)}) = 2 \cdot (2^{2^{\ell+1}-2}) = 2^{2^{\ell+1}-1}, \]
        proving the statement.
\end{proof}
    \begin{proof}[Proof of \Cref{cor:loglog}]
    Any rank $r_i$-maxout layer can be computed by $\lceil \log_2 r_i \rceil$ many rank $2$-maxout layer since $\max\{f_1,\ldots,f_{r_i}\} =\max\{\max\{f_1,f_2\}\ldots,\max \{f_{r_{i-1}},f_{r_i}\}\}$. Then, a rank-$\mathbf{r}$-maxout network can be computed by a maxout network with $\sum_{i=1}^\ell \lceil \log_2 r_i \rceil \leq \ell \cdot \log_2 R$ many layers of rank $2$. Moreover, we have that $2^{2^{\ell \log_2 R}-1} = 2^{R^\ell-1}$, proving the claim.
   \end{proof}
    \begin{proof}[Proof of \Cref{lem:pairing}]
      	First, we observe that \Cref{lem:lowsubsets} implies that wlog $i \leq k$ and $j \geq n-k$.
      	Now, assuume that $\supp^+(F) \subseteq \La_i \cup \La_j$. 
      	Then, for every $R \in \La_i \cap \supp^+(F)$, there is a $Q \in \La_j \cap \supp^+(F)$ with $R \subset Q$ since $\asum{R,Y}{F}=0$ and $F \in \HC{\La}$. Note that this also implies that wlog $i$ is even and $j$ is odd.  Moreover, assume that there is a $Q' \in \La_j \cap \supp^+(F), Q' \neq Q$ such that $R \subset Q'$. Since $\La_l \cap \supp^+(F) = \emptyset$ for all $l \neq i,j$, it holds that $F(R) \geq F(Q) + F(Q') > F(Q)$. But then, if follows that $\asum{\emptyset,Q}{F} \leq F(R) - F(Q) < 0$, which is a contradiction to $F \in \Sf{\La}(k)$. Hence, for every $R \in \La_i \cap \supp^+(F)$ there is exactly one $Q \in \La_j \cap \supp^+(F)$ with $R \subset Q$. Moreover, $\asum{R,Y}{F}=0$ implies that $F(R)=F(Q)$. Therefore, it holds that 
      	\begin{align*}
      		\asum{X,Y}{F^+}&=\sum\limits_{S \in \supp^+(F)} (-1)^{r(S)}F(S)    
      		\\&= \left(\sum\limits_{R \in \supp^+(F) \cap \La_i}F(R)   -  \sum\limits_{Q \in \supp^+(F) \cap \La_j}F(Q) \right) 
      		\\&= 0.
      	\end{align*}
      	If $\supp^-(F) \subseteq \La_i \cup \La_j$, then the statement follows analogously due to the fact that \[\asum{X,Y}{F^+} = \asum{X,Y}{F} - \sum\limits_{S \in \supp^-(F)} (-1)^{r(S)}F(S).\]

\begin{lemma}
      \label{lem:case1-4}
      Let $\La = [X,Y]$ be a lattice of rank $5$ and $F \in \Sf{\La}(k) \cap \HC{\La}$. If there is a $X^+ \in \La_1 \cap \supp^+(F)$ and a $X^- \in \La_4 \cap \supp^-(F)$, then it holds that $F^+ \in \Sf{\La}(4)$
      \end{lemma}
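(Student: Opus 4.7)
The goal is to verify $F^+ \in \Sf{\La}(4)$. Since $\La$ has rank $5$, the space $\R^\La(4)$ is one-dimensional and spanned by $\al{X,Y}$, so it suffices to prove the single identity $\asum{X,Y}{F^+}=0$. My plan is to reduce this to the vanishing of $\asum{X^+,Y}{F}$, which will follow cleanly from $F \in \Sf{\La}(2)$, by exploiting the $\HC{\La}$-property to pin down the sign pattern of $F$ across $[X,Y]$.

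The first step is to locate $X^+$ and $X^-$ inside $\La$. Write $X^+ = X \cup \{a\}$ for some $a \in Y \setminus X$. Because $F(X^+)>0$ and $F(X^-)<0$ while $F \in \HC{\La}$, the two elements cannot be comparable, so in particular $X^+ \not\subseteq X^-$, i.e.\ $a \notin X^-$. But $|X^- \setminus X| = 4$ in a rank-$5$ lattice, so this forces the explicit identification $X^- = Y \setminus \{a\}$. This rigid structure is really the heart of the argument.

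Next I use the conforming property to determine $F^+$. For any $Q \in [X,Y]$ with $a \in Q$ we have $X^+ \subseteq Q$, so $F(Q) \geq 0$ and hence $F^+(Q) = F(Q)$; for any $Q$ with $a \notin Q$ we have $Q \subseteq Y \setminus\{a\} = X^-$, so $F(Q) \leq 0$ and hence $F^+(Q) = 0$. Substituting this dichotomy into the defining sum and accounting for the rank shift $r(X^+)=1$ yields
\[
\asum{X,Y}{F^+} \;=\; \sum_{\substack{X\subseteq Q\subseteq Y\\ a\in Q}} (-1)^{r(Q)} F(Q) \;=\; -\asum{X^+,Y}{F}.
\]
It remains to show $\asum{X^+,Y}{F}=0$. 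The sublattice $[X^+,Y]$ has rank $4$, so picking any $T \subseteq Y \setminus X^+$ with $|T|=3$ and applying \Cref{lem:lattice_structure}\ref{lem:lattice_decomposition} decomposes $\al{X^+,Y}$ as a linear combination of vectors $\al{S,S\cup T}$ with $r(S\cup T)-r(S)=3$. Each such vector lies in $\R^\La(2)$, and $F \in \Sf{\La}(2)$ annihilates all of them, giving $\asum{X^+,Y}{F}=0$ and closing the argument.

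The main obstacle is Step~1: the clean reduction in Step~2 only works because the $\HC{\La}$-condition, combined with the numerical rank constraints $r(X^+)+r(Y\setminus X^-)=2$ in a lattice of rank $5$, forces $X^-$ to be exactly the complement of $X^+$ at the top. Once this is established, the sign analysis of $F^+$ collapses to the Iverson bracket $[a \in Q]$ and everything downstream is a short bookkeeping computation.
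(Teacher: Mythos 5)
Your proof is correct and follows the same route as the paper: identify $X^{-}=Y\setminus\{a\}$ where $X^{+}=X\cup\{a\}$, use $F\in\HC{\La}$ to split $[X,Y]$ into the disjoint union $[X^{+},Y]\cup[X,X^{-}]$ on which $F^{+}=F$ and $F^{+}=0$ respectively, and then reduce $\asum{X,Y}{F^{+}}$ to $\pm\asum{X^{+},Y}{F}$, which vanishes since $\al{X^{+},Y}\in\R^{\La}(2)$ by \Cref{lem:lattice_structure}. (You correctly read the $\Sf{\La}(k)$ in the statement as $\Sf{\La}(2)$, which is what the paper uses and intends; note also that your sign $\asum{X,Y}{F^{+}}=-\asum{X^{+},Y}{F}$ is the accurate one, whereas the paper writes it without the minus, a harmless slip since both sides are zero.)
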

           If there is a $X^+ \in \La_1 \cap \supp^+(F)$ and a $X^- \in \La_4 \cap \supp^-(F)$, it holds that $[X^+,Y] \cap \supp^-(F) = \emptyset$ and $[X, X^-] \cap \supp^+(F)=\emptyset$ since $F \in \HC{\La}$.
 In particular $X\not\in[X^+,Y]$ and thus $[X,Y]= [X^+,Y] \cup [X, X^-]$. Therefore, \[\asum{X,Y}{F^+} = \asum{X^+,Y}{F} = 0,\]
 since $F \in \Sf{\La}(2)$.
\end{proof}
    \begin{proposition}
\label{prop:lattice_case}
 Let $\La=[X,Y]$ be a Boolean lattice of rank $5$. If $F \in \Sf{\La}(2) \cap \HC{\La}$, then it holds that $F^+ \in \Sf{\La}(4)$.
\end{proposition}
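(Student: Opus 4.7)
The plan is to reduce \Cref{prop:lattice_case} to the setting treated in \Cref{sec:loglog} and then run that argument essentially verbatim. First I would dispose of the trivial cases: if $F \geq 0$ or $F \leq 0$, then $F^+ \in \{F, 0\} \subseteq \Sf{\La}(2) \subseteq \Sf{\La}(4)$, so I assume $F$ takes both signs. Conformity immediately forces $F(X) = F(Y) = 0$, hence $\supp(F) \subseteq \La_1 \cup \La_2 \cup \La_3 \cup \La_4$, and \Cref{lem:lowsubsets} with $k = 2$ places elements of both $\supp^+(F)$ and $\supp^-(F)$ in $\La_{\leq 2}$ and in $\La_{\geq 3}$.

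Next I would exploit the three reduction principles already at hand to collapse the problem to a single configuration. \Cref{lem:pairing} disposes of every case in which $\supp^+(F)$ or $\supp^-(F)$ is concentrated in just two levels; \Cref{lem:case1-4} disposes of the case where $\La_1 \cap \supp^+(F)$ and $\La_4 \cap \supp^-(F)$ are both nonempty; and the mirror case where $\La_4 \cap \supp^+(F)$ and $\La_1 \cap \supp^-(F)$ are both nonempty is obtained either by a direct symmetric variant of \Cref{lem:case1-4} or by passing through the rank-reversing involution $F^\vee(R) := F(Y \setminus R)$, which a quick check shows preserves $\Sf{\La}(k) \cap \HC{\La}$ and satisfies $(F^\vee)^+ = (F^+)^\vee$. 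A pigeonhole argument over the four middle levels---each signed support, having been pushed out of the two-level regime, must meet at least three of them---then shows that if none of the three reductions above applies, both $\La_1 \cap \supp^+(F)$ and $\La_1 \cap \supp^-(F)$ are nonempty, possibly after one further use of the involution to handle the dual configuration in which both supports meet only $\La_4$. I expect this combinatorial case analysis to be the main obstacle, as the bound is tight and every support-signature must be accounted for.

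In the remaining configuration I pick singletons $X^+ \in \La_1 \cap \supp^+(F)$ and $X^- \in \La_1 \cap \supp^-(F)$, set $S := X^+ \cup X^-$ of rank $2$ and $T := Y \setminus S$ of rank $3$, and replay the induction step of \Cref{prop:maxout_quadratic_bound} at $k = 1$. Conformity forces $F \equiv 0$ on $[S, Y]$, so in particular $\asum{S, S \cup T_0}{F} = 0$ for every two-element $T_0 \subseteq T$. Since $F \in \Sf{\La}(2)$, the $S$-independence clause of \Cref{lem:lattice_structure} applied with $k = 2$ extends this vanishing to $\asum{R, R \cup T_0}{F} = 0$ for every $R \subseteq Y \setminus T_0$; hence $F$ restricted to each rank-three sublattice $[S', S' \cup T]$, with $S' \subseteq Y \setminus T$, lies in $\Sf{[S', S' \cup T]}(1) \cap \HC{[S', S' \cup T]}$. \Cref{lem:indBase_maxout} then yields $\asum{S', S' \cup T}{F^+} = 0$ for each such $S'$, and decomposing $\al{X, Y}$ via \Cref{lem:lattice_structure} produces $\asum{X, Y}{F^+} = 0$, which is the only condition required to place $F^+$ in $\Sf{\La}(4)$.
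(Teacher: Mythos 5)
Your proof is correct and takes essentially the same route as the paper's: reduce to the configuration where $X^+, X^- \in \La_1$, set $S = X^+ \cup X^-$ and $T = Y \setminus S$, transfer the vanishing of $F$ on $[S,Y]$ to every rank-three sublattice $[S', S'\cup T]$ via \Cref{lem:lattice_structure}, invoke \Cref{lem:indBase_maxout} on each, and reassemble with \Cref{lem:lattice_structure} again. Your case reduction is in fact more complete than the paper's one-line appeal to \Cref{lem:pairing} and \Cref{lem:case1-4}: as stated, \Cref{lem:case1-4} only covers $X^+ \in \La_1,\, X^- \in \La_4$, and neither lemma by itself handles the configuration where both supports miss $\La_1$; your explicit use of negation (to get the mirror of \Cref{lem:case1-4}) and of the rank-reversing involution $F^\vee(R) = F(X \cup (Y\setminus R))$ closes these residual cases cleanly, and you correctly verify that $F^\vee$ preserves $\Sf{\La}(k) \cap \HC{\La}$ and commutes with $(\cdot)^+$.
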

    \begin{proof}
By \Cref{lem:pairing} and \Cref{lem:case1-4}, we can assume that there are $X^+\in \La_1 \cap \supp^+(F)$ and $X^-\in \La_1 \cap \supp^-(F)$. In this case, the proof proceeds analogously to the proof of \Cref{prop:maxout_quadratic_bound} taking advantage of the existence of $X^+$ and $X^-$ in $\La_1$.
 Let $S \coloneqq X^+ \cup X^- \in \La_2$.
   Since $F \in \HC{\La}$, it follows that $F(R)=0$ for all $R \in [S, Y]$. In particular, we have that $\asum{S,S\cup T}{F}=0$ for all $T \in Y \setminus S$ with $ |T| =2$. Hence, by \Cref{lem:lattice_structure}, it follows that $\asum{S',S'\cup T}{F}=0$ for all $S' \subseteq S$ and $T \in Y \setminus S$ with $|T| =2$ and therefore $F \in \Sf{[S', S' \cup T]}(1)$. Since $[S',S' \cup (Y\setminus S)]$ is a Boolean lattice of rank $3$, by \Cref{lem:indBase_maxout}, it follows $\asum{S',S' \cup (Y\setminus R)}{F^+}=0$ and therefore $F^+ \in \Sf{\La}(4)$ due to  \Cref{lem:lattice_structure}.
\end{proof}
    \begin{proposition}
        It holds that $\mathcal{A}(\Sf{\La}(2)) \subseteq \Sf{\La}(4)$.
    \end{proposition}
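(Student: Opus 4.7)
The plan is to mirror the argument used for Proposition \ref{prop:maxout_quadratic_bound2}, replacing the generic quadratic bound from Proposition \ref{prop:maxout_quadratic_bound} with the sharpened bound provided by Proposition \ref{prop:lattice_case}, which is tailored to the case $k=2$ on lattices of rank $5$.

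First I would unfold the definition of $\mathcal{A}$: we have
\[
\mathcal{A}(\Sf{\La}(2)) \;=\; \spn\{F^+ \mid F \in \Sf{\La}(2) \cap \HC{\La}\},
\]
and since $\Sf{\La}(4)$ is a linear subspace, it suffices to prove that $F^+ \in \Sf{\La}(4)$ for every single $F \in \Sf{\La}(2) \cap \HC{\La}$. By definition of $\Sf{\La}(4)$, this in turn reduces to showing $\asum{S,T}{F^+}=0$ for every pair $S \subseteq T$ in $\La$ with $r(T)-r(S) = 5$, that is, for every sublattice $[S,T]$ of $\La$ of rank $5$.

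Next, I would fix such a sublattice $[S,T]$ and consider the restriction $F|_{[S,T]}$. This restriction still lies in $\Sf{[S,T]}(2)$ (since every $\al{S',T'} \in \R^{[S,T]}(1)$ is also in $\R^\La(1)$, so the orthogonality conditions defining $\Sf{\La}(2)$ pass to the sublattice), and it still lies in $\HC{[S,T]}$ (since the sign-comparability condition is inherited from $\HC{\La}$). Because $[S,T]$ is a Boolean lattice of rank $5$, Proposition \ref{prop:lattice_case} applies and yields $(F|_{[S,T]})^+ \in \Sf{[S,T]}(4)$. As $[S,T]$ itself has rank $5$, membership in $\Sf{[S,T]}(4)$ forces $\asum{S,T}{(F|_{[S,T]})^+}=0$. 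Since the coefficient vector $\al{S,T}$ is supported on $[S,T]$, we have $\asum{S,T}{F^+} = \asum{S,T}{(F|_{[S,T]})^+} = 0$, which is what we needed.

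There is no real obstacle in this step: the delicate combinatorics has already been absorbed into Proposition \ref{prop:lattice_case} via the case analysis on the positions of elements in $\supp^+(F)$ and $\supp^-(F)$ within the levels of the rank-$5$ lattice. The only minor point to verify carefully is the restriction argument — namely that $\Sf{\bullet}(2)$ and $\HC{\bullet}$ are both preserved under restriction to a sublattice, and that the positive-part operation commutes with restriction, so that $\asum{S,T}{F^+}$ really equals $\asum{S,T}{(F|_{[S,T]})^+}$. These are immediate from the definitions of $\al{S,T}$, $F^+$, and the orthogonality conditions. Having established this for every rank-$5$ sublattice, we conclude $F^+ \in \Sf{\La}(4)$ and hence $\mathcal{A}(\Sf{\La}(2)) \subseteq \Sf{\La}(4)$.
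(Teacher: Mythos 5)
Your proposal is correct and is essentially the same argument the paper gives: reduce to a single $F \in \Sf{\La}(2)\cap\HC{\La}$, restrict to each rank-$5$ sublattice $[X,Y]$, note the restriction stays in $\Sf{[X,Y]}(2)\cap\HC{[X,Y]}$, and invoke Proposition~\ref{prop:lattice_case} to conclude $\asum{X,Y}{F^+}=0$. (Minor typo: the orthogonality conditions that pass to the sublattice involve $\R^{[S,T]}(2)\subseteq\R^\La(2)$, not ``$(1)$'' — but the underlying reasoning is clearly the intended one.)
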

              \begin{proof}
    The proof is analogously to the proof of \Cref{prop:maxout_quadratic_bound2}.
                Let $F \in \Sf{\La}(2) \cap \HC{\La}$ and  $X,Y \in \La$ such that $|Y\setminus X|= 5$. For the restriction of $F$ to $[X,Y]$, it still holds that $F \in \Sf{[X,Y]}(2) \cap \HC{[X,Y]}$. Then, by \Cref{prop:lattice_case}, we have that $\asum{X,Y}{F^+}=0$ and thus $F^+ \in \Sf{\La}(4)$. 
    \end{proof}

   \begin{proof}[Proof of \Cref{thm:maxout4d}]
   \[\mtwo{\B_d}{2} = \mathcal{A}(\FB{d}(2)) = \mathcal{A}(\Sf{d}(2)) \subseteq \Sf{d}(4) = \FB{d}(4) \]
\end{proof}
\begin{proof}[Proof of \Cref{prop:outof6space}]
 Let $F_1 = \Phi(f_1)$ and $F_2 = \Phi(f_2)$ and for every $M \subseteq [7]$, let $G_M = \Phi(\sigma_M)$, which means that \[G_M(S) = 
           \begin{cases}
               1 & M \cap S \neq \emptyset \\
               0 & M \cap S = \emptyset
           \end{cases}\] 
           We write $i_1 \cdots i_n$ for $\{i_1, \ldots, i_n\}$  and $\overline{i_1 \cdots i_n}$ for $[7] \setminus \{i_1,\ldots, i_n\}$ and note that  \begin{align*}
            F_1 &= 2 \cdot G_{12} + G_{145} + G_{167} + G_{246} + G_{257} \\
            F_2 &=  G_{345} + G_{367} + G_{124} + G_{125
        } + G_{126} + G_{127}
        \end{align*}
        Furthermore, the sublattices 
           \[[12,\overline{3}], 
           [13,\overline{2}], 
           [23,\overline{1}], 
           [3,\overline{12}], 
           [2,\overline{13}],
           [1,\overline{23}],[\emptyset,\overline{123}],[123,[7]]\] form a partition of $[\emptyset,[7]]$.

           We first show that on any of the above sublattices except $[1,\overline{23}]$, either $F_1$ or $F_2$ attains the maximum on all elements of the sublattice and that for $F \coloneqq F_1 -F_2$ it holds that \[\supp^+(F) \subseteq [1,\overline{23}] \cup 146 \cup 167\] and $\asum{[\emptyset,[7]]}{F^+} =\asum{12,\overline{3}}{F}-F(146) - F(167) = -2$ and thus $F^+ \in \Sf{\La} \setminus \Sf{\La}(6)$.

             Then by looking at the partition into sublattices, we argue that $F \in \HC{\La}$. Hence, by \Cref{lem:maxout_conforming_setfunc}, it holds that $F^+ = \max\{0,\Phi^{-1}(f_1-f_2)\} =\Phi^{-1}(\max\{0,f_1-f_2\})$ and $\max\{0,f_1-f_2\}$ is compatible with $\B_d$. Moreover, since $F^+ \in \Sf{7} \setminus \Sf{7}(6)$, it follows that $\max \{0,f_1-f_2\} \in \FB{7}\setminus\FB{7}(6)$ and therefore also $\max \{f_1,f_2\} \in \FB{7}\setminus\FB{7}(6)$.

            On the sublattice $[1,\overline{23}]$, we have that $F_1 =4+ G_{46} + G_{57}$ and $F_2 =4+ G_{45} + G_{67}$, which means that $F_1(145) > F_2(145)$, $F_1(167) > F_2(167)$, $F_2(146) > F_1(146)$ and $F_2(157) > F_1(157)$ and $F_1(S) = F_2(S)$ for all other $S \in [1,\overline{23}]$.
            
            First, note that $F_i(S) \leq 6$ for $i=1,2$ and all $S \subseteq [7]$ since there are only $6$ summands that attain the values $0$ or $1$. 
           
           On the sublattice $[12,\overline{3}]$ it holds that $F_1 = 6$  and thus $F_1(S) \geq F_2(S)$ for all $S \in [12,\overline{3}]$.
           
           On the sublattice $[13,\overline{2}]$ it holds that $F_2 = 6$  and thus $F_2(S) \geq F_1(S)$ for all $S \in [13,\overline{2}]$.
           
           On the sublattice $[23,\overline{1}]$ it holds that $F_2 = 6$  and thus $F_2(S) \geq F_1(S)$ for all $S \in [23,\overline{1}]$.

           On the sublattice $[3,\overline{12}]$ it holds that $F_1 = G_{45} + G_{67} + G_{46} + G_{57}$ and $F_2= 2 + G_{4} + G_{5} + G_{6} + G_{7}$. Since $G_{45} \leq G_{4} + G_{5}$ and $G_{67} \leq  G_{6} + G_{7}$, it follows that $F_2(S) \geq F_1(S)$ for all $S \in [3,\overline{12}]$.

         On the sublattice $[2,\overline{13}]$ it holds that $F_1 = G_{45}+G_{67}+4$ and $F_2 = G_{45}+G_{67} +4$ and thus $F_1(S) = F_2(S)$ for all $S \in [2,\overline{13}]$.

         On the sublattice $[\emptyset,\overline{123}]$, it holds that $F_1 = G_{45} + G_{67} + G_{46} + G_{57}$ and $F_2 = G_{45} + G_{67} + G_{4} + G_{5} + G_6 + G_7$. Since $G_{46} \leq G_4 + G_6$ and $G_{57} \leq G_5 + G_7$, it holds that $F_2(S) \geq F_1(S)$ for all $S \in [\emptyset,\overline{123}]$.

         On the sublattice $[123,[7]]$ it holds that $F_1 = F_2 = 6$. 

         On the sublattice $[1,\overline{23}]$, we have that $F_1 =4+ G_{46} + G_{57}$ and $F_2 =4+ G_{45} + G_{67}$, which means that $F_1(145) > F_2(145)$, $F_1(167) > F_2(167)$, $F_2(146) > F_1(146)$ and $F_2(157) > F_1(157)$ and $F_1(S) = F_2(S)$ for all other $S \in [1,\overline{23}]$.
         
         Let $F\coloneqq F_1-F_2$, then summarizing, it holds that \[F(S) 
         \begin{cases}
            \geq 0 & S \in  [12,\overline{3}] \cup 145 \cup 167 \\
             = 0  & S \in [2,\overline{13}] \cup [123,[7]] \cup [1,\overline{23}] \setminus (145\cup 146 \cup 157 \cup 167)  \\
             \leq 0 & S \in [13,\overline{2}] \cup [23,\overline{1}] \cup [3,\overline{12}] \cup [\emptyset,\overline{123}] \cup 146 \cup 157
         \end{cases}\]

         and thus $\asum{[\emptyset,[7]]}{F^+} =\asum{12,\overline{3}}{F}-F(146) - F(167) = -2$.

         It remains to prove that $F \in \HC{7}$. We prove this by showing that for any $S$ such that $F(S) >0$, there is no $T$ comparable with $S$ such that $F(T) <0$. On the sublattice $[12,\overline{3}]$ we have that $F_1=6$ and $F_2 = G_{45}+G_{67}+4$. Hence we have that $\supp^+(F) = \{12,124,125,126,127,1245,1267,145,167\}$. Also note that $F_1(S) = 6$ for all $S \in \supp^+(F)$ and thus there cannot be a $T$ with $T \supseteq S$ such that $S \in \supp^+(F)$ and $F(T) <0$ since $F_1$ is monotone. Thus it remains to check the possible subsets of $\supp^+(F)$, which are $1,14,15,16,17,2,24,25,26,27,245,267,4,5,45,6,7$ and $67$. For all but $4,5,45,6,7$ and $67$, we already know that $F$ attains the value $0$, since they are contained in $S \in [2,\overline{13}]  \cup [1,\overline{23}]$. The remaining elements are contained in $[\emptyset,\overline{123}]$ and on this sublattice we have that $F_1 = G_{45}+G_{67}+G_{46}+G_{57}$ and $F_2 = G_{45}+G_{67}+G_4 +G_5 + G_6 + G_7$ and thus $F = G_{46}+G_{57} - G_4 -G_5 - G_6 - G_7$. It now follows easily that $F(S)=0$ for all $S \in \{4,5,45,6,7,67\}$ and therefore $F \in \HC{7}$. 
\end{proof}
         \section{Conforming tuples form a polyhedral fan}
         In this section, we prove that for any Boolean lattice $\La$, the set of conforming tuple $\HC{\La}^r$ is the support of a polyhdedral fan in $\Sf{\La}^r$.
For a lattice $\La$ of rank $n$ we call a map $a \colon \La \to 2^{[r]} \setminus \emptyset$  \emph{conforming} if for all chains $\emptyset =S_0 \subsetneq S_1 \subsetneq \ldots \subsetneq S_n$ it holds that $\bigcap_{i=0}^n a(S_i) \neq \emptyset$. For two such maps $a,b \colon \La \to 2^{[r]} \setminus \emptyset$, we denote $a \leq b$ if $b(S) \subseteq a(S)$ for all $S \in \La$. Furthermore, we define  the union of the maps as $a \cup b \colon \La \to 2^{[r]} \setminus \emptyset$ given by $(a\cup b)(S) = a(S) \cup b(S)$. We define the set \[C_a \coloneqq \{ (F_1,\ldots,F_r) \mid a(S) \subseteq \argmax_{i \in [r]} F_i(S) \text{ for all }S \in \La \}\]

\begin{lemma} The set $\{C_a \subseteq \Sf{\La}^r \mid a \colon \La \to 2^{[r]} \setminus \emptyset\}$ is a complete polyhedral fan. More precisely, for any $a,b \colon \La \to 2^{[r]} \setminus \emptyset$ it holds that
\begin{enumerate}
 \item $C_a$ is a polyhedral cone,
\item $C_a$ is a face of $C_b$ if and only if $a \leq b$ and
\item $C_a \cap C_b = C_{a \cup b}$
\end{enumerate}
\end{lemma}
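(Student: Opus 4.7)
The plan is to prove the three listed properties in turn; completeness of the fan then follows immediately, since every tuple $(F_1, \ldots, F_r) \in \Sf{\La}^r$ lies in $C_a$ for $a(S) \coloneqq \argmax_{i \in [r]} F_i(S)$, so the cones cover the ambient space.

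For (1), observe that $C_a$ is cut out by the linear equalities $F_i(S) = F_{i'}(S)$ for $i, i' \in a(S)$ together with the linear inequalities $F_i(S) \geq F_k(S)$ for $i \in a(S)$, $k \in [r]$, ranging over all $S \in \La$; hence $C_a$ is a polyhedral cone. Claim (3) is immediate from the definitions: $a(S) \subseteq \argmax_{i} F_i(S)$ and $b(S) \subseteq \argmax_{i} F_i(S)$ for every $S$ holds if and only if $a(S) \cup b(S) \subseteq \argmax_{i} F_i(S)$ for every $S$.

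The substantive claim is (2). For the ``if'' direction, assume $a \leq b$, so $b(S) \subseteq a(S)$ for all $S$; the defining constraints from (1) then give $C_a \subseteq C_b$. To exhibit $C_a$ as a face of $C_b$, I would use the linear functional
\[
\ell(F_1, \ldots, F_r) \coloneqq \sum_{S \in \La} \sum_{\substack{i \in a(S) \setminus b(S) \\ j \in b(S)}} \bigl( F_j(S) - F_i(S) \bigr).
\]
Each summand is nonnegative on $C_b$ (as $j \in b(S)$ achieves the maximum there), so $\ell$ is a supporting linear functional on $C_b$, and its vanishing set inside $C_b$ forces $F_i(S) = F_j(S)$ for all $i \in a(S)$, $j \in b(S)$, which combined with $b(S) \subseteq \argmax F_\cdot(S)$ yields $a(S) \subseteq \argmax F_\cdot(S)$, i.e., membership in $C_a$. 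Hence $C_a = \{x \in C_b \mid \ell(x) = 0\}$ is a face of $C_b$.

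For the ``only if'' direction I would prove the contrapositive: if $a \not\leq b$, choose $S_0 \in \La$ and $k \in b(S_0) \setminus a(S_0)$, and define a tuple by $F_i \equiv 0$ for $i \neq k$, $F_k(S) = 0$ for $S \neq S_0$, and $F_k(S_0) = -1$. This tuple lies in $C_a$ (since $a(S) \subseteq [r] = \argmax_i F_i(S)$ for $S \neq S_0$, and $a(S_0) \subseteq [r] \setminus \{k\} = \argmax_i F_i(S_0)$) but fails to lie in $C_b$ because $k \in b(S_0)$ is not in the argmax at $S_0$. Hence $C_a \not\subseteq C_b$, so $C_a$ cannot be a face of $C_b$. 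I do not foresee any genuine obstacle; the one point requiring care is the choice of the supporting functional $\ell$ in the ``if'' direction so that it simultaneously captures all the strict inequalities of $C_b$ that become tight when passing to $C_a$, and the observation that the values $F_i(S)$ at different $S$ can be prescribed independently, which powers the separating construction.
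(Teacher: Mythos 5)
Your proof is correct, and for items (1) and (3) it follows essentially the same route as the paper: $C_a$ is cut out by linear equalities and halfspaces, and (3) follows directly from the definitions. The interesting divergence is in item (2). The paper disposes of the ``if'' direction with a single sentence (``more of the above inequalities are tight''), and does not prove the ``only if'' direction at all, nor does it address completeness of the fan. You fill both gaps: you aggregate the tightened constraints into an explicit supporting functional
\[
\ell(F_1,\ldots,F_r)=\sum_{S\in\La}\sum_{\substack{i\in a(S)\setminus b(S)\\ j\in b(S)}}\bigl(F_j(S)-F_i(S)\bigr),
\]
which is nonnegative on $C_b$ because each $j\in b(S)$ realizes the maximum there, and whose zero locus within $C_b$ forces every $i\in a(S)\setminus b(S)$ into the argmax, giving exactly $C_a$; this cleanly packages the paper's terse remark into a single supporting hyperplane. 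You also give a contrapositive for the ``only if'' direction by exhibiting a tuple in $C_a\setminus C_b$ whenever some $k\in b(S_0)\setminus a(S_0)$ exists, and you note completeness by observing every tuple lies in $C_{a}$ for $a(S)=\argmax_i F_i(S)$. Both of these are additions beyond what the paper writes down. The witness construction is valid: for $r\ge 2$ the argmax at $S_0$ is $[r]\setminus\{k\}\supseteq a(S_0)$, and for $r=1$ the statement is vacuous since $a=b$ is forced. One small remark worth making explicit in the ``if'' direction: the inclusion $C_a\subseteq C_b$ under $a\le b$ is what guarantees that $\{\ell=0\}\cap C_b$ is not merely contained in $C_a$ but equal to it, since any point of $C_a$ makes every summand of $\ell$ vanish. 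Overall this is a more complete argument than the one in the paper.
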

\begin{proof}
    The condition  $a(S) \subseteq \argmax_{i \in [r]} F_i(S)$  is equivalent to $F_i(S) = F_j(S)$ for all $i,j \in a(S)$ and $F_i(S) \geq F_j(S)$ for all $i \in a(S), j \notin a(S)$. Hence, $C_a$ is the intersection of finitely many linear hyperplanes and linear halfspaces and therefore a polyhedral cone. 

    If $a \leq b$, then more of the above inequalities are tight and hence $C_a$ is a face of $C_b$.

    \begin{align*}
        C_a \cap C_b &= \{ (F_1,\ldots,F_r) \mid a(S),b(S) \subseteq \argmax_{i \in [r]} F_i(S) \text{ for all }S \in \La \}
        \\&=  \{ (F_1,\ldots,F_r) \mid a(S) \cup b(S) \subseteq \argmax_{i \in [r]} F_i(S) \text{ for all }S \in \La \}
        \\&=C_{a \cup b}
    \end{align*}

\end{proof}
\begin{lemma}
    The set $\Sigma_{\La,r} \coloneqq \{C_a \subseteq \Sf{\La}^r \mid a\colon \La \to 2^{[r]} \setminus \emptyset $ conforming $\}$ is a sub fan of $\{C_a \subseteq \Sf{\La}^r \mid a \colon \La \to 2^{[r]} \setminus \emptyset \}$.  
\end{lemma}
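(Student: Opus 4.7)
The plan is to verify the two conditions that make $\Sigma_{\La,r}$ a subfan of the ambient fan $\{C_a \mid a\colon \La \to 2^{[r]} \setminus \emptyset\}$ established in the preceding lemma: closure under taking faces, and closure under pairwise intersections. Since the ambient collection is already known to be a complete polyhedral fan, these two checks are all that is required.

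Both checks will reduce to a single monotonicity observation that I would prove as an auxiliary claim: if $a, c\colon \La \to 2^{[r]} \setminus \emptyset$ satisfy $a(S) \subseteq c(S)$ for every $S \in \La$ and $a$ is conforming, then $c$ is also conforming. The proof is immediate, since for any maximal chain $\emptyset = S_0 \subsetneq \cdots \subsetneq S_n$ one has
\[
\bigcap_{i=0}^n c(S_i) \;\supseteq\; \bigcap_{i=0}^n a(S_i) \;\neq\; \emptyset.
\]

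With this claim in hand, closure under faces is seen as follows: if $C_a \in \Sigma_{\La,r}$ and $C_c$ is a face of $C_a$, the preceding lemma gives $c \leq a$, meaning $a(S) \subseteq c(S)$ for all $S$, so the auxiliary claim yields that $c$ is conforming and hence $C_c \in \Sigma_{\La,r}$. For closure under intersections, given conforming $a$ and $b$, the preceding lemma gives $C_a \cap C_b = C_{a \cup b}$, and since $(a \cup b)(S) \supseteq a(S)$ for every $S$, the auxiliary claim again shows $a \cup b$ is conforming, so the intersection lies in $\Sigma_{\La,r}$.

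The only pitfall I expect is the order convention $a \leq b$ iff $b(S) \subseteq a(S)$: it flips pointwise set containment, so faces of $C_a$ correspond to maps with pointwise \emph{larger} values, which is exactly the direction in which conformity is preserved. Once this bookkeeping is kept straight, no genuine obstacle remains and the argument is essentially one line of set inclusion per condition.
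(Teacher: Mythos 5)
Your argument is correct and is essentially the same as the paper's: the paper's one-line proof invokes exactly your auxiliary monotonicity claim (that conformity passes from $b$ to $a$ whenever $a \leq b$, i.e.\ $b(S)\subseteq a(S)$ pointwise), and you have merely spelled out its two applications—to face closure and to intersection closure—explicitly. Your remark about the flipped order convention is accurate and is the only bookkeeping point one needs to track.
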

\begin{proof}
    Follows from the fact that if $b \colon \La \to 2^{[r]} \setminus \emptyset$ is conforming and $a \leq b$, then also $a$ is conforming.
\end{proof}


\end{document}